\patchcmd\@combinedblfloats{\box\@outputbox}{\unvbox\@outputbox}{}{\errmessage{\noexpand patch failed}}
\pgfplotsset{compat=newest}
\newcommand{\unit}[1]{\bm{[}#1\bm{]}}
\newcommand{\A}{\boldsymbol{A}}
\newcommand{\U}{\boldsymbol{U}}
\newcommand{\W}{\boldsymbol{W}}
\renewcommand{\u}{\boldsymbol{u}}
\newcommand{\w}{\boldsymbol{w}}
\newcommand{\x}{\boldsymbol{x}}
\newcommand{\y}{\boldsymbol{y}}
\newcommand{\hy}{\widehat{y}}
\newcommand{\hby}{\widehat{\y}}
\newcommand{\regret}{R}
\DeclareMathOperator*{\argmin}{argmin}
\newcommand{\vt}{\widetilde{v}}
\newtheorem{theorem}{Theorem}[section]
\newtheorem*{theorem*}{Theorem}
\newtheorem{lemma}[theorem]{Lemma}
\icmltitlerunning{Adaptive scale-invariant online algorithms for learning linear models}
\begin{document}

\twocolumn[
\icmltitle{Adaptive scale-invariant online algorithms for learning linear models}



\begin{icmlauthorlist}
\icmlauthor{Micha{\l} Kempka}{put}
\icmlauthor{Wojciech Kot{\l}owski}{put}
\icmlauthor{Manfred K. Warmuth}{google}
\end{icmlauthorlist}

\icmlaffiliation{put}{
Poznan University of Technology, Poznan, Poland}
\icmlaffiliation{google}{Google Inc. Z\"urich \& UC Santa Cruz}

\icmlcorrespondingauthor{Micha{\l} Kempka}{mkempka@cs.put.poznan.pl}
\icmlcorrespondingauthor{Wojciech Kot{\l}owski}{wkotlowski@cs.put.poznan.pl}
\icmlcorrespondingauthor{Manfred K. Warmuth}{manfred@ucsc.edu}
\icmlkeywords{Machine Learning, ICML}

\vskip 0.3in
]



\printAffiliationsAndNotice{} 

\begin{abstract}
We consider online learning with linear models, where the algorithm predicts on sequentially revealed instances (feature vectors), and is compared against the best linear function (comparator) in hindsight. Popular algorithms in this framework, such as Online Gradient Descent (OGD), have parameters (learning rates), which ideally should be tuned based on the scales of the features and the optimal comparator, but these quantities only become available at the end of the learning process. In this paper, we resolve the tuning problem by proposing online algorithms making predictions which are invariant under arbitrary rescaling of the features. The algorithms have no parameters to tune, do not require any prior knowledge on the scale of the instances or the comparator, and achieve regret bounds matching (up to a logarithmic factor) that of OGD with optimally tuned separate learning rates per dimension, while retaining comparable runtime performance.
\end{abstract}

\section{Introduction}

We consider the problem of online learning with linear models, in which at
each trial $t=1,\ldots,T$, the algorithm receives an input instance (feature vector)
$\x_t \in \mathbb{R}^d$, upon which it predicts $\hy_t \in \mathbb{R}$. Then,
the true label $y_t$ is revealed and the algorithm suffers loss $\ell(y_t,\hy_t)$,
convex in $\hy_t$.
The goal of the algorithm is have its cumulative loss not much larger to that of
any linear predictor of the form 
$\x \mapsto \x^\top \u$ for $\u \in \mathbb{R}^d$, i.e.
to have small \emph{regret} against any comparator $\u$.
This problem encompasses linear regression and classification (with convex surrogate
losses) and has been extensively studied
in numerous past works \citep{llw-ollf-91,clw-wqlbop-96,ShaiBook,Hazan_OCO}.

One of the most popular algorithms in this framework is Online Gradient Descent (OGD)
\citep{clw-wqlbop-96}. Its predictions are given by $\hy_t = \x_t^\top \w_t$
for a weight vector $\w_t \in \mathbb{R}^d$
updated using a simple rule 
\begin{equation}
\w_{t+1} = \w_t - \eta \nabla_t,
\label{eq:OGD_update}
\end{equation}
where $\nabla_t$ is
a (sub)gradient of the loss $\ell(y_t,\hy_t)$ 
at $\w_t$, and $\eta$ is a parameter of the algorithm, called
the learning rate. With the optimal ``oracle'' tuning of $\eta$ (which involves the norm
of the comparator $\u$ and of the observed gradients, unknown to the algorithm in advance), 
OGD would achieve 
a bound on the regret
against $\u$ of order $\|\u\| \sqrt{\sum_t \|\nabla_t\|^2}$ \citep{gd}.
Unfortunately, this bound might be very poor if the features have distinct scales.
To see that first note that $\nabla_t$ is proportional to $\x_t$ due to linear dependence 
of $\hy_t$ on $\x_t$. Now, let $\u$ be the comparator which minimizes the total
loss (assume such exists). If we scale the first coordinate of each $\x_t$ by a factor
of $c$, 
the first coordinate of the optimal comparator $\u$ will scale down by a factor of
$c^{-1}$, so its prediction and (optimal) loss remain the same, and the bound above will
in general become worse by a factor of $\max\{c,c^{-1}\}$ \citep{Ross_etal2013UAI}.
This is a well known issue with gradient descent, and is usually solved by prior
normalization of the features. However, such pre-processing step cannot be done 
in an online setting.

The problem described above becomes apparent if we make an analogy from physics and
imagine that all features have physical \emph{units}. 
In particular, if we assigned a unit $\unit{x_i}$ to feature $i$, and assumed for simplicity 
that the prediction and the label are unitless (as in, e.g., classification), 
the corresponding coordinate of the weight vector 
would need to have unit $1/\unit{x_i}$. 
However, the units in the OGD update \eqref{eq:OGD_update} are
mismatched, because $\nabla_{t,i}$ has unit $\unit{x_i}$ (as $\nabla_t$ is proportional
to $\x_{t}$), while $w_{t,i}$ has unit $1/\unit{x_i}$;
even assigning a unit to $\eta$ does
not help as a single number cannot compensate different units. A reasonable solution to
this ``unit clash'' problem is to use \emph{one learning rate per dimension}, i.e.
to modify \eqref{eq:OGD_update} to:
\begin{equation}
  w_{t+1,i} = w_{t,i} - \eta_i \nabla_{t,i}, \qquad i=1,\ldots,d.
\label{eq:OGD_per_coordinate_update}
\end{equation}
If we choose the oracle tuning of the learning rates to minimize the regret
against comparator $\u$, it follows that
$\eta_i^* = |u_i| / \sqrt{\sum_t \nabla_{t,i}^2}$ which results in the regret bound
of order $\sum_i |u_i| \sqrt{\sum_t \nabla_{t,i}^2}$, better than the
bound obtained with a single learning rate. 
Interestingly, the unit of $\eta_i^*$ becomes $1/[x_i]^2$, which fixes
the ``unit clash'' in \eqref{eq:OGD_per_coordinate_update} and
makes the scaling issues go away (as now scaling the $i$-the feature
by any factor $c$ will be compensated by scaling down $u_i$ by $c^{-1}$). 
Unfortunately,
it is infeasible in practice to separately tune a single learning rate per dimension 
(oracle tuning requires the knowledge the comparator and all future gradients).

\paragraph{Our contribution.} In this paper we provide adaptive online algorithms 
which for any comparator $\u$ achieve regret bounds matching, up to logarithmic
factors, that of OGD with 
optimally tuned separate learning rates per dimension. Note that as we want to
capture arbitrary feature scales and comparators, our bounds come \emph{without
any prior assumptions on the magnitude of instances $\x_t$, comparator $\u$, or even 
predictions $\x_t^\top \u$}, as has been commonly assumed in the past work
(we do, however, assume the Lipschitzness of the loss with respect to the prediction,
which is satisfied for various popular loss functions, such as logistic, hinge or
absolute losses\footnote{Lipschitzness 
\emph{does not} imply any bound on the 
gradients $\nabla_t$ which are proportional to feature vectors: $\nabla_t = g_t \x_t$
for some $g_t \in \mathbb{R}$; 
it only implies a bound on the proportionality constant $g_t$.}). Our algorithms achieve their bounds without the need to tune any
hyperparameters, and have runtime performance of $O(d)$
per iteration, which is the same as that of OGD. As a by-product of being
adaptive to the scales of the instances and the comparator, the proposed algorithms
are \emph{scale-invariant}: their predictions are invariant
under arbitrary rescaling of individual features \citep{Ross_etal2013UAI}. More precisely,
after multiplying the $i$-th coordinate of all input instances by a 
fix scaling factor $a_i$, $x_{t,i} \mapsto a_i x_{t,i}$ for all $t$, the predictions
of the algorithms remain the same: they are
independent on the units in which the instance vectors are expressed (in particular,
they do do not require any prior normalization of the data). To achieve our goals,
the design of our algorithms heavily rely on techniques recently developed in 
adaptive online learning \citep{StreeterMcMahan2012,OrabonaPal2016NIPS,CutkoskyBoahen2017COLT,CutkoskyOrabona2018COLT}.

The first algorithm achieves a regret bound which depends on instances only relative
to the scale of the comparator, 
through products of the form $|u|_i \sqrt{\max_t |x_{t,i}|^2
+ \sum_t \nabla_{t,i}^2}$ for $i=1,\ldots,d$, 
similarly as in the bound of OGD with per-dimension learning
rates (with additional maximum over feature values, which is usually much smaller
than the sum over squared gradients). As the algorithm can be sometimes a bit conservative in its predictions, we also
introduce a second algorithm which is more aggressive in decreasing its cumulative loss;
the price to pay is a regret bound which mildly (logarithmically) depends on 
ratios between the largest and the first non-zero input value for each coordinate. While
these quantities can be made arbitrarily large in the worst case, it is unlikely to
happen in practice. We test both algorithms in a computational study on several real-life
data sets and show that without any need to tune parameters, they
are competitive to popular online learning methods, which are allowed to tune 
their learning rates to optimize the test set performance.

\paragraph{Related work.}

Our work is rooted from a long line of research
on regret minimizing online algorithms \cite{clw-wqlbop-96,eg,PLGbook}.
Most of the proposed methods have ``range factors'' present both in 
the algorithm and in the bound: it is typically assumed
that some prior knowledge on the range of the comparator and the gradients is given,
which allows the algorithm to tune its parameters appropriately. For instance,
assuming $\|\u\| \leq U$ and $\|\nabla_t\| \leq G$ for all $t$, OGD \eqref{eq:OGD_update}
with learning rate $\eta = U / (G \sqrt{T})$ achieves $O(UG \sqrt{T})$ regret bound.

More recent work on adaptive algorithms aims to get rid of
these range factors. In particular, with a prior bound on the comparator norm,
it is possible to adapt to the unknown range of the gradients \citep{adagrad,OrabonaPal2015ALT},
whereas having a prior bound on all future gradients, one can adapt 
to the unknown norm of the comparator \citep{StreeterMcMahan2012,McMahanAbernethy2013,Orabona2014,OrabonaPal2016NIPS,cocob,CutkoskyOrabona2018COLT}.
In particular, using reduction methods proposed by \citet{CutkoskyOrabona2018COLT}, one can get
a bound matching OGD with separate learning rate per dimension, but this requires to know 
$\max_{t,i} |\nabla_{t,i}|$ in advance.
Interestingly, \citet{CutkoskyBoahen2017COLT} have shown that in online convex optimization
it is not possible to adapt to both unknown gradient range
and unknown comparator norm at the same time. Here, we circumvent this negative result by
exploiting the fact that the input instance $\x_t$ is available
\emph{ahead} of prediction and therefore can be used to construct $\hy_t$
(this idea was first discovered in the context of linear regression 
\cite{Vovk, aw}).

Scale-invariant algorithm has been been studied by
\citet{Ross_etal2013UAI,Orabona_etal2015} in a setup very similar to ours. Their algorithms,
however, require a prior knowledge on the largest per-coordinate comparator's prediction,
$\max_{t,i} |u_i x_{t,i}|$, whereas their bounds scale with relative ratios between the largest
and the first non-zero input value for each coordinate (the bound of our second algorithm also depends
on these quantities but only in a logarithmic way).
\citet{Luo_etal2016,affine_invariant} considered 
even a more general setup of invariance under linear transformation of features (of which our
invariance is a special case if the transformation is diagonal),
but a prior knowledge of $\max_t |\x_t^\top \u|$ must be available,
and the resulting algorithms are second-order methods.
The closest to our work are the results by \citet{Kotlowski_ALT2017}, which concern
the same setup, general invariance under linear transformations, and, similarly to us, make no prior range assumptions. Their
bounds, however, do not scale with gradients $\nabla_{t,i}^2$ (as in the optimal OGD bound), 
but with the \emph{size of the features} $x_{t,i}^2$ (multiplied by the Lipschitz constant
of the loss), which upper-bounds $\nabla_{t,i}^2$ and can become much larger.
For instance, in the ``noise-free'' case, when some comparator $\u$ has zero loss,
the algorithm playing sufficiently close to $\u$ can inflict arbitrarily small gradients,
while the sum of squared feature values will still grow linearly in $t$. 

The goal of scale-invariance seems to go hand in hand
with a requirement for the updates to avoid unit clashes
and this connection was the motivating idea for our work.
In the most basic case, assume you want to design online algorithms for linear regression 
$$\w_{t+1}=\w_t-\eta(\x_t^\top \w_t -y_t)\x_t$$
that are to be robust to scaling
the input vectors $\x_t$ by a single positive constant factor.
In this case $\unit{\eta}$ should be $1/\unit{\|\x_t\|^2}$.
Interestingly enough, good tunings of the learning rates $\eta$ often ``fix the
units'': the properly tuned learning rates for the linear regression updates 
employed in \cite{clw-wqlbop-96,eg} have units $1/\unit{\|\x_t\|^2}$.
In this paper we focus on robustness to independently scaling the individual components 
$x_{t,i}$ of the input vectors by positive factors. This requires privatized learning rates
$\eta_i$ with the property that $\unit{\eta_i}=1/\unit{x_{t,i}}^2$.
Our paper focuses on this case because of efficiency concerns. 
However there is a third case (more expensive)
where we want robustness to independent scaling and
rotation of the input vectors $\x_t$.
Now $\bm{\eta}$ must be a matrix parameter (playing a similar role to a Hessian) and 
if the instances are pre-multiplied by a fixed invertible 
$\A$, then the tuned learning rate matrix of
the new instances must become $\bm\eta \A^{-1}$, thus correcting for the
pre-multiplication with $\A$. The updates of
\citet{Luo_etal2016,affine_invariant,Kotlowski_ALT2017},
as well as the Newton algorithm, have this form, but they are all
second order algorithms with runtime of at least $O(d^2)$ per trial.

\section{Problem Setting}

Our online learning protocol is defined as follows.
In each trial $t=1,\ldots,T$, the algorithm receives 
an input instance $\x_t \in \mathbb{R}^d$, on which it predicts 
$\hy_t \in \mathbb{R}$; we will always assume linear predictions
$\hy_t = \x_t^\top \w_t$, where
$\w_t \in \mathbb{R}^d$ is allowed to depend on $\x_t$.
Then, the output label $y_t \in \mathcal{Y}$ is revealed, 
and the algorithm suffers loss $\ell(y_t,\hy_t)$. 
As we make no assumptions about the label set $\mathcal{Y}$, 
in what follows we incorporate $y_t$
into the loss function and use
$\ell_t(\hy)$ to denote $\ell(y_t, \hy)$.
The performance of the algorithm is measured by means of the \emph{regret}:
\[
  \regret_T(\u) = \sum_{t=1}^T \ell_t(\x_t^\top \w_t) - \sum_{t=1}^T \ell_t(\x_t^\top \u),
\]
which is the difference between the cumulative loss 
of the algorithm and that of a fixed, arbitrarily chosen, 
comparator weight vector $\u \in \mathbb{R}^d$
(for instance, $\u$ can be the minimizer of the cumulative loss on the 
whole data sequence, if such exists).
\begin{table}
\begin{center}
  \begin{tabular}{cccc}
\toprule
Loss function & $\ell(y,\hy)$ & $\partial_{\hy} \ell(y,\hy)$ & $L$ \\
\hline
logistic & $\ln\left(1 + e^{-y \hy}\right)$ & $\frac{-y}{1+e^{y\hy}}$ & 1 \\
hinge & $\max\{0,1-y\hy\}$ & $-y \boldsymbol{1}[y\hy \leq 1]$ & 1 \\
absolute & $|\hy - y|$ & $\mathrm{sgn}(\hy-y)$ & 1 \\
\bottomrule
\end{tabular}
\end{center}
\caption{$L$-Lipschitz loss functions for classification and regression. 
$\boldsymbol{1}[\cdot]$ denotes an indicator function.}
\label{tab:losses}
\end{table}

We assume that for any $t$, $\ell_t(\hy)$ is convex and $L$-Lipschitz; the latter implies that
the (sub)derivative of the loss is bounded, $|\partial\ell_t(\hy)| \leq L$.
Table \ref{tab:losses} lists three popular losses with these properties.
Throughout the paper, we assume $L=1$ without loss of generality.
Our setup can be considered as a variant of online convex optimization
\citep{ShaiBook,Hazan_OCO}, with the main difference in $\x_t$ being observed before
prediction.

We use a standard argument
exploiting the convexity of the loss to bound
$\ell_t(\hy') \geq \ell_t(\hy) + \partial \ell_t(\hy) (\hy' - \hy)$ for any $\hy, \hy' \in \mathbb{R}$.
Substituting $\hy = \w_t^\top \x_t$ and $\hy' = \u^\top \x_t$, and
denoting $g_t = \partial \ell_t(\hy_t) \in [-1,1]$ for each $t$, the regret is upper-bounded by:
\begin{equation}
\regret_T(\u) \leq \sum_{t=1}^T g_t \x_t^\top (\w_t - \u),
\label{eq:gradient_trick}
\end{equation}
where $|g_t| \leq 1$ follows from the Lipschitzness of the loss.  Note that
$g_t \x_t$ is equal to $\nabla_t = \nabla_{\w_t} \ell_t(\x_t^\top \w_t)$, the
(sub)gradient of the loss with respect to the weight vector $\w_t$.  Thus, we
can bound the regret with respect to the original convex loss $\ell_t(\x_t^\top
\w)$ by upper-bounding its linearized version $g_t \x_t^\top \w$ on the
right-hand side of \eqref{eq:gradient_trick}.

Consider running Online Gradient Descent (OGD) algorithm on this problem, as
defined in \eqref{eq:OGD_per_coordinate_update}, i.e. we let the algorithm
have a separate learning rate per dimension.
When initialized at $\w_1 = \boldsymbol{0}$, OGD achieves the regret bound:
\[ 
\regret_T(\u) ~\leq~
\sum_{i=1}^d \left(\frac{u_i^2}{2\eta_i} + \frac{\eta_i}{2} S^2_{T,i} \right),
\] 
where we introduced $S^2_{t,i} = \sum_{j \leq t} \nabla_{j,i}^2 = \sum_{j \leq t}
(g_j x_{j,i})^2$. This is a slight generalization of a standard textbook
bound \citep[see, e.g.,][]{Hazan_OCO}, proven in Appendix
\ref{appendix:OGD} for completeness.
Tuning the learning rates to minimize the bound results in $\eta_i =
\frac{|u_i|}{S_{t,i}}$, and the bound simply becomes:
\begin{equation}
\regret_T(\u) \leq \sum_{i=1}^d |u_i| S_{T,i}.
\label{eq:ideal_bound}
\end{equation}
Such tuning is, however, not directly feasible as it would require knowing
the comparator and the future gradients in hindsight. 
The goal of this work is to design adaptive online algorithms 
which for any comparator $\u$, and any data sequence $\{(\x_t,y_t)\}_{t=1}^T$, without
any prior knowledge on their magnitudes, achieve \eqref{eq:ideal_bound} up to logarithmic
factors.

An interesting property of bound \eqref{eq:ideal_bound} is that it captures
a natural symmetry of our linear framework. Given a data sequence,
let $\u$ be the minimizer of the cumulative loss,
$\u = \argmin_{\w} \sum_t \ell_t(\x_t^\top \w)$ (assume
such exists). If
we apply a coordinate-wise transformation $x_{t,i} \mapsto a_i x_{t,i}$  
simultaneously to all input instances ($t=1,\ldots,T$)
for any positive scaling factors $a_1,\ldots,a_d$, the minimizer of the loss will
undergo the inverse transformation $u_{i} \mapsto a_i^{-1} u_{i}$ to keep its
predictions $\x_t^\top \u$, and thus its cumulative loss, \emph{invariant}. Indeed,
$\u$ minimizes $\sum_t \ell_t(\x_t^\top \w)$ if and only if $\A^{-1} \u$
minimizes $\sum_t \ell_t((\A \x_t)^\top \w)$ for $\A = \mathrm{diag}(a_1,\ldots,a_d)$.
Thus, when \eqref{eq:ideal_bound} is evaluated at the loss minimizer, it becomes
invariant under any such scale transformation.

The invariance of predictions of the optimal comparator leads to the definition
of scale-invariant algorithms. We call a learning algorithm \emph{scale-invariant} 
if its behavior (sequence of predictions) is invariant
under arbitrary rescaling of individual features \citep{Ross_etal2013UAI,Kotlowski_ALT2017}. 
More precisely, if we apply a transformation $x_{t,i} \mapsto a_i x_{t,i}$ 
simultaneously to all instances,
the predictions of the algorithm $\hy_1,\ldots,\hy_T$ remain the same
as on the original data sequence. 
Scale-invariant algorithm are thus independent on the ``units'' 
in which the instance vectors are expressed on each feature,
and do not require any prior normalization of the data. 
Interestingly, OGD defined in \eqref{eq:OGD_per_coordinate_update} is not a scale invariant
algorithm, but becomes one under the optimal tuning of its learning rates. The algorithms
presented in the next section will turn out to be scale-invariant, essentially as a
by-product of adaptiveness to arbitrary scale of the comparator and the instances, required
to achieve \eqref{eq:ideal_bound}.

\emph{Remark:} As noted in the introduction, scale invariance can be generalized to arbitrary
linear invertible transformations $\x_t \mapsto \A \x_t$. Unfortunately, 
this leads to second-order
algorithms \citep{Luo_etal2016,affine_invariant,Kotlowski_ALT2017}, with the complexity
at least $\Theta(d^2)$ per trial.

\section{Scale-invariant algorithms}
\label{sec:algorithms}

\paragraph{Motivation.} We first briefly describe the motivating idea behind
the construction of the algorithms. We start with rewriting the right hand
side of \eqref{eq:gradient_trick} to get: 
\[
\regret_T(\u) \leq \sum_{i=1}^d \Big( \underbrace{\sum_{t=1}^T g_t x_{t,i} (w_{t,i} - u_i)}_{\stackrel{def}{=} \tilde{R}_{T,i}(u_i)} \Big),
\]
so that
it decouples coordinate-wise and it suffices to separately bound each term $\tilde{R}_{T,i}(u_i)$ 
$i=1,\ldots,d$. As we aim to get close to \eqref{eq:ideal_bound},
we want for each $i$ a bound of the form
$\tilde{R}_{T,i}(u_i) \leq B(u_i S_{T,i}) + c_T$, for some function $B(\cdot)$
plus a potential additional overhead $c_T$
(to exactly get \eqref{eq:ideal_bound} we could set $B(x) = |x|$ and $c_T=0$,
but this turns out to be unachievable without any prior knowledge on the comparator).
Using $G_{t,i} = -\sum_{j \leq t} g_j x_{j,i}$
to denote the cumulative negative gradient coordinate, 
such bound can be equivalently written as:
\[
\sum_{t=1}^T g_t x_{t,i} w_{t,i} + G_{T,i} u_i -B(u_i S_{T,i}) \leq c_T.
\]
Now, the key idea is to note that the bound must hold for any comparator $u_i$, therefore
it must hold if we take a supremum over $u_i$ on the left-hand side,
$\sup_{u_i} \{ G_{T,i} u_i - B(u_iS_{T,i})\}$. 
To evaluate this supremum, we note that under variable change $x = u_i S_{T,i}$ it
becomes equivalent to $\sup_{x} \{ G_{T,i}/S_{T,i} x - B(x)\}$.
Recalling the definition of 
the \emph{Fenchel conjugate} of a function $f(x)$, defined
as $f^*(\theta) = \sup_x \{\theta x - f(x)\}$ \citep{BoydVandenberghe04},
we see that the supremum can be evaluated to $B^*(G_{T,i}/S_{T,i})$.
Thus, the unknown comparator has been eliminated from the picture, and the 
algorithm can be designed to satisfy:
\[
\sum_{t=1}^T g_t x_{t,i} w_{t,i} + B^*(G_{T,i}/S_{T,i}) \leq c_T,
\]
for every data sequence. In fact, we construct our algorithms by proceeding
in the reverse direction: starting with an appropriate function $\psi$ 
playing the role of $B^*$ (which we call a \emph{potential}) and getting bound expressed
by means of its conjugate $\psi^*$. What we just described is known as
\emph{regret-reward duality} and has been successfully used in adaptive online learning
\citep{StreeterMcMahan2012,McMahanOrabona2014,OrabonaPal2016NIPS}.

As already briefly mentioned, achieving \eqref{eq:ideal_bound}, which corresponds to a bound
with $B(x)=|x|$, is actually not possible: a negative result by 
\citet{StreeterMcMahan2012} implies that the best one can hope for is 
$B(x) = O(|x| \sqrt{\ln(|x|)})$. We will show that our algorithm achieve a bound of a slightly
weaker form $B(x) = O(|x| \ln(|x|))$, but still giving only a logarithmic overhead comparing
to \eqref{eq:ideal_bound}.

\paragraph{Algorithms.} 
We propose two scale-invariant algorithms presented as Algorithm \ref{alg:one}
($\mathrm{ScInOL}_1$ from \underline{Sc}ale-\underline{In}variant \underline{O}nline
\underline{L}earning)
and Algorithm \ref{alg:two} ($\mathrm{ScInOL}_2$).
They require $O(d)$ operations per trial and thus match OGD in
the computational complexity. Both algorithms keep track of the
negative cumulative gradients $G_{t,i} = -\sum_{j=1}^t g_j x_{j,i}$, sum
of squared gradients $S^2_{t,i} = \sum_{j=1}^t (g_j x_{j,i})^2$, and the
maximum encountered input values $M_{t,i} = \max_{j \leq t} |x_{j,i}|$.
The weight formula is written to highlight that
the cumulative gradients are only accessed
through a unitless quantity $\frac{G_{t-1,i}}{\sqrt{S_{t-1,i}^2 + M_{t,i}^2}}$;
an additional factor $\frac{1}{\sqrt{S_{t-1,i}^2 + M_{t,i}^2}}$ in the weights is to compensate
for $x_{t,i}$ in the prediction. To simplify the pseudocode we use
the convention that $\frac{0}{0} = 0$ and $\frac{c}{0} = \infty$ for $c > 0$. 
Note that since in each trial $t$, the algorithms have access to the input feature
vector $\x_t$ \emph{before} the prediction, they are able to update
$M_{t,i}$ prior to computing the weight vector $\w_t$. 
Both algorithms decompose into $d$ one-dimensional copies, one per each feature, which
are coupled only by the values of $g_t$. 
Both algorithm have a parameter $\epsilon$, but it only affect the constants and 
is set to $1$ in the experiments. Scale invariance of the algorithms is verified
in Appendix \ref{appendix:scale_invariance}. 

\begin{algorithm2e}[t]%
\DontPrintSemicolon
\SetAlgoNoEnd
\SetKwInOut{Initialization}{Initialization}
\textbf{Initialize:} $S^2_{0,i}, G_{0,i}, M_{0,i} \! \leftarrow 0, \beta_{0,i} 
\leftarrow \epsilon$; $i=1,\ldots,d$ \\
 \For{$t=1,\ldots,T$}{
   Receive $\x_t \in \mathbb{R}^d$\;
   \For{$i=1,\ldots,d$}{
     $M_{t,i} \leftarrow \max \{M_{t-1,i}, |x_{t,i}|\}$\;
     $\beta_{t,i} \leftarrow \min\{\beta_{t-1,i}, 
     \epsilon (S^2_{t-1,i} + M_{t,i}^2)/(x^2_{t,i} t)\}$ \;
     $w_{t,i} = \frac{\beta_{t,i} \mathrm{sgn}(\theta_{t,i})}{2\sqrt{S_{t-1,i}^2 + M_{t,i}^2}}
        \Big(e^{|\theta_{t,i}|/2} - 1 \Big)$ \;
        \qquad where $\theta_{t,i} = \frac{G_{t-1,i}}{\sqrt{S^2_{t-1,i} + M^2_{t,i}}}$}
   Predict with $\hy_t = \x_t^\top \w_{t,i}$, receive loss $\ell_t(\hy_t)$ and compute $g_t = \partial_{\hy_t} \ell_t(\hy_t)$\;
   \For{$i=1,\ldots,d$}{
     $G_{t,i} \leftarrow G_{t-1,i} - g_t x_{t,i}$ \;
     $S^2_{t,i} \leftarrow S^2_{t-1,i} + (g_t x_{t,i})^2$\;
   }
 }
\caption{$\text{ScInOL}_1 (\epsilon=1)$}%
\label{alg:one}
\end{algorithm2e}%

\paragraph{ScInOL$_1$.}
The algorithm is based on a potential
$\psi_{t,i}(x) = \beta_{t,i} (e^{|x|/ (2 \hat{S}_{t,i})} - \frac{|x|}{2\hat{S}_{t,i}} -1
)$ with
 $\hat{S}_{t,i} = \sqrt{S_{t,i}^2 + M_{t,i}^2}$.
The weight $w_{t,i}$ is chosen in such a way
that the loss of the algorithm at trial $t$ is upper-bounded by the change in the potential,
for any choice of $x_{t,i} \in \mathbb{R}$ and $g_t \in [-1,1]$:
\begin{equation}
  w_{t,i} g_t x_{t,i} \leq \psi_{t-1,i}(G_{t-1,i}) - \psi_{t,i}(G_{t,i}) + \delta_{t,i},
  \label{eq:scinol_1_change_in_potential_bound}
\end{equation}
where $\delta_{t,i}$ is a small additional overhead. The algorithm resembles FreeRex by
\cite{CutkoskyBoahen2017COLT}, because it actually uses the same functional form of the potential. The choice of
the weight looks \emph{almost} like a derivative of a potential function $\psi_{t-1,i}(x)$ at $x = G_{t-1,i}$, 
but it differs slightly in using $M_{t,i}$ rather than $M_{t-1,i}$ in its definition.
This prior update of $M_{t,i}$ let the algorithm account for potentially very large value of $x_{t,i}$
and avoid incurring too much loss. The coefficients $\beta_{t,i}$ multiplying the potential are chosen
to be a nonincreasing sequence, which at the same time keeps the overhead $\delta_t$ upper-bounded by $\frac{\epsilon}{t}$,
in order to to avoid terms in the regret bound depending on ratios between feature values
and get $\sum_t \delta_{t,i} \leq \epsilon(1 + \ln T)$. Summing \eqref{eq:scinol_1_change_in_potential_bound}
over trials and using using $\psi_{0,i}(G_0) = 0$ gives:
\[
 \sum_t w_{t,i} g_t x_{t,i} - \sum_t \delta_{t,i}  \leq - \psi_{T+1,i}(G_{T,i}).
\]
Using the convexity of $\psi_{T+1,i}$ we can rewrite it by means of its Fenchel conjugate,
$\psi_{T+1,i}(G_{T,i}) = \sup_{u}\{G_{T,i} u - \psi^*_{T+1,i}(u)\}$, which in turn
can be bounded as:
\[
  \psi^*_{T,i}(u) 
  \leq 2|u| \hat{S}_{T,i} \ln \left(1 + 2|u|\beta^{-1}_{T,i} \hat{S}_{T,i}\right)
\]
Summing over features $i=1,\ldots,d$,
bounding $\beta_{T,i} \geq (\epsilon T)^{-1}$,
and using \eqref{eq:gradient_trick} gives:
\begin{theorem}
\label{thm:alg_one}
For any $\u \in \mathbb{R}^d$ the regret of $\mathrm{ScInOL}_1$ is upper-bounded by:
\begin{align*}
  \regret_T(\u)
  &\leq \sum_{i=1}^d \!\bigg(\!2|u_i|\hat{S}_{T,i} \ln \! 
  \Big(1 \!+\! \frac{2|u_i| \hat{S}_{T,i} T}{\epsilon}\Big) + \epsilon(1 + \ln T )\!\bigg) \\
  &= \sum_{i=1}^d \tilde{O}(|u_i| \hat{S}_{T,i}),
\end{align*}
where $\hat{S}_{T,i} = \sqrt{S_{T,i}^2 + M_{T,i}^2}$ and $\tilde{O}(\cdot)$ hides the
constants and logarithmic factors.
\end{theorem}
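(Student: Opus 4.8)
The plan is to bound the regret coordinate-by-coordinate, exactly along the decomposition sketched before the statement: starting from \eqref{eq:gradient_trick} I write $\regret_T(\u) \le \sum_{i=1}^d \tilde{R}_{T,i}(u_i)$ with $\tilde{R}_{T,i}(u_i) = \sum_{t=1}^T g_t x_{t,i}(w_{t,i} - u_i)$, and it suffices to control each one-dimensional term separately. For a fixed coordinate $i$ I would (i)~establish the per-trial ``reward vs.\ change-in-potential'' inequality \eqref{eq:scinol_1_change_in_potential_bound}; (ii)~telescope it over $t = 1,\ldots,T$; (iii)~use Fenchel duality to replace the resulting potential term by a comparator-dependent bound; and (iv)~insert the explicit estimates for the conjugate $\psi^*_{T,i}$, for the overhead $\sum_t \delta_{t,i}$, and for $\beta_{T,i}$.

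The heart of the argument, and the step I expect to be the main obstacle, is verifying \eqref{eq:scinol_1_change_in_potential_bound} for the \emph{exact} weight $w_{t,i}$ used by the algorithm. Writing $z = g_t x_{t,i}$ and $G = G_{t-1,i}$, so that $G_{t,i} = G - z$, the inequality asks that $w_{t,i} z + \psi_{t,i}(G - z) - \psi_{t-1,i}(G) \le \delta_{t,i}$. Three things change simultaneously between the two potentials: the argument shifts by $-z$, the scale $\hat{S}_{t,i}$ grows, and the coefficient $\beta_{t,i}$ shrinks; one must show that the chosen $w_{t,i}$, which is \emph{almost} but not exactly the derivative of $\psi_{t-1,i}$ (it uses $M_{t,i}$ in place of $M_{t-1,i}$ inside the scale), absorbs all three changes up to the overhead $\delta_{t,i}$. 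I would first fix the scale and coefficient and handle the argument shift by a second-order expansion of the exponential $e^{|x|/(2\hat{S})}$; the crucial estimate is that the prior update $M_{t,i} \ge |x_{t,i}| \ge |z|$ keeps the relative step $|z|/\sqrt{S_{t-1,i}^2 + M_{t,i}^2}$ bounded, so the second-order remainder of the exponential stays controlled. The monotone changes of $\hat{S}_{t,i}$ (nondecreasing) and $\beta_{t,i}$ (nonincreasing) then act in the favorable direction, and the precise form of the $\beta_{t,i}$ update is reverse-engineered so as to cap the leftover overhead at $\delta_{t,i} \le \epsilon/t$.

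Granting \eqref{eq:scinol_1_change_in_potential_bound}, the remaining structure is largely mechanical. Summing over $t$ and using $\psi_{0,i}(G_{0,i}) = \psi_{0,i}(0) = 0$ telescopes the right-hand side to $-\psi_{T,i}(G_{T,i}) + \sum_t \delta_{t,i}$ (the monotonicity of $\beta_{t,i}$ and $\hat{S}_{t,i}$ is what lets one line up the end-of-trial and start-of-trial potentials across the index shift), giving $\sum_t w_{t,i} g_t x_{t,i} \le -\psi_{T,i}(G_{T,i}) + \sum_t \delta_{t,i}$. Since $\psi_{T,i}$ is convex, Fenchel--Young yields $-\psi_{T,i}(G_{T,i}) \le -u_i G_{T,i} + \psi^*_{T,i}(u_i)$ for the comparator coordinate $u_i$; substituting $-G_{T,i} = \sum_t g_t x_{t,i}$ and rearranging produces exactly $\tilde{R}_{T,i}(u_i) \le \psi^*_{T,i}(u_i) + \sum_t \delta_{t,i}$, the comparator-free target identified in the motivation via regret--reward duality.

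It then remains to insert the three quantitative estimates and sum over coordinates. The conjugate bound $\psi^*_{T,i}(u) \le 2|u|\hat{S}_{T,i}\ln\!\big(1 + 2|u|\beta^{-1}_{T,i}\hat{S}_{T,i}\big)$ follows by solving the one-dimensional supremum defining the Fenchel conjugate of the exponential potential and bounding the optimizer; the exponential growth of $\psi_{T,i}$ is precisely what turns into the logarithmic factor. The overhead is controlled by $\delta_{t,i} \le \epsilon/t$, whence $\sum_{t=1}^T \delta_{t,i} \le \epsilon(1 + \ln T)$ via the harmonic sum, and the $\beta$ update keeps $\beta_{T,i}$ bounded below by $(\epsilon T)^{-1}$, using $M_{t,i} \ge |x_{t,i}|$ to lower-bound each candidate value $\epsilon(S_{t-1,i}^2 + M_{t,i}^2)/(x_{t,i}^2 t)$. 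Combining these gives $\tilde{R}_{T,i}(u_i) \le 2|u_i|\hat{S}_{T,i}\ln\!\big(1 + 2|u_i|\hat{S}_{T,i} T/\epsilon\big) + \epsilon(1 + \ln T)$, and summing over $i=1,\ldots,d$ through $\regret_T(\u) \le \sum_i \tilde{R}_{T,i}(u_i)$ yields the claimed bound, with the $\tilde{O}(\cdot)$ form following since the logarithm contributes only a logarithmic overhead over $|u_i|\hat{S}_{T,i}$.
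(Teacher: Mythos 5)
Your plan follows the paper's proof essentially step for step: the coordinate-wise decomposition via \eqref{eq:gradient_trick}, the per-trial reward-versus-potential inequality \eqref{eq:scinol_1_change_in_potential_bound}, telescoping with $\psi_{0,i}(0)=0$, Fenchel--Young against $\psi_{T,i}$, the conjugate bound $\psi^*_{T,i}(u)\leq 2|u|\hat S_{T,i}\ln(1+2|u|\hat S_{T,i}/\beta_{T,i})$ (Lemma~\ref{lem:conjugate_ScInOL_1}), the harmonic-sum bound $\sum_t\delta_{t,i}\leq\epsilon(1+\ln T)$, and the inductive lower bound on $\beta_{T,i}$. All of this is correct as stated, with one slip: you write $\beta_{T,i}\geq(\epsilon T)^{-1}$, whereas what the induction gives --- and what your final logarithm actually uses --- is $\beta_{T,i}\geq\epsilon/T$ (the paper's main text contains the same typo).

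The one place where your plan, executed literally, would fail is the step you yourself flag as the main obstacle. You propose to prove \eqref{eq:scinol_1_change_in_potential_bound} by a second-order expansion of the exponential in the normalized step $q=g_tx_{t,i}/\sqrt{S_{t-1,i}^2+M_{t,i}^2}$, relying on $|q|\leq 1$ and treating the growth of $\hat S_{t,i}$ as merely ``acting in the favorable direction.'' That works only while the normalized cumulative gradient $v=G_{t-1,i}/\sqrt{S_{t-1,i}^2+M_{t,i}^2}$ is bounded: the second-order remainder of $e^{(v-q)/2}$ around $e^{v/2}$ carries a coefficient proportional to $e^{|v|/2}q^2$, and after multiplying by $\beta_{t,i}$ this must fit inside the budget $\beta_{t,i}q^2\leq\epsilon/t$, which forces $|v|=O(1)$ (the paper's Lemma~\ref{lem:main_lemma_alg_one} uses $|v|\leq 3$, where $0.15\,e^{3/2}\leq 1$). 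For large $|v|$ the expansion is useless, and the shrinkage of the potential's argument by the factor $\sqrt{1+q^2}$ --- i.e., the increase of $S^2$ by $(g_tx_{t,i})^2$ --- must be exploited quantitatively rather than qualitatively: the paper's case $v>3$ establishes $v-q-\tfrac{q^2}{2}\geq|v-q|/\sqrt{1+q^2}$ and combines it with $1-x\geq e^{-x-x^2}$ and monotonicity of $e^x-x$. Without an argument of this kind for the large-$|v|$ regime (or a uniform substitute for the paper's three-case analysis), the central per-trial inequality is not established, and everything downstream of it is conditional.
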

The full proof of Theorem \ref{thm:alg_one} is given in Appendix \ref{appx:alg_one}.
Note that the bound depends on the scales of the features only relative to the
comparator weights $\u$ through quantities $|u_i| \hat{S}_{T,i}$, and is equivalent
to the optimal OGD bound \eqref{eq:ideal_bound} up to logarithmic factors.

\begin{algorithm2e}[t]%
\DontPrintSemicolon
\SetAlgoNoEnd
\SetKwInOut{Initialization}{Initialization}
\textbf{Initialize:} $S^2_{0,i}, G_{0,i}, M_{0,i} \leftarrow 0, 
\eta_{0,i} \leftarrow \epsilon; i=1,\ldots,d$ \\
 \For{$t=1,\ldots,T$}{
   Receive $\x_t \in \mathbb{R}^d$\;
   \For{$i=1,\ldots,d$}{
     $M_{t,i} \leftarrow \max \{M_{t-1,i}, |x_{t,i}|\}$\;
    $w_{t,i} = 
    \frac{\mathrm{sgn}(\theta_{t,i}) \min\{|\theta_{t,i}|,1\}}{2 \sqrt{S_{t-1,i}^2 + M_{t,i}^2}} \eta_{t-1,i}$ \\
     \qquad where $\theta_{t,i} = \frac{G_{t-1,i}}{\sqrt{S^2_{t-1,i} + M^2_{t,i}}}$}
   Predict with $\hy_t = \x_t^\top \w_{t,i}$, receive loss $\ell_t(\hy_t)$ and compute $g_t = \partial_{\hy_t} \ell_t(\hy_t)$\;
   \For{$i=1,\ldots,d$}{
     $G_{t,i} \leftarrow G_{t-1,i} - g_t x_{t,i}$ \;
     $S^2_{t,i} \leftarrow S^2_{t-1,i} + (g_t x_{t,i})^2$\;
     $\eta_{t,i} \leftarrow \eta_{t-1,i} - g_t x_{t,i} w_{t,i}$
   }
 }
\caption{$\text{ScInOL}_2 (\epsilon=1)$}%
\label{alg:two}
\end{algorithm2e}%

\paragraph{ScInOL$_2$.} The algorithm described in the previous section is designed to 
achieve a regret bound 
which depends on instances only relative to the scale of the comparator,
no matter how extreme are the ratios $|x_{t,i}| / M_{t-1,i}$ between the new inputs
and previously observed maximum feature values. We have observed that this can
make the behavior of the algorithm too conservative, due to guarding
against the worst-case instances.
Therefore we introduce a second algorithm, 
which is more aggressive in decreasing its cumulative loss;
the price to pay is a regret bound which mildly depends on 
ratios between feature values. The algorithm has
a multiplicative flavor and resembles a family of Coin Betting algorithms recently
developed by \citet{OrabonaPal2016NIPS,cocob,CutkoskyOrabona2018COLT}.  

The algorithm is based on a potential function
$\psi_{t,i}(x) = e^{\frac{1}{2} h(x / \hat{S}_{t,i})}$, where:
\[
  h(y) = \left\{
    \begin{array}{ll}
      \frac{1}{2} y^2 & \qquad \text{for~} |y| \leq 1, \\
      |y| - \frac{1}{2} & \qquad \text{for~} |y| > 1.
    \end{array}
    \right.
\]

Function $h(y)$ interpolates between between the quadratic (for $|y| \leq 1$) and
absolute value (otherwise). It is easy to check that $h(y) \geq |y| - \frac{1}{2}$
for all $y$.

By the definition, $\eta_{t,i} = \epsilon - \sum_{j \leq t} g_t x_{t,i} w_{t,i}$
is (up to $\epsilon$) the cumulative negative loss of the algorithm (``reward''). 
The weights are chosen in order to guarantee the relative increase in the 
reward lower-bounded by the
relative increase in the potential:
\[
  \frac{\eta_{t,i}}{\eta_{t-1,i}} \geq \frac{\psi_{t,i}(G_{t,i})}{\psi_{t-1,i}(G_{t-1,i})}
  e^{-\delta_{t,i}},
\]
where $\delta_{t,i}$ is an overhead which can be controlled. Taking the product
over trials $t=1,\ldots,T$ and using $\psi_{0,i}(G_{0,i})=1$ gives
$\eta_T \geq \epsilon \psi_{T,i}(G_{T,i}) e^{-\Delta_T}$, where $\Delta_{T,i} = \sum_t \delta_{t,i}$.
Using the definition of $\eta_{T,i}$, this translates to:
\begin{align*}
  \sum_{t} g_t x_{t,i} w_{t,i}  \leq &
  \epsilon - \epsilon \psi_{T,i}(G_{T,i}) e^{-\Delta_{T,i}} \\
  \leq & \epsilon - \epsilon e^{- \Delta_{T,i} - \frac{1}{4}} e^{|G_{T,i}|/(2 \hat{S}_{T,i})},
\end{align*}
where we used $h(y) \geq |y| - \frac{1}{2}$.
Denote the function on the r.h.s. by
$f(x) = \epsilon e^{- \Delta_{T,i} - \frac{1}{4}} e^{|x|/(2 \hat{S}_{T,i})}$. Using convexity
of $f(x)$, we can express it by means of its Fenchel conjugate,
$f(G_{T,i}) = \sup_u \{G_{T,i} u - f^*(u)\}$,
for which we have the following bound \citep{Orabona2013}:
\[
  f^*(u) \leq 2|u| \hat{S}_{T,i}\left( \ln \left(2 \epsilon^{-1} |u| \hat{S}_{T,i} e^{\frac{1}{4} + \Delta_{T,i}}  \right) - 1 \right).
\]
Unfortunately, it turns out that $\Delta_{T,i}$ 
can be $\Omega(T)$ in the worst case, which makes the bound linear in $T$. 
We can, however, bound $\Delta_{T,i}$ in a data-dependent way by:
\[
  \Delta_{T,i} \leq \ln \left(\frac{\hat{S}^2_{T,i}}{x_{\tau_i,i}^2} \right)
\]
where $\tau_i$ is the first trial in which $|x_{t,i}| \neq 0$.
As $\hat{S}^2_{T,i} \leq (T+1) \max_t x_{t,i}^2$, the bound
involves the ratio between the largest and the first non-zero input value. While
being vacuous in the worst case, this quantity is likely not to be excessively
large for non-adversarial data encountered in practice, and it is moreover hidden under
the logarithm in the bound (a similar quantity is analyzed by \citet{Ross_etal2013UAI},
where its magnitude is bounded with high probability for data received in a random
order).
Following along the steps from the previous section, we end up with the following bound:
\begin{theorem}
\label{thm:alg_two}
For any $\u \in \mathbb{R}^d$ the regret of $\mathrm{ScInOL}_2$ is upper-bounded by:
\[
  \regret_T(\u)
  \leq d \epsilon +
  \sum_{i=1}^d 2|u_i| \hat{S}_{T,i} \left( \ln (3|u_i| \hat{S}^3_{T,i} \epsilon^{-1} /
  x_{\tau_i,i}^2) - 1 \right),
\]
    where $\hat{S}_{T,i} = \sqrt{S_{T,i}^2 + M_{T,i}^2}$
 and
$\tau_i = \min\{t \colon |x_{t,i}| \neq 0\}$.
\end{theorem}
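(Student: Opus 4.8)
The plan is to follow the regret-reward duality in reverse, exactly as sketched before the statement, and to reduce everything to a single coordinate $i$: by \eqref{eq:gradient_trick} the regret decouples as $\regret_T(\u) \leq \sum_i \sum_t g_t x_{t,i}(w_{t,i} - u_i)$, so it suffices to bound each term. Fixing $i$, the whole argument rests on the per-trial relative reward inequality
\[
  \frac{\eta_{t,i}}{\eta_{t-1,i}} \geq \frac{\psi_{t,i}(G_{t,i})}{\psi_{t-1,i}(G_{t-1,i})} e^{-\delta_{t,i}}
\]
for a controllable overhead $\delta_{t,i}$, after which multiplicative telescoping, the lower bound $h(y) \geq |y| - \tfrac12$, and Fenchel conjugation are the routine manipulations already displayed in the excerpt.

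First I would establish the per-trial inequality. Writing $a_{t,i} = g_t x_{t,i}$ and recalling $\eta_{t,i} = \eta_{t-1,i} - a_{t,i} w_{t,i}$, the log-reward increment is $\ln(1 - a_{t,i} w_{t,i}/\eta_{t-1,i})$, while the log-potential increment is $\tfrac12[h(G_{t,i}/\hat{S}_{t,i}) - h(G_{t-1,i}/\hat{S}_{t-1,i})]$. The weight $w_{t,i}$ is precisely the betting fraction $\tfrac12 h'(\theta_{t,i}) = \tfrac12 \mathrm{sgn}(\theta_{t,i})\min\{|\theta_{t,i}|,1\}$ times the current reward $\eta_{t-1,i}$, normalized by $\sqrt{S^2_{t-1,i} + M^2_{t,i}}$. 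The prior update of $M_{t,i}$ ensures $|a_{t,i}|/\sqrt{S^2_{t-1,i} + M^2_{t,i}} \leq 1$ (since $|a_{t,i}| \leq |x_{t,i}| \leq M_{t,i}$), so the wager stays bounded and the logarithm is well-defined; a first-order comparison of the two increments is arranged to cancel, leaving $\delta_{t,i}$ to collect the second-order curvature term (controlled by $|h''| \leq 1$) together with the mismatch between the pre-prediction normalizer $\sqrt{S^2_{t-1,i} + M^2_{t,i}}$ and the post-update scale $\hat{S}_{t,i}$ appearing in the potential. I expect $\delta_{t,i}$ to come out bounded (up to a constant) by $\ln(\hat{S}^2_{t,i}/\hat{S}^2_{t-1,i})$.

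Taking the product over $t$ with $\psi_{0,i}(G_{0,i}) = 1$ yields $\eta_{T,i} \geq \epsilon\, \psi_{T,i}(G_{T,i})\, e^{-\Delta_{T,i}}$ with $\Delta_{T,i} = \sum_t \delta_{t,i}$, and substituting $\eta_{T,i} = \epsilon - \sum_t a_{t,i} w_{t,i}$ gives the reward-to-loss conversion already displayed. The main obstacle is controlling $\Delta_{T,i}$: a termwise bound is linear in $T$ and would make the result vacuous. The remedy is the telescoping structure noted above: once $\delta_{t,i}$ is identified as (essentially) $\ln(\hat{S}^2_{t,i}/\hat{S}^2_{t-1,i})$, the sum collapses to $\ln(\hat{S}^2_{T,i}/\hat{S}^2_{\tau_i,i})$, where $\tau_i$ is the first trial with $x_{t,i} \neq 0$ (earlier terms vanish because both reward and potential are frozen while $x_{t,i} = 0$). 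Bounding the initial scale below by $\hat{S}^2_{\tau_i,i} \geq x^2_{\tau_i,i}$ then produces exactly $\Delta_{T,i} \leq \ln(\hat{S}^2_{T,i}/x^2_{\tau_i,i})$. Verifying that the overhead really telescopes this cleanly, rather than merely being bounded per trial, is the delicate point, since it is what turns a potentially $\Omega(T)$ overhead into a logarithmic one.

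Finally I would assemble the bound. Using convexity of $f(x) = \epsilon e^{-\Delta_{T,i} - 1/4} e^{|x|/(2\hat{S}_{T,i})}$ and its conjugate, $f(G_{T,i}) = \sup_{u}\{G_{T,i} u - f^*(u)\}$, together with $G_{T,i} = -\sum_t a_{t,i}$, gives $\sum_t a_{t,i}(w_{t,i} - u_i) \leq \epsilon + f^*(u_i)$ for every $u_i$. Plugging in the stated conjugate bound $f^*(u) \leq 2|u|\hat{S}_{T,i}(\ln(2\epsilon^{-1}|u|\hat{S}_{T,i} e^{1/4 + \Delta_{T,i}}) - 1)$, substituting $e^{\Delta_{T,i}} \leq \hat{S}^2_{T,i}/x^2_{\tau_i,i}$, and absorbing $2 e^{1/4} \leq 3$ into the constant collapses the argument of the logarithm to $3|u_i|\hat{S}^3_{T,i}\epsilon^{-1}/x^2_{\tau_i,i}$. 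Summing the per-coordinate bounds over $i = 1, \ldots, d$ and invoking \eqref{eq:gradient_trick} then yields the $d\epsilon + \sum_i 2|u_i|\hat{S}_{T,i}(\ln(\cdots) - 1)$ form of the theorem.
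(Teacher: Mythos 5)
Your proposal is correct and follows essentially the same route as the paper: the per-trial multiplicative reward-versus-potential inequality, telescoping the overhead $\Delta_{T,i}$ via $\frac{a-b}{a}\le\ln\frac{a}{b}$ down to $\ln(\hat{S}^2_{T,i}/x^2_{\tau_i,i})$, and the Fenchel-conjugate bound for $\alpha e^{|x|/\gamma}$, summed over coordinates. The one piece you leave as a sketch is the pointwise inequality underlying the per-trial step (the paper's Lemma~\ref{lem:main_lemma_alg_two}), which in the paper requires a case analysis around the kink of $h$ and the inequality $e^{-x-x^2}\le 1-x$ rather than a plain second-order Taylor comparison; but your identification of the overhead as $\delta_{t,i}=\frac{1}{2}q^2$ and of the telescoping mechanism is exactly what the paper does.
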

The proof of Theorem \ref{thm:alg_two} is given in Appendix \ref{appx:alg_two}.

\section{Experiments}

\subsection{Toy example}
\begin{figure*}[!ht]
    \centering
    \begin{subfigure}[b]{0.45\textwidth}
        \centering
        \includegraphics[width=\textwidth]{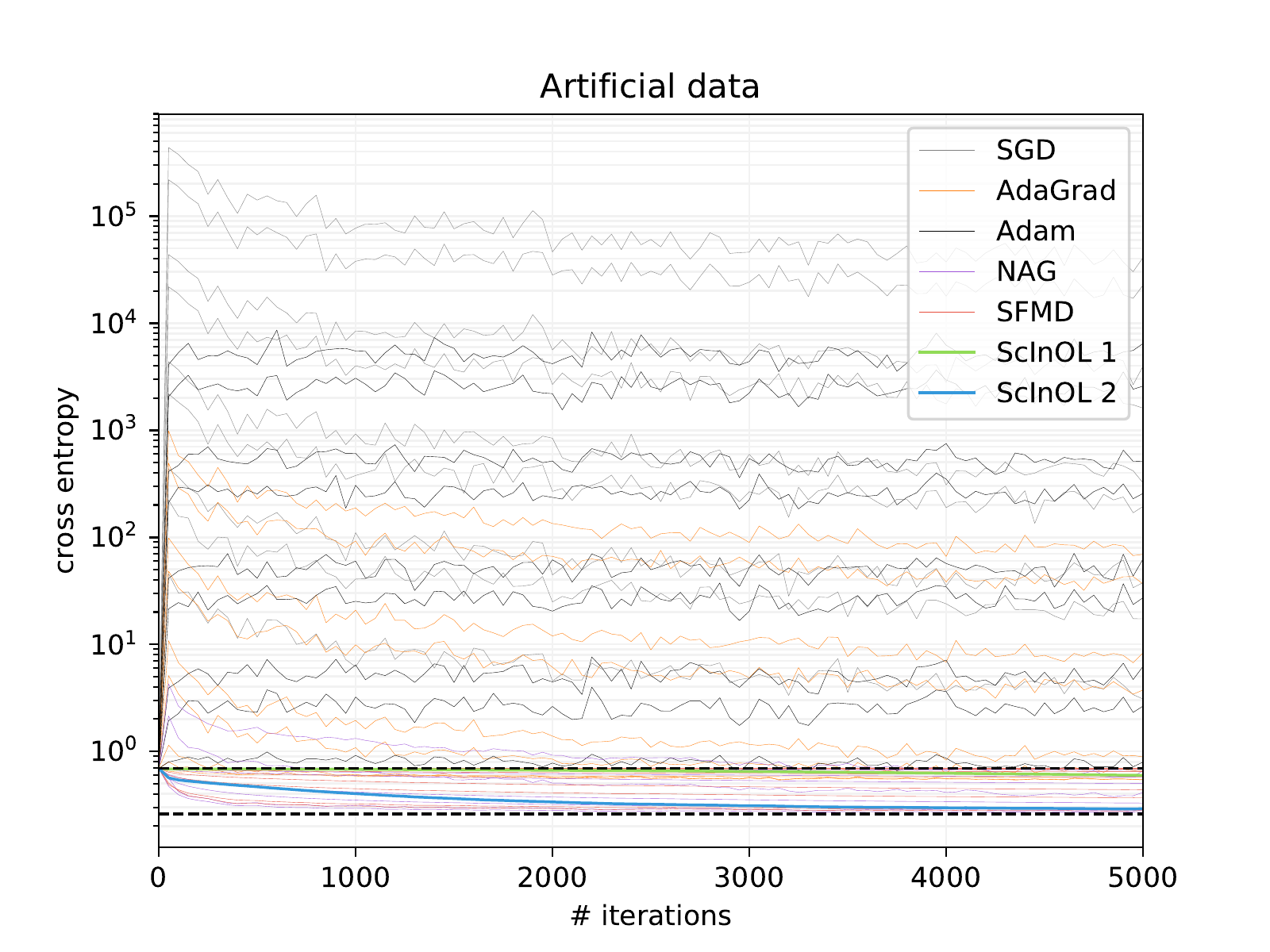}
        \caption{Entire view of the results with logarithmic scale. }
        \label{fig:art_all}
    \end{subfigure}
    \begin{subfigure}[b]{0.45\textwidth}
        \centering
        \includegraphics[width=\textwidth]{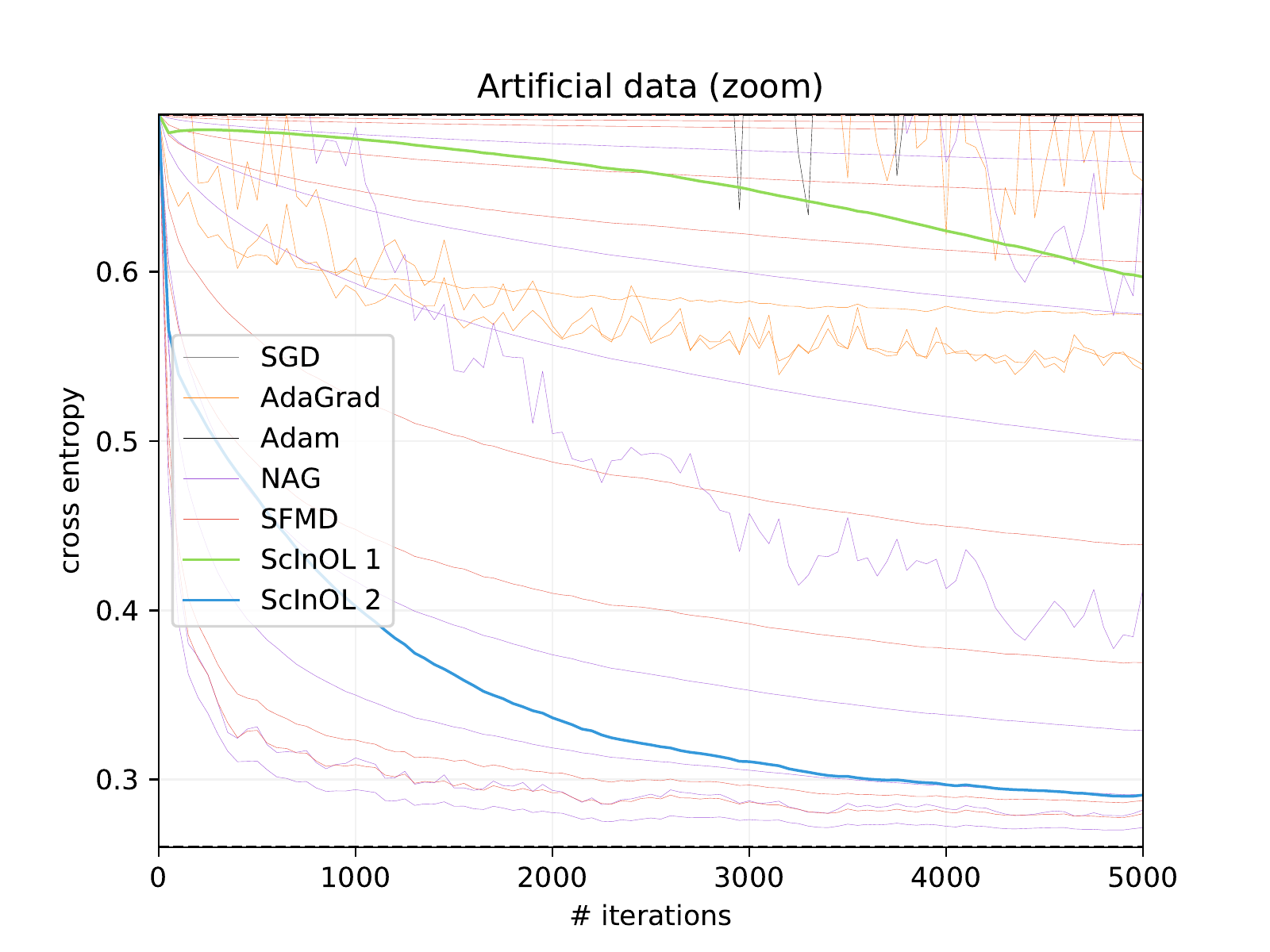}
        \caption{Zoomed-in fragment showing best performers.}
        \label{fig:art_zoom}
    \end{subfigure}
    \caption{Average cross entropy from test set for the toy example.}
    \label{fig:art}
\end{figure*}

To empirically demonstrate a need for scale invariance we tested our algorithms against some popular adaptive variants of OGD. 
The tests were run on a simple artificial binary classification
dataset that mildly exaggerates relative magnitudes of features (however still keeps them in reasonable ranges).
The dataset contains 21 real features, values of which are drawn from normal distributions $N(0,\sigma_i)$, where $\sigma_i = 2^{i-11}$ ($i=1,\ldots,21$), so that
the scales of features vary from $2^{-10}$ to $2^{10}$ (the ratio of the largest and the smallest scale is of order $10^6$). 
Binary class labels where drawn from a Bernoulli distribution with $\Pr(y=1|\x) = \mathrm{sigmoid}(\x^\top \u)$ where $u_i=\pm \frac{1}{\sigma_i}$ 
with signs chosen uniformly at random. Note that $\u$ is set to compensate the scale of features and keep the predictions function $\x^\top \u$ to be of order of unity.
We have drawn 5\;000 training examples and 100\;000 test examples. We repeated the experiment on 10 random training sets to decrease the variation
of the results.

The algorithms were trained by minimizing the logistic loss (cross entropy loss) in an online fashion. Following similar experiments
in the past papers concerning online methods \citep{DBLP:journals/corr/KingmaB14,Ross_etal2013UAI,cocob}, we report the average loss on
the test set (after every 50 iterations) rather than the regret. We tested the following algorithms: OGD with learning rate
decaying as $\eta_t = \eta / \sqrt{t}$ (called SGD here from \emph{stochastic gradient descent}), AdaGrad \citep{adagrad}, Adam \citep{DBLP:journals/corr/KingmaB14}, two scale-invariant
algorithms from past work: NAG (Normalized Adaptive Gradient) \citep{Ross_etal2013UAI} and Scale-free Mirror Descent by \citet{Orabona_etal2015} (SFMD),
and algorithms from this work.  All algorithms except ours have a learning rate parameter, which in each case was set to values from 
$\{0.001,0.005,0.01,0.05,0.1,0.5,1,5,10\}$ (results concerning all learning rates were reported). We implemented our algorithms in Tensorflow
and used existing implementations whenever it was possible.

Figure \ref{fig:art} shows average cross entropy measured on test set as a function of the number of iterations. 
Lines of the same color show results for the same algorithm but with different learning rates. Figure \ref{fig:art_all} uses logarithmic scale for $y$ axis: note
the extreme values of the loss for most of the non-invariant algorithms. In fact, the two black dashed lines mark the loss achieved by the best possible model $\u$ (lower line)
and a model with zero weight vector (upper line), so that every method above the upper dashed line does something worse than such a trivial baseline.
Figure \ref{fig:art_zoom} shows only the fragment between dashed lines using linear scale for $y$ axis. 

The results clearly show that algorithms which are not invariant to feature scales (SGD, Adam, and AdaGrad) 
are unable to achieve any reasonable result for any choice of the learning rate (most of the time performing much worse than the zero vector).
This is because a single learning rate is unable to compensate all
feature scales at the same time.
The scale invariant algorithms, NAG and SFMD, perform much better (achieving the best overall results), but their behavior still depends on the learning rate tuning.
Among our algorithms, ScInOL$_1$ slowly decreases its loss moving away from the initial zero solution, but it is clearly too slow in this problem.
On the other hand, ScInOL$_2$ was able to achieve descent results without any tuning at all.

\subsection{Linear Classification}
\begin{figure*}[!ht]
    \centering
    \begin{tabular}{ccc}
        \includegraphics[width=.33\textwidth]{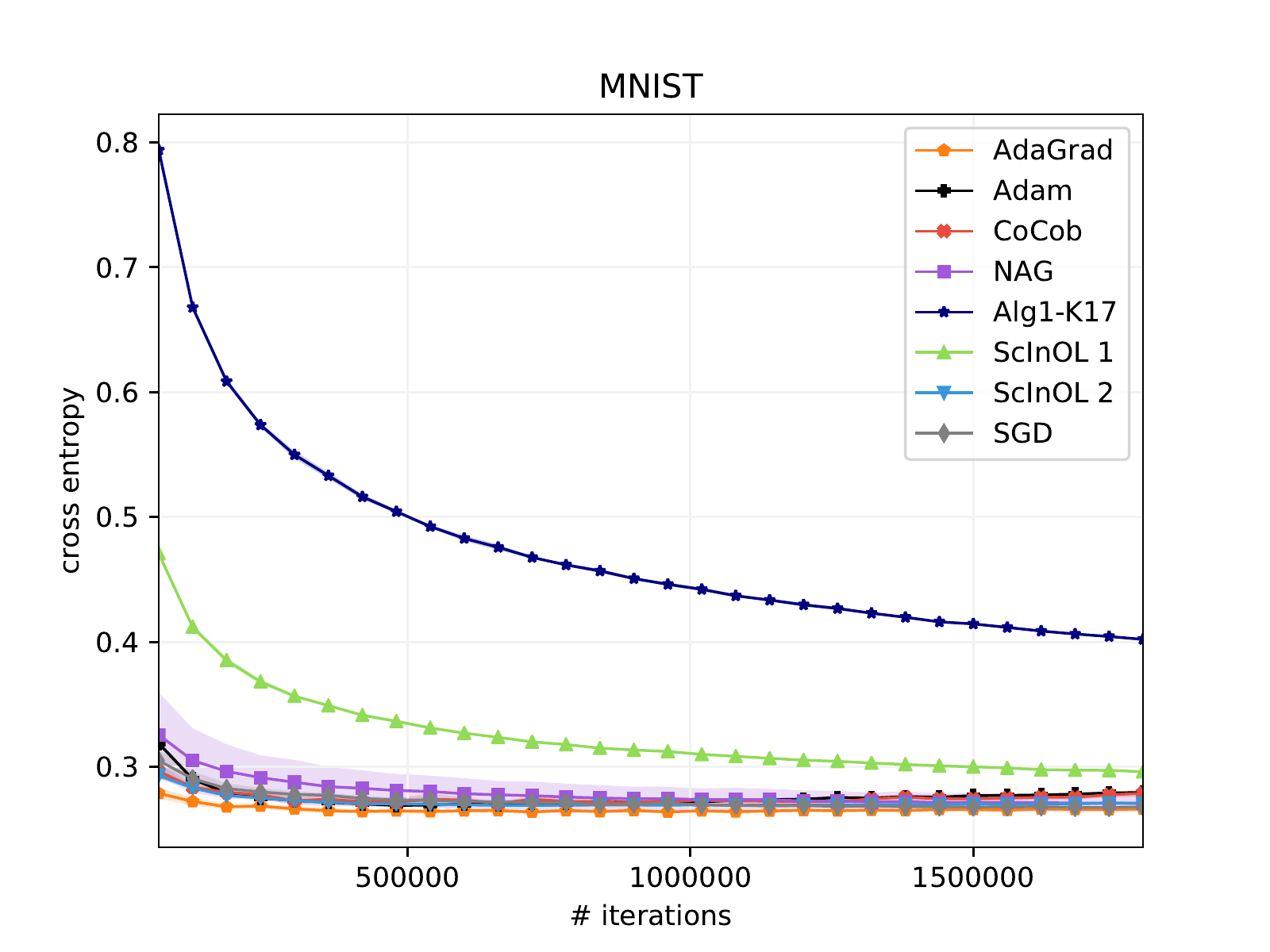} &
        \includegraphics[width=.33\textwidth]{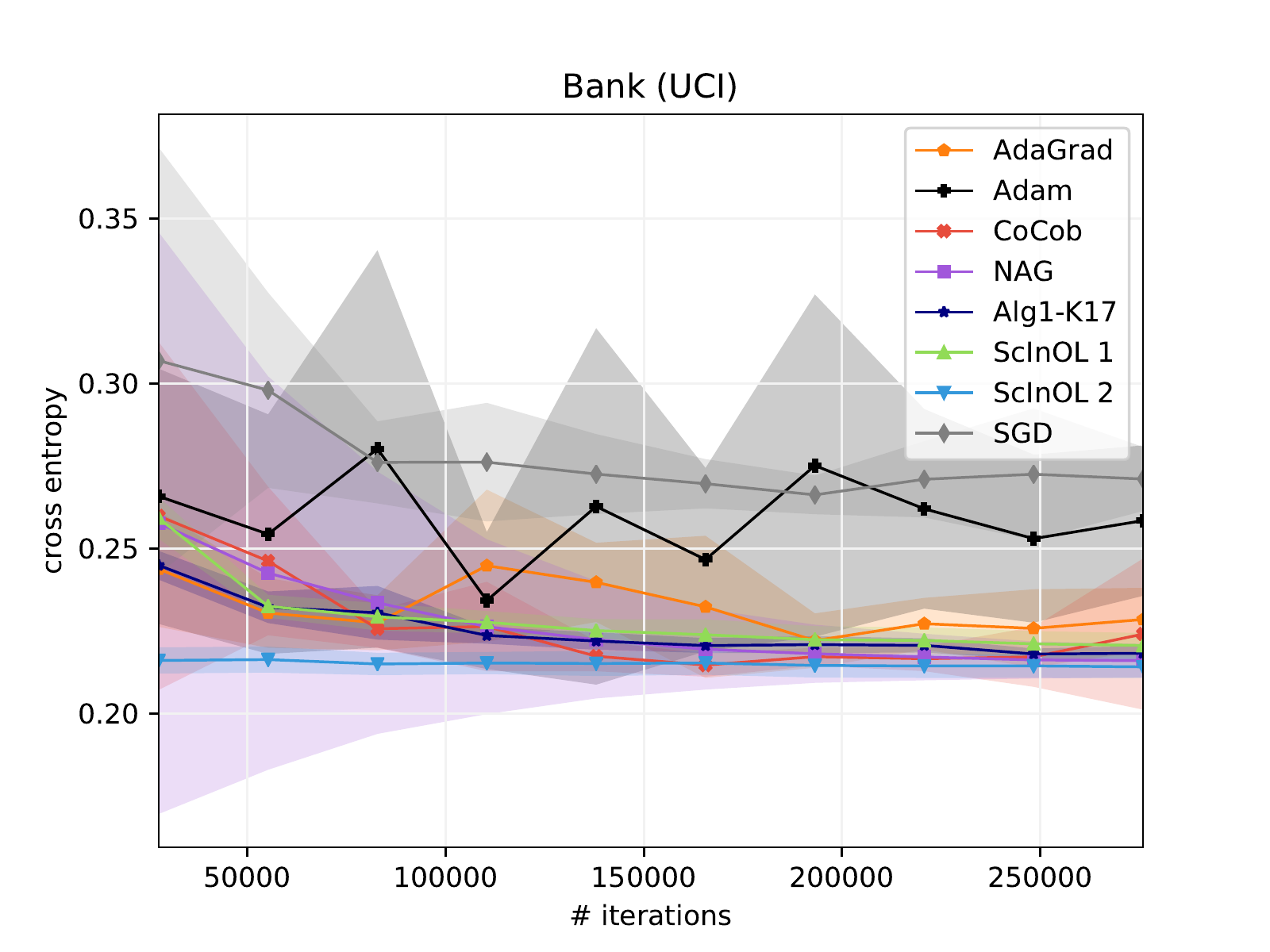} &
        \includegraphics[width=.33\textwidth]{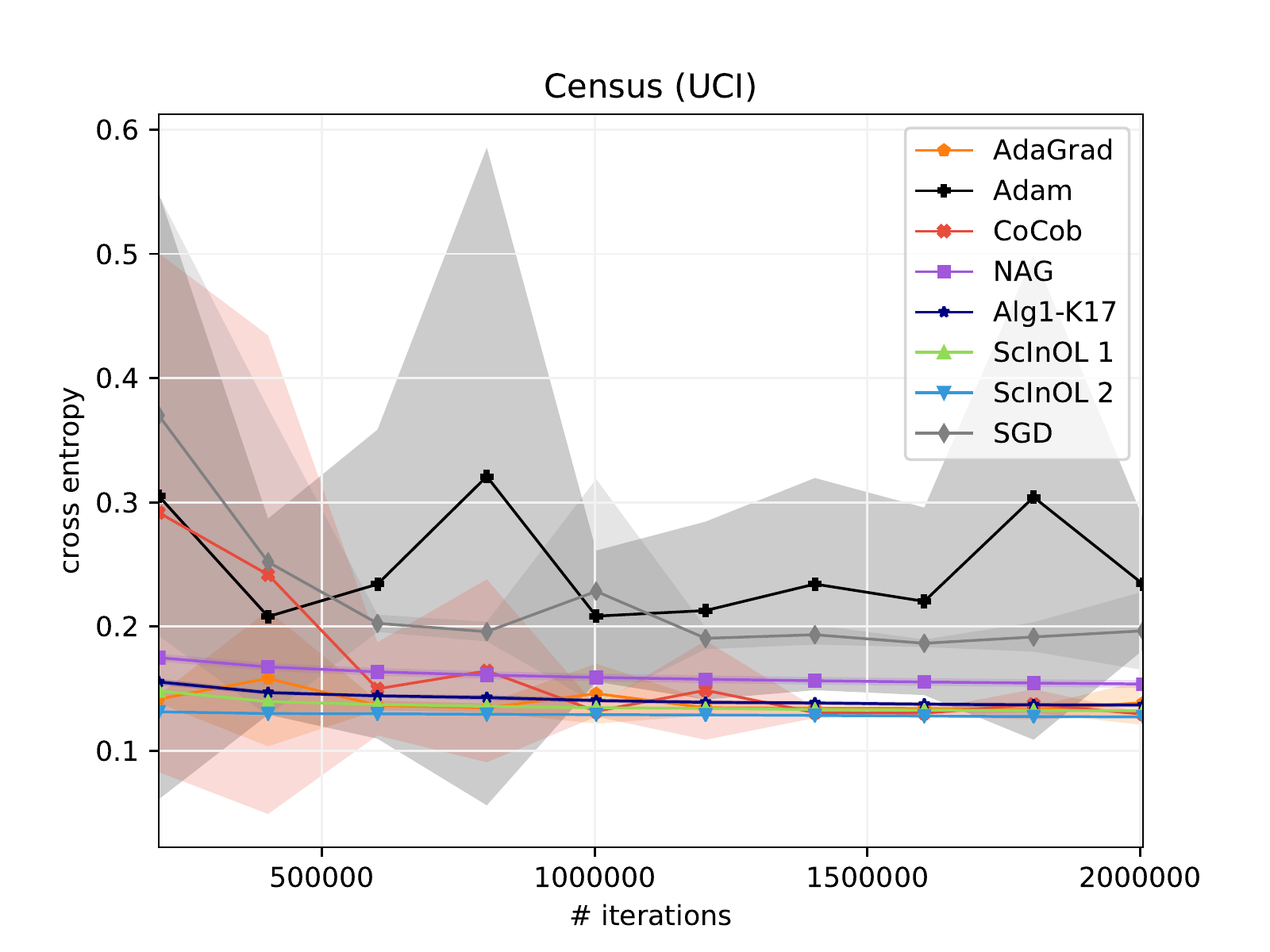}   \\
        \includegraphics[width=.33\textwidth]{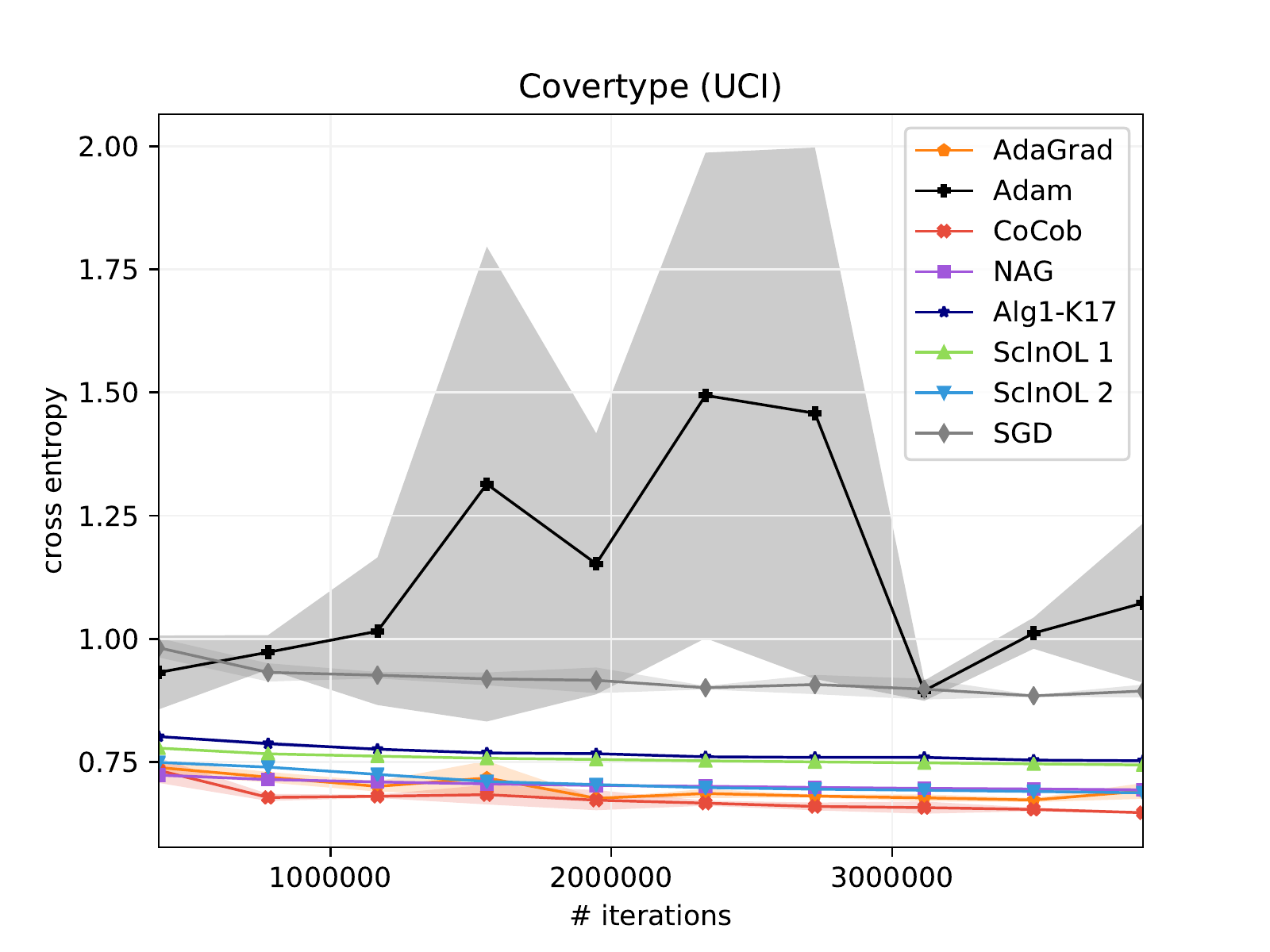} &
        \includegraphics[width=.33\textwidth]{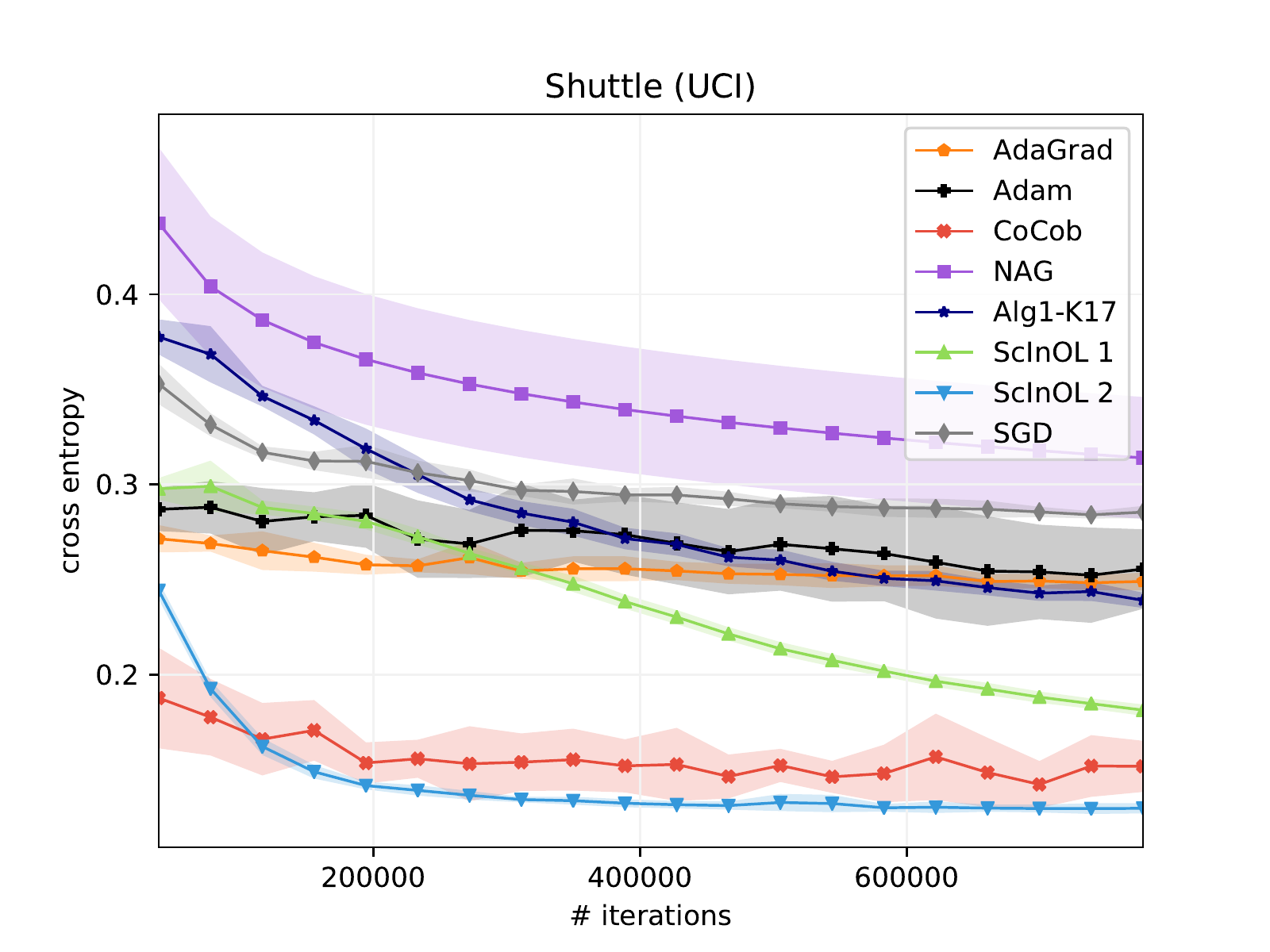} &
        \includegraphics[width=.33\textwidth]{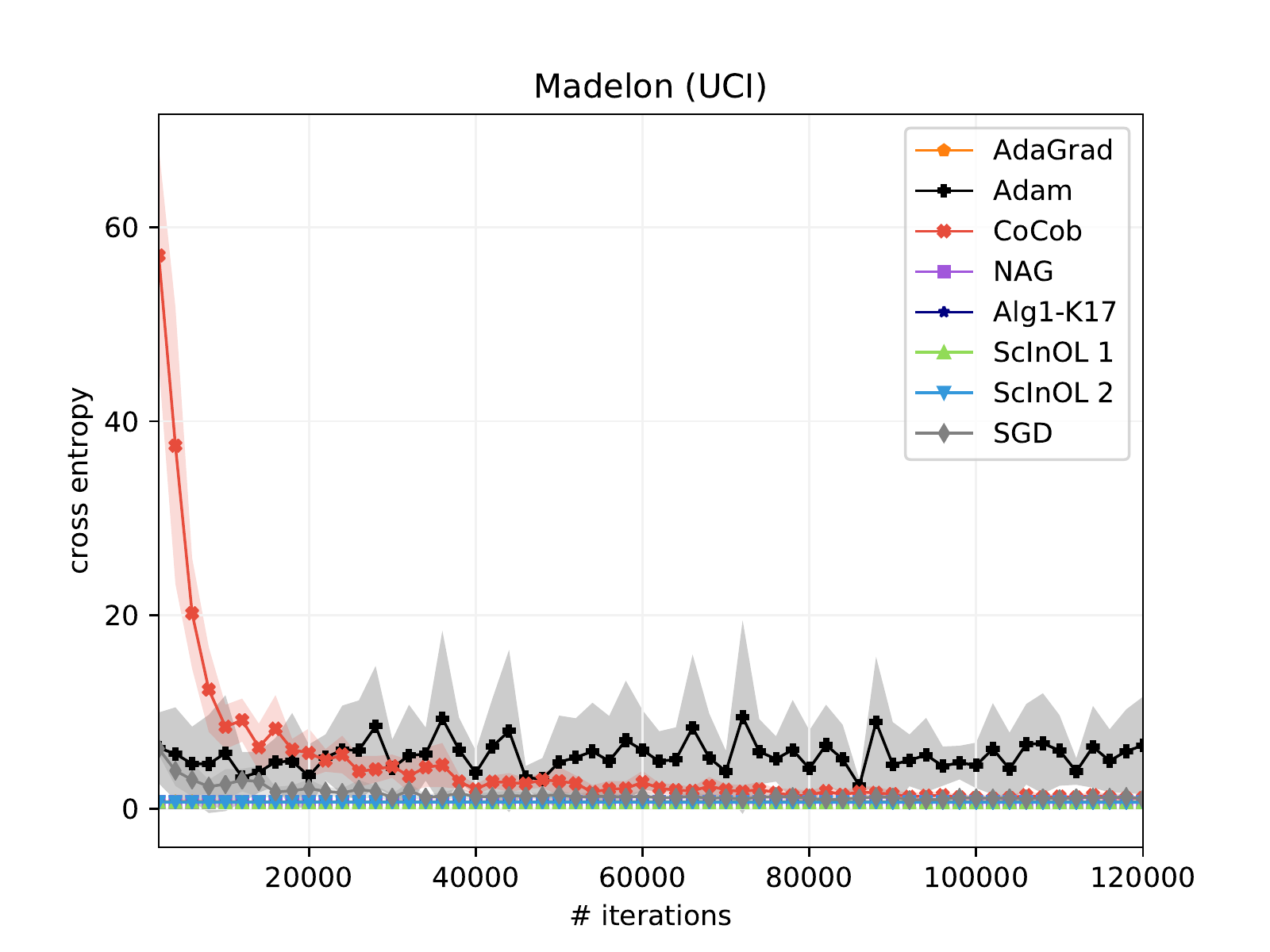}   \\
    \end{tabular}
    \caption{Mean test set cross entropy loss. Average values from 10 runs are depicted, shadows show $\pm$ standard deviation.}
    \label{fig:linear}
\end{figure*}

To further check empirical performance of our algorithms 
we tested them on some popular real-life benchmark datasets.
We chose 5 datasets with varying levels of feature scale variance from UCI repository \citep{Dua:2017} (Covertype, Census, Shuttle, Bank, Madelon) 
and a popular benchmark dataset MNIST \citep{lecun-mnisthandwrittendigit-2010}. For all datasets, categorical features were one-hot-encoded 
into multiple features and missing values were replaced by dedicated substitute features. 
Datasets that do not provide separate testing sets were split randomly into training/test sets ($2/1$ ratio). 
Short summary of datasets can be found in Appendix \ref{appx:datasets}.

Algorithms were trained by minimizing the cross entropy loss in an online fashion. Some of the data sets concern multiclass classification,
which is beyond the framework considered here, as it would require \emph{multivariate} prediction $\hby = (\hy_1,\ldots,\hy_K)$ for each of $K$ classes,
but it is straightforward to extend our setup to such multivariate case (details are given in Appendix \ref{appendix:multivariate_predictions}).
To gain more insight into the long-term behavior of the algorithms, we trained all algorithms for multiple epochs.
Each epoch consisted of running through the entire training set (shuffled) and testing average cross entropy and accuracy on the test set. 
Each algorithm was run 10 times for stability. We compared our algorithms with the following methods: 
SGD (with $\eta_t \sim 1/\sqrt{t}$), AdaGrad, Adam, NAG, CoCoB \citep{cocob} (an adaptive parameter-free algorithm; we used its Tensorflow implementation),
and Algorithm 1 (coordinate-wise scale invariant method) by \citet{Kotlowski_ALT2017} (which we call Alg1-K17).
The algorithms using hand-picked learning rates (SGD, AdaGrad, Adam, NAG) were run with values from $\{0.00001, 0.0001, 0.001, 0.01, 0.1, 1.0 \}$
(all other parameters were kept default), and only the best test set results were reported (note that this biases the results in favour of these algorithms).

Figure \ref{fig:linear} shows mean (test set) cross entropy loss (classification accuracy, given in Appendix \ref{appx:accuracy}, leads to essentially the same conclusions); 
shaded areas around each curve depicts $\pm$ one standard deviation (over different runs). 
All graphs start with the error measured after the first epoch for better readability. 
The most noticeable fact in the plots is comparatively high variance (between different runs) of SGD, AdaGrad, Adam and NAG, i.e. the approaches with tunable learning rate. They also often 
performed worse than the remaining algorithms. Among our methods, ScInOL$_2$ turned out to perform better than ScInOL$_1$ in every case, due to its more 
aggressive updates. Note, however, that both algorithms are surprisingly stable, exhibiting very small variance in their performance across different runs.
The best performance was most of the time achieved by either ScInOL$_2$ or CoCoB. Alg1-K17 was often converging somewhat slower (which is most pronounced for MNIST data), which
we believe is due to its very conservative update policy.

\section{Conclusions and future work}
We proposed two online algorithms which behavior 
is invariant under arbitrary rescaling of individual features. 
The algorithms do not require any prior knowledge on the scale of the instances or the comparator
and achieve, without any parameter tuning, regret bounds which match (up to a logarithmic factor) the regret bound of Online Gradient Descent
with optimally tuned separate learning rates per dimension. 
The algorithms run in $O(d)$ per trial, which is comparable
to the runtime of vanilla OGD. 

The framework considered in this paper concerns well-understood and relatively 
simple linear models with convex objectives. It would be interesting
to evaluate 
the importance of scale-invariance for deep learning methods,
comprised of multiple layers connected by non-linear activation functions. 
As scale-invariance leads to well-conditioned algorithms, 
we believe that it could not only
avoid the need for prior normalization of the inputs to the network, 
but it would also make the algorithm be independent of the 
scale of the inputs fed forward to the next layers. 
A scale-invariant update for neural nets
might be robust against the ``internal covariance shift'' phenomenon \citep{batchnorm} 
and avoid the need for batch normalization. 

Finally, the potential functions we use to analyze our updates seems
closely related to the potential function of EGU$^\pm$
\citep{eg}.
It may be that our tuned online updates are
simply approximation of (a version of) EGU$^\pm$
and this needs further investigation.

\clearpage

\section*{Acknowledgements}
M. Kempka and W. Kot{\l}owski were 
supported by the Polish National Science Centre under grant No. 2016/22/E/ST6/00299.
Part of this work was done while M.~K. Warmuth 
was at UC Santa Cruz, supported by NSF grant IIS-1619271.

%
%

\bibliography{bibliography}
\bibliographystyle{icml2019}
\clearpage

\appendix
\onecolumn

\section{Bound for Online Gradient Descent with per-dimension learning rates}
\label{appendix:OGD}

We remind the update of OGD with per-dimension learning rates:
\[
w_{t+1,i} = w_{t,i} - \eta_i \nabla_{t,i}, \qquad i=1,\ldots, d,
\]
with $\w_1 = \boldsymbol{0}$.
For any $u_i \in \mathbb{R}$, we have:
\[
(u_i - w_{t+1,i})^2 - (u_i - w_{t,i})^2
= (u_i - w_{t,i} + \eta_i \nabla_{t,i})^2 - (u_i - w_{t,i})^2 
= 2 \eta_i \nabla_{t,i} (u_i - w_{t,i}) + \eta_i^2 \nabla_{t,i}^2.
\]
Summing over trials $t=1,\ldots,T$ and rearranging:
\[
2 \eta_i \sum_{t=1}^T \nabla_{t,i}(w_{t,i} - u_i)
= u_i^2 - (u_i - w_{T+1,i})^2 + \eta_i^2 \sum_{t=1}^T \nabla_{t,i}^2.
\]
Dividing by $2 \eta_i$, upper bounding and summing over $i=1,\ldots,d$:
\[
\sum_{t=1}^T \nabla_t^\top (\w_t - \u) 
\leq \sum_{i=1}^d \left(\frac{u_i^2 }{2 \eta_i} + \frac{\eta_i}{2} \sum_{t=1}^T \nabla_{t,i}^2 \right).
\]
Finally, using \eqref{eq:gradient_trick} shows that the right-hand side of the above upper bounds
the regret.

\section{Scale invariance of Algorithm \ref{alg:one} and Algorithm \ref{alg:two}}
\label{appendix:scale_invariance}

Let $\{(\x_t,y_t)\}_{t=1}^T$ be a data sequence and define
a transformed sequence $\{(\A \x_t,y_t)\}_{t=1}^T$, where 
$\A = \mathrm{diag}(a_1,\ldots,a_d)$ with $a_1,\ldots,a_d > 0$. We will
show that the sequence of predictions $\hy_1,\ldots,\hy_T$ generated
by the algorithms on the original and the transformed data sequences are the same.
This can easily be done inductively: assuming $\hy_1,\ldots,\hy_t$ are
the same on both sequences, this implies
$g_1,\ldots,g_t$ are also the same (as $g_t = \partial_{\hy_t} \ell(y_t,\hy_t)$,
while $y_t$ are the same in both sequences).
Given that, a closer inspection of the algorithms lets us determine
the behavior of all maintained statistics under the feature transformation
$x_{t,i} \mapsto a_i x_{t,i}$.

For both algorithms we have:
\[
M_{t,i} = \max_{j \leq t} |x_{j,i}|
\mapsto a_i M_{t,i}, \quad
S_{t,i}^2 = \sum_{j \leq t} (g_j x_{j,i})^2
\mapsto a_i^2 S_{t,i}^2, \quad
G_{t,i} = -\sum_{j \leq t} g_j x_{j,i}
\mapsto a_i G_{t,i},
\]
This means that for Algorithm \ref{alg:one}:
\[
  \beta_{t,i} \mapsto \beta_{t,i}, \quad
  \theta_{t,i} \mapsto \theta_{t,i}, \quad
  w_{t,i} \mapsto a_i^{-1} w_{t,i},
\]
so that $x_{t,i} w_{t,i} \mapsto x_{t,i} w_{t,i}$ and thus 
$\hy_t = \x_t^\top \w_t$ is invariant under the scale transformation.

Similarly, for Algorithm \ref{alg:two} we have:
\[
\eta_{t,i} \mapsto \eta_{t,i}, \quad \theta_{i,i} \mapsto \theta_{t,i}, \quad
w_{t,i} \mapsto a_i^{-1} w_{t,i},
\]
and the scale invariance follows.

\section{Proof of Theorem~\ref{thm:alg_one}}
\label{appx:alg_one}

Before proving the theorem, we need two auxiliary results:

\begin{lemma}
  Let $f(x) = \alpha \left( e^{|x| / \gamma} - |x| / \gamma - 1 \right)$
  with $\alpha, \gamma > 0$.
  Its Fenchel conjugate is given by:
  \begin{equation}
    \begin{split}
    f^*(u) ~\stackrel{def}{=}&~ \sup_{x} \{ux - f(x)\} \\
    ~=~ & (|u| \gamma + \alpha) \ln(1 + |u| \gamma / \alpha) - |u| \gamma \\
    ~\leq~ & |u| \gamma \ln(1 + |u| \gamma / \alpha).
    \end{split}
    \end{equation}

  \label{lem:conjugate_ScInOL_1}
\end{lemma}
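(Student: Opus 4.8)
The plan is to compute the conjugate directly via the first-order optimality condition, exploiting the fact that $f$ depends only on $|x|$. Since $f$ is even, its conjugate is also even, so it suffices to evaluate $f^*(u)$ for $u \geq 0$ and then substitute $|u|$ for $u$ at the end. For $u \geq 0$ the supremum defining $f^*(u)$ is attained at some $x \geq 0$: if $x < 0$, replacing $x$ by $-x$ leaves $f(x) = f(-x)$ unchanged while increasing the linear term $ux$, so no negative $x$ can beat the best nonnegative one. Hence I would restrict attention to $\sup_{x \geq 0}\{ux - \alpha(e^{x/\gamma} - x/\gamma - 1)\}$, whose objective is concave (as $f$ is convex), so any stationary point is automatically the global maximizer.

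First I would differentiate the objective in $x$ and set the derivative to zero, giving $u = \frac{\alpha}{\gamma}(e^{x/\gamma} - 1)$, which solves to $x^\star = \gamma\ln(1 + u\gamma/\alpha)$; note $x^\star \geq 0$ for $u \geq 0$, so it lies in the admissible region. Substituting back using the two identities $e^{x^\star/\gamma} = 1 + u\gamma/\alpha$ and $x^\star/\gamma = \ln(1 + u\gamma/\alpha)$, the exponential and linear pieces collapse: the term $ux^\star$ contributes $u\gamma\ln(1 + u\gamma/\alpha)$, while $-\alpha(e^{x^\star/\gamma} - x^\star/\gamma - 1)$ contributes $\alpha\ln(1 + u\gamma/\alpha) - u\gamma$. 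Adding these and replacing $u$ by $|u|$ yields exactly the stated closed form $(|u|\gamma + \alpha)\ln(1 + |u|\gamma/\alpha) - |u|\gamma$.

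The concluding inequality reduces to showing $\alpha\ln(1 + |u|\gamma/\alpha) \leq |u|\gamma$, which after the substitution $z = |u|\gamma/\alpha \geq 0$ is just the elementary bound $\ln(1 + z) \leq z$. I do not expect any serious obstacle here: the computation is a textbook Fenchel-conjugate evaluation of a smooth strictly convex function. The only points requiring care are the bookkeeping around the absolute value (handled once and for all by the evenness reduction) and confirming that the interior stationary point $x^\star$ is feasible and globally optimal, which follows from concavity of the objective on $x \geq 0$.
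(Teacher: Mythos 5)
Your proposal is correct and follows essentially the same route as the paper's proof: reduce to $x \geq 0$ by symmetry, find the stationary point $x^\star = \gamma\ln(1+|u|\gamma/\alpha)$ of the concave objective, substitute back to obtain the closed form, and deduce the final inequality from $\ln(1+z) \leq z$. No issues.
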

\begin{proof}
  Note that since $f(x)$ is symmetric in $x$,
  \begin{equation}
    \begin{split}
    \sup_x \{ux - f(x)\} &= \sup_{x \geq 0} \{|u| x - f(x)\} \\
     &= \sup_{x \geq 0} \Big\{ \underbrace{
      |u| x - \alpha \left( e^{x / \gamma} - x/\gamma - 1 \right) }_{g(x)}
      \Big\}.
  \end{split}
    \end{equation}
  Setting the derivative of $g(x)$ to zero gives its unconstrained
  maximizer $x^* = \gamma \ln(1 + |u|\gamma/\alpha)$, and since $x^* \geq 0$, it is also
  the maximizer of $g(x)$ under constraint $x \geq 0$. Thus:
  \[
    f^*(u) = g(x^*) = (|u|\gamma + \alpha) \gamma \ln(1 + |u| \gamma / \alpha) - |u| \gamma.
  \]
  The inequality in the lemma follows from an elementary inequality $\ln(1+x) \leq x$
  applied to $\alpha \ln(1 + |u| \gamma/\alpha)$.
\end{proof}

\begin{lemma}
\label{lem:main_lemma_alg_one}
For any $v \in \mathbb{R}$ and any $q \in [-1,1]$:
\[
  \frac{q \; \mathrm{sgn}(v)}{2} \left(e^{\frac{|v|}{2}} - 1\right)
  + e^{\frac{|v-q|}{2\sqrt{1+q^2}}} - \frac{|v-q|}{2\sqrt{1+q^2}}
  \leq e^{\frac{|v|}{2}} - \frac{|v|}{2} + q^2.
\]
\end{lemma}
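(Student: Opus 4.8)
The plan is to first strip away the bookkeeping and recognize the inequality as a statement about a single convex potential. Introduce $p(x) = e^{|x|/2} - \tfrac{|x|}{2}$, so that $p'(x) = \tfrac{1}{2}\,\mathrm{sgn}(x)\,(e^{|x|/2}-1)$ and $p''(x) = \tfrac14 e^{|x|/2}$; one checks that $p'$ and $p''$ match across $x=0$, so $p\in C^2$ and is strictly convex. The first summand on the left is exactly $q\,p'(v)$, and the second is $p\big(\tfrac{v-q}{\sqrt{1+q^2}}\big)$, so the claim is equivalent to
\[
  p\Big(\tfrac{v-q}{\sqrt{1+q^2}}\Big) \;\le\; p(v) - q\,p'(v) + q^2 .
\]
Since the map $(v,q)\mapsto(-v,-q)$ leaves every term invariant ($|v|$, $q^2$, the product $q\,\mathrm{sgn}(v)$, and $|v-q|$ are all preserved), I would reduce to $v\ge 0$, where $\mathrm{sgn}(v)=1$.

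Next I would isolate what makes the statement true, since this also pinpoints the main difficulty. One is tempted to discard the normalization: as $p$ is even and increasing in $|x|$ and $\sqrt{1+q^2}\ge 1$, we have $p\big(\tfrac{v-q}{\sqrt{1+q^2}}\big)\le p(v-q)$, which would reduce the claim to $p(v-q)\le p(v)-q\,p'(v)+q^2$. This shortcut is doomed: by Taylor's theorem $p(v-q)-p(v)+q\,p'(v)=\tfrac12 p''(\xi)q^2$ for some intermediate $\xi$, and since $p''(\xi)=\tfrac14 e^{|\xi|/2}$ grows exponentially, this quadratic remainder exceeds $q^2$ once $v$ is moderately large. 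Hence the factor $\sqrt{1+q^2}$ — which is precisely $\hat{S}_{t,i}/\sqrt{S^2_{t-1,i}+M^2_{t,i}}$ after the update — must be used essentially: the shrinkage it induces in the argument of $p$ is exactly what tames the exponential. This coupling of exponential growth with the normalization is the crux.

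To carry out the proof itself, I would fix $q\in[-1,1]$, write $r = \tfrac{v-q}{\sqrt{1+q^2}}$, and study $\Phi(v) = p(v) - q\,p'(v) + q^2 - p(r)$ for $v\ge 0$, aiming at $\Phi(v)\ge 0$. Two regions must be treated separately because of the kink of $p$ at the origin: one where $r\ge 0$ and one where $r<0$. On each smooth branch the statement collapses to a one–variable transcendental inequality in $v$ (with $q$ as a parameter), which I would settle by examining the sign of $\Phi'(v)$ together with the values of $\Phi$ at the endpoints of the region; the behaviour at infinity is harmless, since the leading term of $\Phi(v)$ as $v\to\infty$ is $e^{v/2}\big(1-\tfrac q2\big)$ with $1-\tfrac q2\ge\tfrac12>0$, so $\Phi\to+\infty$.

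The tight regime, and thus the delicate part of the endpoint analysis, is $q\to 0$, where equality holds at $q=0$ for every $v$. Expanding $\Phi$ to second order in $q$ gives $\Phi(v)=q^2\big(1+\tfrac{v}{4}(e^{v/2}-1)-\tfrac18 e^{v/2}\big)+O(q^3)$, and the bracket is positive for all $v\ge 0$ (at $v=0$ it equals $\tfrac78$, and more generally $8-2v+e^{v/2}(2v-1)\ge 0$ on $[0,\infty)$), confirming the inequality locally. I expect the main obstacle to be promoting this local and asymptotic picture to a fully rigorous global bound: because $\Phi$ is \emph{not} convex in $q$ (its second $q$–derivative becomes very negative for large $v$, as the estimate in the second paragraph already signals), one cannot simply invoke $\Phi(0)=\Phi'(0)=0$, and the branch-wise single-variable inequalities, while elementary, demand a careful monotonicity argument rather than a one-line convexity estimate.
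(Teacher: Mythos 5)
Your reformulation is correct and matches the paper's own starting point: with $p(x) = e^{|x|/2} - |x|/2$ the claim is indeed $p\big(\tfrac{v-q}{\sqrt{1+q^2}}\big) \le p(v) - q\,p'(v) + q^2$, the symmetry reduction to $v \ge 0$ is exactly the paper's first step, and your diagnosis that the $\sqrt{1+q^2}$ shrinkage must be used essentially for large $v$ (because $p''$ grows exponentially) correctly identifies why the paper's proof needs a separate large-$v$ regime. But your proposal stops at a plan precisely where the proof has to begin. The entire global verification is delegated to ``examining the sign of $\Phi'(v)$ together with the values of $\Phi$ at the endpoints,'' and this is never carried out; you yourself flag that $\Phi$ is not convex in $q$, that the second-order expansion at $q=0$ does not propagate, and that a ``careful monotonicity argument'' is still required. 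That argument \emph{is} the lemma. Note moreover that $\Phi'(v) = p'(v) - q\,p''(v) - (1+q^2)^{-1/2}\,p'\big(\tfrac{v-q}{\sqrt{1+q^2}}\big)$ mixes exponentials with the incommensurate rates $e^{v/2}$ and $e^{(v-q)/(2\sqrt{1+q^2})}$, so determining its sign is not obviously more tractable than the original inequality, and you supply no tool for doing so. As it stands the write-up establishes the inequality only to second order near $q=0$ and asymptotically as $v\to\infty$.

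For contrast, the paper does not fix $q$ and sweep $v$; it splits into the cases $q \ge v$, $q \le v \le 3$, and $v > 3$. In the first two cases it uses exactly the shortcut you declare doomed, namely $\tfrac{|v-q|}{\sqrt{1+q^2}} \le |v-q|$ --- harmless there because $e^{v/2}$ is bounded by $e^{3/2}$ --- combined with the elementary estimate $e^x - x - 1 \le 0.6\,x^2$ for $x \le \tfrac12$. Only for $v > 3$ is the normalization exploited, via $1-x \ge e^{-x-x^2}$ and an explicit verification that $v - q - \tfrac{q^2}{2} \ge \tfrac{|v-q|}{\sqrt{1+q^2}}$. So your blanket dismissal of the unnormalized bound is slightly too strong, and, more importantly, the concrete case-closing inequalities --- the actual content of the proof --- are absent from your proposal.
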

\begin{proof}
It suffices to prove the lemma for $v \geq 0$. Indeed, the inequality holds
for some $v \geq 0$ and $q \in [-1,1]$ if and only if it holds for $-v$ and $-q$.
Denote:
\[
  \vt = \frac{|v-q|}{\sqrt{1+q^2}}.
\]
In this notation and with the assumption $v \geq 0$, the inequality translates
to:
\begin{equation}
  e^{\frac{\vt}{2}} - \frac{\vt}{2} \leq
  e^{\frac{v}{2}}\left(1-\frac{q}{2}\right) - \frac{v - q}{2} + q^2
\label{eq:the_inequality}
\end{equation}
We will split the proof into three sub-cases: (i) $q \geq v$,
(ii) $q \leq v \leq 3$, and (iii) $v \geq 3$. Since $q \leq 1$, these cases
cover all allowed values of $v$ and $q$.

\paragraph{Case (i): $q \geq v$.} We have
$\vt = \frac{q-v}{\sqrt{1+q^2}} \leq q-v$. Since the function
$e^x - x$ is increasing in $x$ for $x\in(1,\infty)$, it holds:
\[
  e^{\frac{\vt}{2}} - \frac{\vt}{2}
  \leq e^{\frac{q-v}{2}} - \frac{q-v}{2}
  = e^{\frac{v}{2}}e^{\frac{q-2v}{2}} - \frac{q-v}{2}.
\]
From $q \leq 1$ and $v \geq 0$ it follows
$\frac{q-2v}{2} \leq  \frac{1-2v}{2} \leq \frac{1}{2}$.
Since function $f(x) = \frac{e^x - x - 1}{x^2}$ is nondecreasing in $x$
(see, e.g., \citep{PLGbook}, Section A.1.2), we have:
\begin{equation}
  e^x - x - 1 \leq x^2 \frac{e^{1/2} - 1/2 - 1}{1/4} \leq 0.6 x^2 \qquad
  \text{for~~} x \leq \frac{1}{2}.
\label{eq:exp_minus_x_minus_1_ineq}
\end{equation}
Thus, we bound $e^{\frac{q-2v}{2}}$
by $1 + \frac{q-2v}{2} + 0.15 (q-2v)^2$ and get:
\begin{align*}
e^{\frac{\vt}{2}} - \frac{\vt}{2}
&\leq e^{\frac{v}{2}} \left(1 + \frac{q-2v}{2} \right)
  - \frac{q-v}{2} + 0.15 e^{\frac{v}{2}} (q-2v)^2 \\
  &= e^{\frac{v}{2}} \left(1 - \frac{q}{2}\right)
  - \frac{v-q}{2} + (e^{\frac{v}{2}} - 1)(q-v)
 + 0.15 e^{\frac{v}{2}} (q-2v)^2 \\
&\leq e^{\frac{v}{2}} \left(1 - \frac{q}{2}\right)
 - \frac{v-q}{2} + v(q-v)
  + \frac{1}{4} (q-2v)^2,
\end{align*}
where the last inequality follows from the fact that $v \leq 1$ (as $q \geq v$ and
$q \leq 1$), which by \eqref{eq:exp_minus_x_minus_1_ineq} implies
$e^{\frac{v}{2}} \leq 1 + \frac{v}{2} + 0.6 \frac{v^2}{4} = 1 + 0.5v + 0.15v^2
\leq 1 + v$, and furthermore $0.15 e^{\frac{v}{2}} \leq 0.15 e^{\frac{1}{2}} \leq \frac{1}{4}$.
But $v(q-v) + \frac{1}{4} (q-2v)^2 = \frac{1}{4}q^2 \leq q^2$, which
proves \eqref{eq:the_inequality} for $q \geq v$.

\paragraph{Case (ii): $q \leq v \leq 3$.}
We have $\vt = \frac{v-q}{\sqrt{1+q^2}} \leq v-q$, and by the monotonicity
of function $e^x - x$ for $x\in(1,\infty)$:
\[
  e^{\frac{\vt}{2}} - \frac{\vt}{2}
  \leq e^{\frac{v-q}{2}} - \frac{v-q}{2}
  = e^{\frac{v}{2}} e^{-\frac{q}{2}} - \frac{v-q}{2}.
\]
Using \eqref{eq:exp_minus_x_minus_1_ineq} and $q \geq -1$, we bound
$e^{-q/2} \leq 1 - \frac{q}{2}+ 0.15 q^2$ to get:
\[
  e^{\frac{\vt}{2}} - \frac{\vt}{2}
  \leq e^{\frac{v}{2}} \left(1 - \frac{q}{2}\right) - \frac{v-q}{2}
   + 0.15 e^{\frac{v}{2}} q^2.
\]
Using $0.15 e^{\frac{v}{2}} \leq 0.15 e^{\frac{3}{2}} \leq 0.68 \leq 1$
proves \eqref{eq:the_inequality} for $q \leq v \leq 3$.

\paragraph{Case (iii): $v > 3$.} We lower-bound the right-hand side
of \eqref{eq:the_inequality}:
\[
e^{\frac{v}{2}}\left(1-\frac{q}{2}\right) - \frac{v - q}{2} + q^2
\geq e^{\frac{v}{2}}\left(1-\frac{q}{2}\right) - \frac{v - q - \frac{q^2}{2}}{2}
\geq e^{\frac{1}{2}(v- q- \frac{q^2}{2})}- \frac{v - q - \frac{q^2}{2}}{2},
\]
where the first inequality is simply from $q^2 \geq \frac{q^2}{4}$, while
the second follows from $1 - x \geq e^{-x-x^2}$ for $x \leq \frac{1}{2}$
(see, .e.g., \citep{PLGbook}, Lemma 2.4). Now, using the monotonicity
of function $e^x - x$,
\[
e^{\frac{1}{2}(v- q- \frac{q^2}{2})}- \frac{v - q - \frac{q^2}{2}}{2}
\geq e^{\frac{\vt}{2}} - \frac{\vt}{2}
\qquad
\iff \qquad
v- q- \frac{q^2}{2} \geq \vt,
\]
thus it suffices to show the latter to finish the proof.
We have:
\[
  v-q-\frac{q^2}{2} - \vt
  = (v-q)\left(1 - \frac{1}{\sqrt{1+q^2}}\right) - \frac{q^2}{2}
  \geq (3-q)\left(1 - \frac{1}{\sqrt{1+q^2}}\right) - \frac{q^2}{2}.
\]
Using elementary inequality $\sqrt{1+x} \leq 1 + \frac{x}{2}$, we have:
$\frac{1}{\sqrt{1+q^2}} = \frac{\sqrt{1+q^2}}{1+q^2} \leq \frac{1 + q^2/2}{1+q^2}$, and thus:
\begin{align*}
v-q-\frac{q^2}{2} - \vt
&\geq (3-q)\left(1 - \frac{1 + q^2/2}{1+q^2}\right) - \frac{q^2}{2}
= (3-q)\frac{q^2/2}{1+q^2} - \frac{q^2}{2} \\
&= \frac{q^2}{2} \left( \frac{3-q}{1 + q^2} - 1\right)
\geq \frac{q^2}{2} \left( \frac{3-1}{1 + 1} - 1\right) = 0.
\end{align*}
This shows that $v- q- \frac{q^2}{2} \geq \vt$ and
thus proves \eqref{eq:exp_minus_x_minus_1_ineq} for $v > 3$.
\end{proof}

Before we state the next result, we summarize the notation which will be used
in what follows. For any $i=1,\ldots,d$ and
any $t=1,\ldots,T$, let:
  \[
    M_{t,i} = \max_{j \leq t} |x_{j,i}|, \qquad
    G_{t,i} = -\sum_{j \leq t} g_j x_{j,i}, \qquad
    S^2_{t,i} = \sum_{j \leq t} (g_j x_{j,i})^2,
  \]
  be, respectively, the maximum input value, the negative cumulative
  gradient, and the sum of squared gradients at $i$-th coordinate up to (and including)
  trial $t$, and we also denote $M_{0,i}=G_{0,i}=S^2_{0,i}=0$. Moreover, define:
  \[
    \beta_{t,i} = \left\{
      \begin{array}{ll}
      \min\left\{\beta_{t-1,i}, \epsilon \frac{S_{t-1,i}^2
        + M_{t,i}^2}{x_{t,i}^2 t}\right\}
        & \quad \text{when~~} x_{t,i} \neq 0, \\
      \beta_{t-1,i} & \quad \text{when~~} x_{t,i} = 0,
      \end{array}
    \right.
  \]
  with $\beta_{1,i} = \epsilon$.
  The weight vector at trial $t$ is given by:
  \begin{equation}
    w_{t,i} = \frac{\beta_{t,i} \mathrm{sgn}(G_{t-1,i})}{2 \sqrt{S_{t-1,i}^2 + M_{t,i}^2}}
    \left(e^{\frac{|G_{t-1,i}|}{2\sqrt{S_{t-1,i}^2 + M_{t,i}^2}}} - 1 \right),
    \label{eq:ScInOL_1_weights}
  \end{equation}
  as long as $M_{t,i} > 0$; if $M_{t,i} = 0$ (which means that $x_{j,i} = 0$
  for all $j \leq t$), we set $w_{t,i}=0$, but any other value of $w_{t,i}$ would
  lead to the same loss. Finally, define $\hat{S}^2_{t,i} = S^2_{t,i} + M^2_{t,i}$.

\begin{lemma}
  Define:
  \[
    \psi_{t,i}(x) =
    \left\{
      \begin{array}{ll}
      \beta_{t,i} \left(e^{|x| / (2 \hat{S}_{t,i})} - \frac{|x|}{2 \hat{S}_{t,i}} - 1
      \right) &\quad \text{for~~} \hat{S}_{t,i} \neq 0, \\
      0 &\quad \text{for~~} \hat{S}_{t,i} = 0.
     \end{array}
    \right.
  \]
  For any $i=1,\ldots,d$ and any $t=1,\ldots,T$ we have:
  \[
    w_{t,i} g_t x_{t,i} ~\leq~ \psi_{t-1,i}(G_{t-1,i}) - \psi_{t,i}(G_{t,i}) +
    \frac{\epsilon}{t}.
  \]
  \label{lem:ScInOL_1_progress}
\end{lemma}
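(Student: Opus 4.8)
The plan is to reduce the per-trial inequality to Lemma~\ref{lem:main_lemma_alg_one} by a suitable normalization, and then to absorb two discrepancies into quantities I can control: the difference between the scale $\sqrt{S_{t-1,i}^2 + M_{t,i}^2}$ appearing in the weight and the scale $\hat{S}_{t-1,i}$ appearing in $\psi_{t-1,i}$, and the extra $q^2$ term produced by the lemma. First I would dispose of the trivial case $M_{t,i}=0$: then $x_{j,i}=0$ for all $j\leq t$, so $w_{t,i}=0$ by convention, both potentials vanish, and the claim reads $0 \leq \frac{\epsilon}{t}$.

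Assume $M_{t,i}>0$ and write $\widetilde{S} = \sqrt{S_{t-1,i}^2 + M_{t,i}^2}$ for the scale used inside the weight. Since $|g_t|\leq 1$ and $|x_{t,i}|\leq M_{t,i}\leq\widetilde{S}$, the quantity $q = g_t x_{t,i}/\widetilde{S}$ lies in $[-1,1]$; set also $v = G_{t-1,i}/\widetilde{S}$. Two identities drive the substitution: $\hat{S}_{t,i} = \widetilde{S}\sqrt{1+q^2}$ (because $\hat{S}_{t,i}^2 = \widetilde{S}^2 + (g_t x_{t,i})^2$) and $G_{t,i}/\widetilde{S} = v-q$. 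With these, the weight formula \eqref{eq:ScInOL_1_weights} gives $w_{t,i}g_t x_{t,i} = \tfrac{\beta_{t,i}\,q\,\mathrm{sgn}(v)}{2}\bigl(e^{|v|/2}-1\bigr)$, and $\psi_{t,i}(G_{t,i}) = \beta_{t,i}\bigl(e^{|v-q|/(2\sqrt{1+q^2})} - \tfrac{|v-q|}{2\sqrt{1+q^2}} - 1\bigr)$. Multiplying the inequality of Lemma~\ref{lem:main_lemma_alg_one} by $\beta_{t,i}>0$ and rearranging (the additive $\beta_{t,i}$ coming from the ``$-1$'' in each exponential term cancels) therefore yields
\[
w_{t,i}g_t x_{t,i} \;\leq\; \beta_{t,i}\Bigl(e^{|v|/2} - \tfrac{|v|}{2} - 1\Bigr) \;-\; \psi_{t,i}(G_{t,i}) \;+\; \beta_{t,i}q^2 ,
\]
so it remains to bound the first and third terms on the right by $\psi_{t-1,i}(G_{t-1,i}) + \frac{\epsilon}{t}$.

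For the overhead term I would use $(g_t x_{t,i})^2 \leq x_{t,i}^2$ together with the defining constraint $\beta_{t,i} \leq \epsilon(S_{t-1,i}^2 + M_{t,i}^2)/(x_{t,i}^2 t)$, which gives $\beta_{t,i}q^2 = \beta_{t,i}(g_t x_{t,i})^2/\widetilde{S}^2 \leq \frac{\epsilon}{t}$ directly (and trivially $0$ when $x_{t,i}=0$). The remaining term is $\beta_{t,i}$ times the potential expression evaluated at $G_{t-1,i}$ but with scale $\widetilde{S}$ rather than $\hat{S}_{t-1,i}$; to compare it with $\psi_{t-1,i}(G_{t-1,i})$ I would invoke $\beta_{t,i}\leq\beta_{t-1,i}$ (the sequence is nonincreasing by construction) and $\widetilde{S}\geq\hat{S}_{t-1,i}$ (since $M_{t,i}\geq M_{t-1,i}$). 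The key monotonicity observation is that $z\mapsto e^{z}-z-1$ is nonnegative and increasing on $[0,\infty)$, so $s\mapsto e^{|G_{t-1,i}|/(2s)} - \frac{|G_{t-1,i}|}{2s} - 1$ is nonincreasing in $s>0$; enlarging the scale from $\hat{S}_{t-1,i}$ to $\widetilde{S}$ can only decrease it. Combined with $\beta_{t,i}\leq\beta_{t-1,i}$ and nonnegativity of the expression, this gives $\beta_{t,i}\bigl(e^{|v|/2}-\tfrac{|v|}{2}-1\bigr) \leq \psi_{t-1,i}(G_{t-1,i})$, and the proof is complete. (When $\hat{S}_{t-1,i}=0$ we have $G_{t-1,i}=0$, so both this term and $\psi_{t-1,i}(G_{t-1,i})$ vanish and the step is trivial.)

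I expect the main obstacle to be precisely the bookkeeping of this scale mismatch: the weight is built from the already-updated maximum $M_{t,i}$ inside $\widetilde{S}$, whereas $\psi_{t-1,i}$ is defined with $M_{t-1,i}$, and it is exactly the monotonicity of the potential in its scale together with the nonincreasing $\beta_{t,i}$ that reconciles the two without leaving a leftover term. Everything else — the substitution identities and the edge cases $M_{t,i}=0$, $\hat{S}_{t-1,i}=0$, $x_{t,i}=0$ where the convention $\tfrac{0}{0}=0$ and the piecewise definition of $\psi$ apply — is routine but must be verified so that the telescoping used in Theorem~\ref{thm:alg_one} is legitimate.
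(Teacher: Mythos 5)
Your proposal is correct and follows essentially the same route as the paper's proof: the same normalization $v = G_{t-1,i}/\sqrt{S_{t-1,i}^2+M_{t,i}^2}$, $q = g_t x_{t,i}/\sqrt{S_{t-1,i}^2+M_{t,i}^2}$, the same appeal to Lemma~\ref{lem:main_lemma_alg_one} scaled by $\beta_{t,i}$, the same bound $\beta_{t,i}q^2 \leq \epsilon/t$ from the defining constraint on $\beta_{t,i}$, and the same reconciliation of the scale mismatch via monotonicity of $e^z - z - 1$ together with $\beta_{t,i}\leq\beta_{t-1,i}$. The edge-case handling (via $M_{t,i}=0$ rather than the paper's $t<\tau_i$) is an immaterial difference.
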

\begin{proof}
  Fix $i \in \{1,\ldots,d\}$, and
  let $\tau_i$ be the first trial $t$ such that $x_{t,i} \neq 0$. This means
  that $\hat{S}_{t,i} = x_{t,i} = 0$ for all $t < \tau_i$, and the inequality
  is trivially satisfied for any $t < \tau_i$, as the left-hand side is zero, while
  the right-hand side is $\epsilon/t$. Thus, assume $t \geq \tau_i$.

  Fix $t$ and define $v = \frac{G_{t-1,i}}{\sqrt{S_{t-1,i}^2 + M_{t,i}^2}}$ and
  $q = \frac{g_t x_{t,i}}{\sqrt{S_{t-1,i}^2 + M_{t,i}^2}}$. As
  $|q| \leq \frac{|g_t x_{t,i}|}{M_{t,i}} \leq \frac{|x_{t,i}|}{\max_{j \leq t}
  |x_{j,i}|} \leq 1$, we can apply Lemma \ref{lem:main_lemma_alg_one} to such
  $v$ and $q$, which, after subtracting $1$ and multiplying by $\beta_{t,i}$
  on both sides, gives:
  \begin{align}
  \beta_{t,i}
  \frac{q \; \mathrm{sgn}(v)}{2} \left(e^{\frac{|v|}{2}} - 1\right)
    &+ \beta_{t,i}
  \left(e^{\frac{|v-q|}{2\sqrt{1+q^2}}} - \frac{|v-q|}{2\sqrt{1+q^2}} - 1\right)
    \nonumber \\
    &\leq \beta_{t,i}
  \left(e^{\frac{|v|}{2}} - \frac{|v|}{2} - 1 \right) + \beta_{t,i} q^2.
    \label{eq:after_applying_lemma}
  \end{align}
  Using the definition of the weight vector \eqref{eq:ScInOL_1_weights} we identify
  the first term on the left-hand side of \eqref{eq:after_applying_lemma}:
  \[
    \beta_{t,i}\frac{ q \; \mathrm{sgn}(v)}{2} \left(e^{|v|/2} - 1 \right)
    = w_{t,i} g_t x_{t,i}.
  \]
  Next, since:
  \[
    \frac{G_{t,i}}{\hat{S}_{t,i}} =
    \frac{G_{t,i}}{\sqrt{S_{t,i}^2 + M_{t,i}^2}}
    = \frac{G_{t-1,i} - g_t x_{t,i}}{
      \sqrt{S_{t-1,i}^2 + M_{t,i}^2 + (g_t x_{t,i})^2}}
    = \frac{v-q}{\sqrt{1+q^2}},
  \]
  the second term on the left-hand side of \eqref{eq:after_applying_lemma} is
  equal to $\psi_{t,i}(G_{t,i})$. Thus, \eqref{eq:after_applying_lemma} can be
  rewritten as:
  \[
    w_{t,i} g_t x_{t,i} + \psi_{t,i}(G_{t,i}) \leq
  \beta_{t,i}
    \left(e^{\frac{|v|}{2}} - \frac{|v|}{2} - 1 \right) + \beta_{t,i} q^2,
  \]
  and to finish the proof, it suffices to show that the two terms on the
  right-hand side
  are upper bounded, respectively, by $\psi_{t-1,i}(G_{t-1,i})$ and $\frac{\epsilon}{t}$.

  To bound $\beta_{t,i} q^2$ note that if $x_{t,i}=0$ then
  $\beta_{t,i} q^2=0$, whereas if $x_{t,i} \neq 0$ then
  by the definition of $\beta_{t,i}$:
  \[
  \beta_{t,i} q^2 = \beta_{t,i} \frac{(g_t x_{t,i})^2}{S_{t-1,i}^2 + M_{t,i}^2}
  \leq \epsilon \frac{S_{t-1,i}^2 + M_{t,i}^2}{x_{t,i}^2 t}
  \frac{(g_t x_{t,i})^2}{S_{t-1,i}^2 + M_{t,i}^2}
    \leq \frac{\epsilon g_t^2}{t} \leq \frac{\epsilon}{t}.
  \]
  To bound $\beta_{t,i}(e^{|v|/2} - |v|/2 - 1)$ by $\psi_{\tau_i-1,i}(G_{t-1,i})$
  note that both are zero if $t=\tau_i$ (because $G_{\tau_i-1,i}=0$ and $v = 0$).
  On the other hand, for $t > \tau_i$ we have:
  \[
   |v| = \frac{|G_{t-1,i}|}{\sqrt{S_{t-1,i}^2 + M_{t,i}^2}}
   \leq \frac{|G_{t-1,i}|}{\sqrt{S_{t-1,i}^2 + M_{t-1,i}^2}}
   = \frac{|G_{t-1,i}|}{\hat{S}_{t-1,i}},
  \]
  and by the monotonicity of $f(x) = e^x - x -1$:
  \[
  \beta_{t,i}(e^{|v|/2} - |v|/2 - 1)
  \leq \beta_{t,i} \left(e^{\frac{|G_{t-1,i}|}{2 \hat{S}_{t-1,i}}}
  - \frac{|G_{t-1,i}|}{2 \hat{S}_{t-1,i}}- 1 \right)
  \leq \psi_{t-1,i} (G_{t-1,i}),
  \]
  where in the last inequality we used $\beta_{t,i} \leq \beta_{t-1,i}$
  (which follows from the definition) and the fact that $e^x - x -1 \geq 0$ for all $x$.
\end{proof}

We are now ready to prove Theorem \ref{thm:alg_one}, which we restate here for
convenience:

\begin{theorem*}
For any $\u \in \mathbb{R}$ the regret of $\mathrm{ScInOL}_1$ is upper-bounded by:
\[
  \regret_T(\u)
  \leq \sum_{i=1}^d \left(2|u_i| \hat{S}_{T,i} \ln (1 + 2|u_i| \hat{S}_{T,i} \epsilon^{-1} T) + \epsilon(1 + \ln T )\right) = \sum_{i=1}^d \tilde{O}(|u_i| \hat{S}_{T,i}),
\]
where $\hat{S}_{T,i} = \sqrt{S_{T,i}^2 + M_{T,i}^2}$ and $\tilde{O}(\cdot)$ hides the
constants and logarithmic factors.
\end{theorem*}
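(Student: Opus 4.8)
The plan is to assemble the theorem from the three auxiliary results, with the per-coordinate progress bound (Lemma~\ref{lem:ScInOL_1_progress}) and the conjugate computation (Lemma~\ref{lem:conjugate_ScInOL_1}) doing all the heavy lifting. First I would invoke the linearization \eqref{eq:gradient_trick} and split it across coordinates, writing $\regret_T(\u) \le \sum_{i=1}^d \sum_{t=1}^T g_t x_{t,i}(w_{t,i}-u_i)$, so that it suffices to bound each coordinate's contribution $\tilde{R}_{T,i}(u_i) = \sum_t g_t x_{t,i} w_{t,i} + G_{T,i} u_i$ separately, using $G_{T,i} = -\sum_t g_t x_{t,i}$ to rewrite the comparator term.

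Next, for a fixed $i$ I would sum the progress inequality of Lemma~\ref{lem:ScInOL_1_progress} over $t=1,\ldots,T$. The potential terms telescope, and since $\hat{S}_{0,i}=0$ forces $\psi_{0,i}(G_{0,i})=0$ while $\sum_{t=1}^T \frac{1}{t} \le 1 + \ln T$, this yields $\sum_t g_t x_{t,i} w_{t,i} \le -\psi_{T,i}(G_{T,i}) + \epsilon(1+\ln T)$. Adding back the comparator term gives $\tilde{R}_{T,i}(u_i) \le \big(G_{T,i} u_i - \psi_{T,i}(G_{T,i})\big) + \epsilon(1+\ln T)$, and the bracketed expression is bounded by the Fenchel conjugate $\psi_{T,i}^*(u_i)$ directly from its definition as a supremum (for this one-sided inequality no convexity of $\psi_{T,i}$ is actually needed).

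Then I would apply Lemma~\ref{lem:conjugate_ScInOL_1} with $\alpha = \beta_{T,i}$ and $\gamma = 2\hat{S}_{T,i}$ — precisely the parameters appearing in $\psi_{T,i}$ — obtaining $\psi_{T,i}^*(u_i) \le 2|u_i|\hat{S}_{T,i}\ln\big(1 + 2|u_i|\hat{S}_{T,i}/\beta_{T,i}\big)$. Summing over $i=1,\ldots,d$ and collecting the $\epsilon(1+\ln T)$ overheads then produces the stated bound, and the final $\tilde{O}$ form follows by inspection.

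The one step deserving genuine care — and what I expect to be the main obstacle — is establishing the lower bound $\beta_{T,i} \ge \epsilon/T$ needed to replace $1/\beta_{T,i}$ by $\epsilon^{-1} T$ inside the logarithm. Since $\beta_{T,i}$ is a running minimum, I would verify that every candidate entering the minimum is at least $\epsilon/T$: using $M_{t,i} \ge |x_{t,i}|$, each term satisfies $\epsilon(S_{t-1,i}^2 + M_{t,i}^2)/(x_{t,i}^2 t) \ge \epsilon M_{t,i}^2/(x_{t,i}^2 t) \ge \epsilon/t \ge \epsilon/T$, while the initial value $\epsilon$ trivially exceeds $\epsilon/T$. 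This ``the maximum dominates the current feature'' observation is exactly the role played by updating $M_{t,i}$ before prediction; everything else in the argument is mechanical bookkeeping.
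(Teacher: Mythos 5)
Your proposal is correct and follows essentially the same route as the paper's proof: linearize via \eqref{eq:gradient_trick}, telescope Lemma~\ref{lem:ScInOL_1_progress}, pass to the Fenchel conjugate, apply Lemma~\ref{lem:conjugate_ScInOL_1} with $\alpha=\beta_{T,i}$ and $\gamma=2\hat{S}_{T,i}$, and finish with $\beta_{T,i}\geq\epsilon/T$. The only cosmetic difference is that you justify $\beta_{T,i}\geq\epsilon/T$ by noting every candidate entering the running minimum is at least $\epsilon/t$ (via $M_{t,i}^2\geq x_{t,i}^2$), whereas the paper phrases the identical argument as an induction on $t$.
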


\begin{proof}
  Applying Lemma \eqref{lem:ScInOL_1_progress} for a fixed $i \in \{1,\ldots,d\}$
  and all $t=1,\ldots,T$, and summing over trials gives:
  \[
    \sum_{t=1}^T w_{t,i} g_t x_{t,i} \leq - \psi_{T,i}(G_{T,i}) +
    \sum_{t=1}^T \frac{\epsilon}{t}
   \leq - \psi_{T,i}(G_{T,i}) + \epsilon\left(1 + \ln T \right),
  \]
  where we used $\psi_{0,i}(G_{0,i}) = 0$.
  By \eqref{eq:gradient_trick},
  \begin{align*}
    \regret_T(\u) &\leq \sum_{t=1}^T g_t \x_t^\top (\w_t - \u)
    =  \sum_{i=1}^d \left( \sum_{t=1}^T g_t x_{t,i}w_{t,i} +  G_{T,i} u_i \right) \\
    &\leq
    \sum_{i=1}^d \left( G_{T,i} u_i - \psi_{T,i}(G_{T,i}) \right)
    + d \epsilon \left(1 + \ln(T) \right) \\
    &\leq
    \sum_{i=1}^d \sup_{x} \left\{ x u_i -
    \psi_{T,i}(x) \right\}
    + d \epsilon \left(1 + \ln(T) \right) \\
    &\leq
    \sum_{i=1}^d 2 |u_i| \hat{S}_{T,i} \ln\left(1 + 2 |u_i| \hat{S}_{T,i} / \beta_{T,i}
    \right)
     + d \epsilon \left(1 + \ln(T) \right),
  \end{align*}
  where in the last inequality we used Lemma \ref{lem:conjugate_ScInOL_1}
  for each $i$
  with $\alpha = \beta_{T,i}$ and $\gamma = 2 \hat{S}_{T,i}$. To finish
  the proof, it suffices to show that $\beta_{T,i} \geq \frac{\epsilon}{T}$, which
  we do by induction on $t$. For $t=1$, we have by the definition
  $\beta_{t,i} = \epsilon$. Now, assume $\beta_{t-1,i} \geq \frac{\epsilon}{t-1}$,
  and we will show $\beta_{t,i} \geq \frac{\epsilon}{t}$.
  If $x_{t,i} = 0$, $\beta_{t,i} = \beta_{t-1,i} \geq \frac{\epsilon}{t-1}
  > \frac{\epsilon}{t}$; on the other hand, if $x_{t,i} \neq 0$, from the definition
  of $\beta_{t,i}$:
  \[
  \beta_{t,i} = \min\left\{\beta_{t-1,i}, \epsilon \frac{S_{t-1,i}^2
        + M_{t,i}^2}{x_{t,i}^2 t}\right\}
        \geq \min \left\{\frac{\epsilon}{t-1}, \epsilon \frac{x_{t,i}^2}{x_{t,i}^2 t} \right\}
        = \frac{\epsilon}{t},
  \]
    where we used $S_{t-1,i}^2 + M_{t,i}^2 \geq M_{t,i}^2 = \max_{j \leq t} x_{j,i}^2
    \geq x_{t,i}^2$.
\end{proof}

\section{Proof of Theorem~\ref{thm:alg_two}}
        \label{appx:alg_two}

        \begin{figure}[t]
        {\centering
        \begin{tikzpicture}[font=\scriptsize]
            \begin{axis}[
                height=6cm,
                width=8cm,
                grid=major,
            xlabel={$x$},
            ylabel={$y$},
            xmin=-2, xmax=2, ymin=-0.4, ymax=2,
            samples=50,
            legend cell align={left},
            smooth,
            legend style={at={(0.5,0.98)},anchor=north},
            ]
            \addplot[densely dashed,thick] {0.5 * x^2};
            \addlegendentry{$y=\frac{1}{2}x^2$}
            \addplot[densely dotted,thick] {sqrt(x^2) - 0.5};
            \addlegendentry{$y=|x| - \frac{1}{2}$}
            \addplot[black,thick] {ifthenelse(abs(x)<1, 0.5*x^2, abs(x) - 0.5)};
            \addlegendentry{$y=h(x)$}
            \end{axis}
        \end{tikzpicture}
        \par}
        \caption{Function $h(x)$}
          \label{fig:h}
        \end{figure}
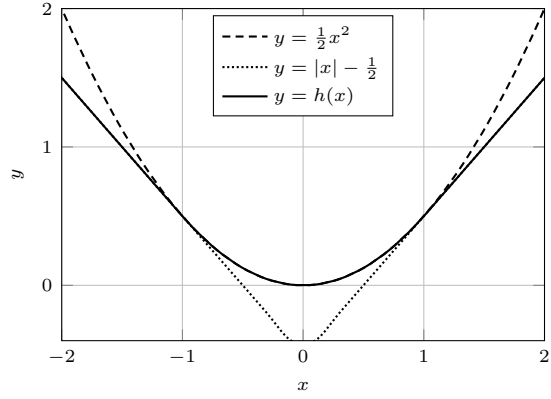

        Similarly as in the previous section, we proceed the proof of the theorem with
        several auxiliary results. Define:
        \begin{equation}
          h(x) = \left\{
            \begin{array}{ll}
              \frac{1}{2} x^2 & \qquad \text{for~} |x| \leq 1, \\
              |x| - \frac{1}{2} & \qquad \text{for~} |x| > 1
            \end{array}
            \right.
        \label{eq:h_function}
        \end{equation}
        (see Figure \ref{fig:h}).
        Note that $h(x) = h(|x|)$, and $h(|x|)$ is monotonic in $|x|$.
        Moreover, for all $x \in \mathbb{R}$:
        \begin{equation}
         |x| - \frac{1}{2} \leq h(x) \leq \frac{1}{2} x^2.
          \label{eq:bounds_on_h_x}
        \end{equation}
        The lower bound in \eqref{eq:bounds_on_h_x} is clearly satisfied for $|x| < 1$, while
        for $|x| \leq 1$ we have
        $h(x) - (|x| - \frac{1}{2}) = \frac{1}{2}(|x| - 1)^2 \geq 0$.
        On the other hand, the upper bound in \eqref{eq:bounds_on_h_x} is clearly
        satisfied for $|x| \leq 1$, while for $|x| > 1$
        we have $h(x) - \frac{1}{2} x^2 = -\frac{1}{2}(|x| - 1)^2 \leq 0$.

        \begin{lemma}
          Let $f(x) = \alpha e^{|x| / \gamma}$ with $\alpha, \gamma > 0$.
          Its Fenchel conjugate $f^*(u) = \sup_x \{ux - f(x)\}$
          satisfies $f^*(u) \leq |u| \gamma (\ln(|u|\gamma / \alpha) - 1)$ for all $u$.
          \label{lem:conjugate_ScInOL_2}
        \end{lemma}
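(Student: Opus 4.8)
The plan is to mirror the proof of Lemma~\ref{lem:conjugate_ScInOL_1}: exploit the symmetry of $f$, reduce the conjugate to a one-sided, one-dimensional maximization, solve it via the first-order condition, and keep careful track of the boundary. First I would note that $f(x) = \alpha e^{|x|/\gamma}$ is even in $x$, so for any $u$,
\[
  f^*(u) = \sup_x\{ux - f(x)\} = \sup_{x \geq 0}\{|u|\,x - \alpha e^{x/\gamma}\}.
\]
Writing $g(x) = |u|x - \alpha e^{x/\gamma}$, I observe that on the nonnegative half-line this coincides with the smooth function $\tilde g(x) = |u|x - \alpha e^{x/\gamma}$ defined for \emph{all} real $x$ (no absolute value in the exponent). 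Since the feasible set $\{x \geq 0\}$ is contained in $\mathbb{R}$ and $g$ agrees with $\tilde g$ there, we get the clean relaxation $f^*(u) = \sup_{x\geq 0} g(x) \leq \sup_{x\in\mathbb{R}} \tilde g(x)$.

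Next, for $u \neq 0$ I would solve the unconstrained problem. Setting $\tilde g'(x) = |u| - \tfrac{\alpha}{\gamma} e^{x/\gamma} = 0$ gives the maximizer $x^* = \gamma\ln(|u|\gamma/\alpha)$, and concavity of $\tilde g$ makes it a global maximum; substituting $e^{x^*/\gamma} = |u|\gamma/\alpha$ back in yields
\[
  \sup_{x\in\mathbb{R}}\tilde g(x) = |u|\gamma\ln\!\big(|u|\gamma/\alpha\big) - \alpha\cdot\tfrac{|u|\gamma}{\alpha} = |u|\gamma\big(\ln(|u|\gamma/\alpha) - 1\big),
\]
which is exactly the claimed bound. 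The case $u = 0$ is immediate since $f^*(0) = -\inf_x f(x) = -\alpha \leq 0$, while the right-hand side equals $0$ under the convention $0\cdot\ln 0 = 0$.

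The main point to get right --- and the one place this lemma genuinely differs from Lemma~\ref{lem:conjugate_ScInOL_1} --- is that the unconstrained maximizer $x^*$ need not be nonnegative: when $|u|\gamma < \alpha$ one has $x^* < 0$, so $x^*$ is not the maximizer of $\sup_{x\geq 0}g(x)$ (there the supremum is attained at the boundary $x=0$, with value $g(0) = -\alpha$). This is precisely why the statement is an inequality rather than the equality obtained in Lemma~\ref{lem:conjugate_ScInOL_1}, where the extra $+1$ forced $x^* \geq 0$ always. The relaxation ``$\sup_{x\geq 0} \leq \sup_{x\in\mathbb{R}}$'' sidesteps this subtlety entirely; if one instead prefers a direct argument, it suffices to verify separately that $-\alpha \leq |u|\gamma(\ln(|u|\gamma/\alpha)-1)$ in the regime $|u|\gamma/\alpha \leq 1$, which, after setting $z = |u|\gamma/\alpha \in [0,1]$, reduces to the elementary inequality $z\ln z - z \geq -1$ (the left-hand side is decreasing on $(0,1]$ with value $-1$ at $z=1$).
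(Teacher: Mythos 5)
Your proof is correct and follows essentially the same route as the paper's: use the symmetry of $f$ to reduce to $\sup_{x\geq 0}$, relax to the unconstrained supremum over all of $\mathbb{R}$, and evaluate the latter at the stationary point $x^* = \gamma\ln(|u|\gamma/\alpha)$. Your additional remarks on the $u=0$ case and on why the statement is an inequality (the maximizer $x^*$ can be negative) correctly identify the reason the paper also phrases this step as $\sup_{x\geq 0} g(x) \leq \sup_{x\in\mathbb{R}} g(x)$.
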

        \begin{proof}
          Since $f(x)$ is symmetric in $x$,
            $\sup_x \{ux - f(x)\} = \sup_{x \geq 0} \{|u| x - f(x)\} = \sup_{x \geq 0 } g(x)$,
            where $g(x) = |u| x - \alpha e^{x / \gamma}$.
          Setting the derivative of $g(x)$ to zero gives its unconstrained
          maximizer $x^* = \gamma \ln(|u|\gamma/\alpha)$, for which
          $g(x^*) = |u| \gamma (\ln(|u|\gamma / \alpha) - 1)$. The proof is finished by
          noticing that $\sup_{x \geq 0} g(x) \leq \sup_{x \in \mathbb{R}} g(x) = g(x^*)$.
        \end{proof}

        \begin{lemma}
        \label{lem:main_lemma_alg_two}
        For any $v \in \mathbb{R}$ and any $q \in [-1,1]$:
        \[
          \exp\left\{\frac{1}{2}h\left(\frac{v-q}{1+q^2}\right)
          - \frac{1}{2} h(v) - \frac{1}{2} q^2 \right\} \leq 1
          - \frac{1}{2} q \; \mathrm{sgn}(v) \min\{|v|,1\}
        \]
        \end{lemma}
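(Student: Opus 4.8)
The plan is to take logarithms and reduce the claim to a purely algebraic inequality about $h$, and then dispatch that by a short case analysis on the size of $v$. First note that the right-hand side is always positive: since $|q|\le1$, $\mathrm{sgn}(v)\in\{-1,0,1\}$ and $\min\{|v|,1\}\in[0,1]$, the subtracted term has absolute value at most $\tfrac12$, so the right-hand side is $\ge\tfrac12>0$ and taking logarithms is legitimate. The map $(v,q)\mapsto(-v,-q)$ leaves both sides unchanged (because $h$ is even and the product $q\,\mathrm{sgn}(v)$ is invariant), so I may assume $v\ge0$, whence $\mathrm{sgn}(v)\min\{|v|,1\}=m:=\min\{v,1\}$ and the right-hand side equals $1-\tfrac12 qm$. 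Setting $z=\tfrac12 qm\le\tfrac12$, the elementary bound $1-z\ge e^{-z-z^2}$ (\citealp{PLGbook}, Lemma 2.4, already used above) lets me replace the right-hand side by $e^{-z-z^2}$, so that it suffices to show that the exponent on the left is at most $-z-z^2$. After multiplying through by $2$, this is the log-free inequality
\[
h\!\left(\frac{v-q}{1+q^2}\right)\le h(v)-qm+q^2-\tfrac12 q^2m^2,
\]
which I will call $(\star)$ (with $m=\min\{v,1\}$), and writing $a:=\tfrac{v-q}{1+q^2}$ for the scaled argument.

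For $0\le v\le1$ we have $m=v$ and $h(v)=\tfrac12 v^2$, and the right-hand side of $(\star)$ rearranges exactly to $\tfrac12\big[(v-q)^2+q^2(1-v^2)\big]$. I bound the left-hand side by the universal estimate $h(x)\le\tfrac12 x^2$ from \eqref{eq:bounds_on_h_x}, giving $h(a)\le\tfrac{(v-q)^2}{2(1+q^2)^2}$, and then $(\star)$ follows from the two trivialities $(1+q^2)^2\ge1$ and $1-v^2\ge0$, since $\tfrac{(v-q)^2}{(1+q^2)^2}\le(v-q)^2\le(v-q)^2+q^2(1-v^2)$.

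For $v>1$ we have $m=1$ and $h(v)=v-\tfrac12$, so $(\star)$ reads $h(a)\le v-\tfrac12-q+\tfrac12 q^2$, with $a>0$ because $v>1\ge q$. Here I split on which branch of $h$ the argument $a$ falls into. If $a>1$, then $h(a)=a-\tfrac12$ and the claim reduces to $a\le v-q+\tfrac12 q^2$, which holds since $a\le v-q$ (as $1+q^2\ge1$ and $v-q>0$) and $\tfrac12 q^2\ge0$. If instead $a\le1$, I use $h(a)\le\tfrac12 a^2=\tfrac{(v-q)^2}{2(1+q^2)^2}$ and argue by monotonicity in $v$: the slack $F(v)=\big(v-\tfrac12-q+\tfrac12 q^2\big)-\tfrac{(v-q)^2}{2(1+q^2)^2}$ has $F'(v)=1-\tfrac{v-q}{(1+q^2)^2}$; since $a$ increases in $v$, being in this sub-case at a given $v$ forces $a\le1$ (hence $v'-q\le1+q^2$) for all $v'\in[1,v]$, so $F'\ge0$ there and it suffices to check $v=1$, where $F(1)=\tfrac12(1-q)^2\big(1-\tfrac1{(1+q^2)^2}\big)\ge0$.

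I expect the main obstacle to be precisely the large-$v$ regime: the naive quadratic bound $h(x)\le\tfrac12 x^2$ becomes useless once $a>1$, because its right side then grows quadratically in $v$ while the target grows only linearly, so one is forced to switch to the linear branch $h(a)=a-\tfrac12$ there. This is exactly why the case split must be made on $a\lessgtr1$ rather than on $v$ alone, and managing this branch bookkeeping (together with verifying that the quadratic sub-case reduces to its $v=1$ endpoint) is the only nontrivial part; every individual sub-case then collapses to $1+q^2\ge1$ plus the nonnegativity of $1-v^2$ or of $\tfrac12 q^2$, so the argument is entirely elementary once the logarithmic reduction to $(\star)$ is in place.
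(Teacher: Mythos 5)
Your proof is correct for the statement as literally written, and it follows essentially the same route as the paper's: the same $(v,q)\mapsto(-v,-q)$ symmetry reduction to $v\geq 0$, the same use of $1-x \geq e^{-x-x^2}$ for $x \leq \frac{1}{2}$ to strip the exponential and reduce to the polynomial inequality $h(\cdot) \leq h(v) - qm + q^2 - \frac{1}{2}q^2m^2$ with $m = \min\{v,1\}$ (this is exactly the paper's \eqref{eq:to_prove_main_lemma_two}), and the same three-case skeleton. The only structural difference is the last sub-case ($v \geq 1$ with the argument of $h$ below $1$): the paper notes the slack is convex in $v$ and checks both endpoints $v=1$ and $v = q+\sqrt{1+q^2}$, whereas you show the slack $F$ is nondecreasing on the relevant interval and check only $v=1$; both verifications are two lines, so this is a cosmetic variant, and your computations in all cases are correct. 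One caveat you could not have seen blind: the argument $\frac{v-q}{1+q^2}$ in the lemma statement is a typo. The paper's own proof works with $\widetilde{v} = |v-q|/\sqrt{1+q^2}$, and the downstream application in Lemma \ref{lem:ScInOL_2_progress} identifies the argument with $G_{t,i}/\hat{S}_{t,i} = (v-q)/\sqrt{1+q^2}$, so the $\sqrt{1+q^2}$ version is the one actually needed; since $h$ is monotone in $|x|$ and $|v-q|/(1+q^2) \leq |v-q|/\sqrt{1+q^2}$, the version you proved is strictly weaker and would leave a gap in the proof of Theorem \ref{thm:alg_two}. Fortunately your argument adapts verbatim: every $(1+q^2)^2$ becomes $1+q^2$; in the linear branch you still have $\widetilde{v} \leq v-q$ because $\sqrt{1+q^2} \geq 1$; the sub-case condition becomes $v-q \leq \sqrt{1+q^2}$, which still yields $F'(v') = 1 - \frac{v'-q}{1+q^2} \geq 1 - \frac{1}{\sqrt{1+q^2}} \geq 0$; and the endpoint value becomes $F(1) = \frac{1}{2}(1-q)^2\bigl(1 - \frac{1}{1+q^2}\bigr) \geq 0$.
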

        \begin{proof}
        It suffices to prove the lemma for $v \geq 0$. Indeed, the inequality holds
        for some $v \geq 0$ and $q \in [-1,1]$ if and only if it holds for $-v$ and $-q$.
        Denote:
        \[
          \vt = \frac{|v-q|}{\sqrt{1+q^2}}.
        \]
        In this notation and with the assumption $v \geq 0$, the inequality translates
        to:
          \begin{equation}
            e^{\frac{1}{2}(h(\vt) - h(v) - q^2)} \leq 1 - \frac{1}{2} q
            \min\{v,1\}.
            \label{eq:lemma_inequality_translated}
          \end{equation}
          To prove \eqref{eq:lemma_inequality_translated}, it suffices
          to show that:
          \begin{equation}
            h(\vt) - h(v) - q^2 \leq - q \min\{v,1\} - \frac{1}{2} (q \min\{v,1\})^2
            \label{eq:to_prove_main_lemma_two},
          \end{equation}
          because \eqref{eq:to_prove_main_lemma_two} together with $q \leq 1$ and
          inequality $e^{-x -x^2} \leq 1 - x$ for $x \leq \frac{1}{2}$
          (see, e.g., \citep{PLGbook}, Section A.1.2) implies \eqref{eq:lemma_inequality_translated}.

          We will split the proof of \eqref{eq:to_prove_main_lemma_two}
          into three sub-cases: (i) $v \leq 1$,
        (ii) $v \geq 1$ and $\vt \geq 1$, (iii) $v \geq 1$ and $\vt < 1$.

        \paragraph{Case (i): $v \leq 1$.}
        From the definition, $h(v) = \frac{1}{2} v^2$ and by \eqref{eq:bounds_on_h_x}
        we upper bound $h(\vt) \leq \frac{1}{2} \vt^2$. Using $\vt \leq |v-q|$ we have:
        \[
        h(\vt)-h(v)-q^2
        \leq \frac{1}{2} \vt^2 - \frac{1}{2}v^2 - q^2
        \leq \frac{1}{2}(v-q)^2 - \frac{1}{2}v^2 - q^2
        = -vq - \frac{1}{2}q^2
          \leq -vq - \frac{1}{2} v^2 q^2,
        \]
          and since $\min\{v,1\} = v$, this implies \eqref{eq:to_prove_main_lemma_two}.

        \paragraph{Case (ii): $v \geq 1$ and $\vt \geq 1$.}
        As $q \leq 1 \leq v$, we have $|v-q| = v-q$, and by the definition,
          $h(v) = v - \frac{1}{2}$,
          $h(\vt) = \vt - \frac{1}{2}$. Therefore:
        \[
        h(\vt)-h(v)-q^2 = \vt - v -q^2
        \leq v-q -v -q^2  \leq -q - q^2 / 2,
        \]
        where in the first inequality we used $\vt \leq |v-q| = v-q$.
          As $\min\{v,1\} = 1$, this implies \eqref{eq:to_prove_main_lemma_two}.

        \paragraph{Case (iii): $v \geq 1$ and $\vt < 1$.}
        We have:
        \[
         \vt < 1 \; \iff \;
         \frac{(v-q)^2}{1+q^2} \leq 1 \; \iff \;
         v^2 - 2vq - 1 \leq 0 \; \iff \; v \leq q + \sqrt{1+q^2},
        \]
        where the last equivalence follows from solving a quadratic inequality with respect
        to $v \geq 1$ for fixed $q$.
        We now note that function:
        \[
          g(v) = h(\vt) - h(v) - q^2 = \frac{1}{2} \vt^2 - \Big(v - \frac{1}{2}\Big) - q^2
          = \frac{(v-q)^2}{2(1+q^2)} - v - q^2 + \frac{1}{2}
        \]
        is convex in $v$ and hence it is maximized at the boundaries $\{1,q + \sqrt{1+q^2}\}$
        of the allowed range of $v$.
        When $v=1$, we have:
        \[
          g(v) = \frac{(1-q)^2}{2(1+q^2)} - 1 - q^2 + \frac{1}{2}
        \leq \frac{1}{2}(1-q)^2 - q^2 - \frac{1}{2}
        = -q - \frac{1}{2}q^2,
        \]
        whereas if $v = q + \sqrt{1+q^2}$, we have
        \[
          g(v) = \frac{1}{2} - \left(q + \sqrt{1+q^2}\right) - q^2 + \frac{1}{2}
          \leq -q -q^2 \leq -q  - \frac{1}{2} q^2,
        \]
        so that $g(v) \leq -q - \frac{1}{2} q^2$ in the entire range of allowed values of $v$.
          As $\min\{v,1\} = 1$, this implies \eqref{eq:to_prove_main_lemma_two}.
        \end{proof}

        Before stating further results, we summarize the notation: for $i=1,\ldots,d$
        and $t=1,\ldots,T$,
          \[
            M_{t,i} = \max_{j \leq t} |x_{j,i}|, \quad
            G_{t,i} = -\sum_{j \leq t} g_j x_{j,i}, \quad
            S^2_{t,i} = \sum_{j \leq t} (g_j x_{j,i})^2, \quad
            \eta_{t,i} = \epsilon - \sum_{j \leq t} g_t x_{t,i} w_{t,i},
          \]
          with the convention $M_{0,i}=G_{0,i}=S^2_{0,i}=0$ and $\eta_{0,i} = \epsilon$.
          As before, we also use $\hat{S}^2_{t,i} = S^2_{t,i} + M^2_{t,i}$.
          The weight vector at trial $t$ is given by:
          \begin{equation}
            w_{t,i} = \frac{\mathrm{sgn}(G_{t-1,i}) \min\left\{\frac{|G_{t-1,i}|}{
              \sqrt{S_{t-1,i}^2 + M_{t,i}^2}}, 1\right\}}{2 \sqrt{S_{t-1,i}^2 + M_{t,i}^2}}
            \eta_{t-1,i}
            \label{eq:ScInOL_2_weights}
          \end{equation}
          as long as $M_{t,i} > 0$; if $M_{t,i} = 0$,
          we set $w_{t,i}=0$.

        \begin{lemma}
          Define:
          \[
            \psi_{t,i}(x) =
            \left\{
              \begin{array}{ll}
                e^{\frac{1}{2} h \big(\frac{x}{\hat{S}_{t,i}}\big)} &\quad
                \text{for~~} \hat{S}_{t,i} \neq 0, \\
              1 &\quad \text{for~~} \hat{S}_{t,i} = 0,
             \end{array}
            \right.
          \]
          with $h(\cdot)$ defined in \eqref{eq:h_function}.
          For any $i=1,\ldots,d$, let $\tau_i$ be the first
          trial in which $x_{t,i} \neq 0$.
          We have for any $=1,\ldots,d$ and any $t=\tau_i,\ldots,T$:
          \[
            \frac{\eta_{t,i}}{\eta_{t-1,i}} \geq \frac{\psi_{t,i}(G_{t,i})}
            {\psi_{t-1,i}(G_{t-1,i})} e^{-\delta_{t,i}},
          \]
          where $\delta_{t,i} = \frac{(g_t x_{t,i})^2}{2 (S_{t-1,i}^2 + M_{t,i}^2)}$
          \label{lem:ScInOL_2_progress}
        \end{lemma}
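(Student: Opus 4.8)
The plan is to reduce the statement to the one-dimensional inequality already established in Lemma~\ref{lem:main_lemma_alg_two}, mirroring the structure of the $\mathrm{ScInOL}_1$ analysis. First I would introduce the normalized quantities $v = G_{t-1,i}/\sqrt{S_{t-1,i}^2+M_{t,i}^2}$ and $q = g_t x_{t,i}/\sqrt{S_{t-1,i}^2+M_{t,i}^2}$, noting that $\mathrm{sgn}(G_{t-1,i}) = \mathrm{sgn}(v)$ and, crucially, that $|q|\le 1$. This last fact is exactly the hypothesis needed to invoke Lemma~\ref{lem:main_lemma_alg_two}, and it follows from $|g_t x_{t,i}| \le |x_{t,i}| \le M_{t,i} \le \sqrt{S_{t-1,i}^2+M_{t,i}^2}$, using $|g_t|\le 1$ together with the fact that $M_{t,i}$ has already been updated before the prediction.

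Next I would compute the reward ratio explicitly. From the update $\eta_{t,i} = \eta_{t-1,i} - g_t x_{t,i} w_{t,i}$ and the weight formula \eqref{eq:ScInOL_2_weights}, the common factor $\eta_{t-1,i}$ cancels, yielding $\eta_{t,i}/\eta_{t-1,i} = 1 - \tfrac12 q\,\mathrm{sgn}(v)\min\{|v|,1\}$, which is precisely the right-hand side of Lemma~\ref{lem:main_lemma_alg_two} (and, as an aside, is always at least $\tfrac12$, so the reward stays positive). I would then turn to the potential ratio: the $\mathrm{ScInOL}_1$ proof of Lemma~\ref{lem:ScInOL_1_progress} already establishes $G_{t,i}/\hat{S}_{t,i} = (v-q)/\sqrt{1+q^2}$, so that $h(G_{t,i}/\hat{S}_{t,i}) = h(\vt)$ with $\vt = |v-q|/\sqrt{1+q^2}$, giving $\psi_{t,i}(G_{t,i}) = e^{\frac12 h(\vt)}$. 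I also observe that $\delta_{t,i} = (g_t x_{t,i})^2/(2(S_{t-1,i}^2+M_{t,i}^2)) = q^2/2$.

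The one genuinely delicate step is that the denominator in $\psi_{t-1,i}$ uses $\hat{S}_{t-1,i} = \sqrt{S_{t-1,i}^2+M_{t-1,i}^2}$, whereas $v$ is normalized by the larger quantity $\sqrt{S_{t-1,i}^2+M_{t,i}^2}$ (the weight having pre-updated $M$). The hard part will be reconciling this mismatch: since $M_{t,i}\ge M_{t-1,i}$ we have $|G_{t-1,i}|/\hat{S}_{t-1,i} \ge |v|$, and using that $h(\cdot)=h(|\cdot|)$ is monotone in the magnitude of its argument (the property recorded right after \eqref{eq:h_function}) gives $\psi_{t-1,i}(G_{t-1,i}) \ge e^{\frac12 h(v)}$. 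This lower bound on the denominator is exactly what is needed to push the potential ratio up to $e^{\frac12(h(\vt)-h(v))}$.

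With these pieces in hand, the target inequality reduces to $e^{\frac12(h(\vt)-h(v))}\,e^{-\delta_{t,i}} = e^{\frac12(h(\vt)-h(v)-q^2)} \le 1 - \tfrac12 q\,\mathrm{sgn}(v)\min\{|v|,1\}$, which is precisely Lemma~\ref{lem:main_lemma_alg_two} in the $\vt$ notation, so applying it closes the argument. Finally, for the boundary trial $t=\tau_i$ one checks separately that $v=0$ and $\psi_{\tau_i-1,i}=1=e^{\frac12 h(0)}$, so that the bound holds trivially and no monotonicity correction is required there.
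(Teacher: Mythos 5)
Your proposal is correct and follows essentially the same route as the paper's own proof: the same normalized quantities $v$ and $q$ with $|q|\le 1$ via the pre-updated $M_{t,i}$, the same identification of the reward ratio with $1-\tfrac12 q\,\mathrm{sgn}(v)\min\{|v|,1\}$ and of $\delta_{t,i}$ with $q^2/2$, the same reduction to Lemma~\ref{lem:main_lemma_alg_two}, and the same resolution of the $M_{t-1,i}$ versus $M_{t,i}$ mismatch via the monotonicity of $h(|\cdot|)$, including the separate check at $t=\tau_i$. No gaps.
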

        \begin{proof}

          Fix $i$ and $t \geq \tau_i$,
          and define $v = \frac{G_{t-1,i}}{\sqrt{S_{t-1,i}^2 + M_{t,i}^2}}$ and
          $q = \frac{g_t x_{t,i}}{\sqrt{S_{t-1,i}^2 + M_{t,i}^2}}$. As
          $|q| \leq \frac{|g_t x_{t,i}|}{M_{t,i}} \leq 1$,
          we can apply Lemma \ref{lem:main_lemma_alg_two} to such
          $v$ and $q$, which gives:
          \begin{equation}
          e^{\frac{1}{2}h\left(\frac{v-q}{1+q^2}\right)
          - \frac{1}{2} h(v) - \frac{1}{2} q^2 } \leq 1
          - \frac{1}{2} q \; \mathrm{sgn}(v) \min\{|v|,1\}
          \label{eq:intermediate_after_lemma}
          \end{equation}
          Using the definition of weight vector \eqref{eq:ScInOL_2_weights}, we identify
          the right-hand side of \eqref{eq:intermediate_after_lemma} with
          $1 - \frac{g_t x_{t,i} w_{t,i}}{\eta_{t-1,i}} = \frac{\eta_{t,i}}{\eta_{t-1,i}}$.
          Since $\frac{1}{2} q^2 = \delta_{t,i}$ and
          $\frac{G_{t,i}}{\hat{S}_{t,i}} = \frac{v-q}{\sqrt{1+q^2}}$ (see the proof
          of Lemma \ref{lem:ScInOL_1_progress}),
          we also identify the left-hand side of \eqref{eq:intermediate_after_lemma}
          with $\psi_{t,i}(G_{t,i}) e^{-\frac{1}{2} h(v)} e^{-\delta_{t,i}}$. Hence,
          \eqref{eq:intermediate_after_lemma} can be rewritten as:
          \[
          \frac{\eta_{t,i}}{\eta_{t-1,i}}
           \geq \frac{\psi_{t,i}(G_{t,i})}{e^{\frac{1}{2} h(v)}} e^{-\delta_{t,i}},
          \]
          and thus to prove the lemma, it suffices to show:
          \begin{equation}
          e^{\frac{1}{2} h(v)} \leq \psi_{t-1,i}(G_{t-1,i}).
            \label{eq:inequality_to_show_ScInOL_2}
          \end{equation}
          When $t=\tau_i$,
          we have $v=0$ as well as $G_{t-1,i}=0$, and \eqref{eq:inequality_to_show_ScInOL_2}
          holds as its both sides are equal to $1$. For $t > \tau_i$,
          \eqref{eq:inequality_to_show_ScInOL_2} reduces to $h(v) \leq h(G_{t-1,i}/\hat{S}_{t-1,i})$,
          which holds because:
          \[
            |v| =\frac{|G_{t-1,i}|}{\sqrt{S_{t-1,i}^2 + M_{t,i}^2}}
            \leq \frac{|G_{t-1,i}|}{\sqrt{S_{t-1,i}^2 + M_{t-1,i}^2}}
            = \frac{|G_{t-1,i}|}{\hat{S}_{t-1,i}},
          \]
          and $h(x) = h(|x|)$ is monotonic in $|x|$.
         \end{proof}

        We are now ready to prove Theorem \ref{thm:alg_two}, which we restate here for
        convenience:

        \begin{theorem*}
        For any $\u \in \mathbb{R}$ the regret of $\mathrm{ScInOL}_2$ is upper-bounded by:
        \[
          \regret_T(\u)
          \leq d \epsilon +
          \sum_{i=1}^d 2|u_i| \hat{S}_{T,i} \left( \ln (3|u_i| \hat{S}^3_{T,i} \epsilon^{-1} /
          x_{\tau_i,i}^2) - 1 \right),
        \]
        where $\hat{S}_{T,i} = \sqrt{S_{T,i}^2 + M_{T,i}^2}$
         and
        $\tau_i = \min\{t \colon |x_{t,i}| \neq 0\}$.
        \end{theorem*}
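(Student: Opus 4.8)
The plan is to mirror the proof of Theorem~\ref{thm:alg_one}, but to work multiplicatively with the reward $\eta_{t,i}$ instead of additively with the potential, exploiting the regret-reward duality sketched in the main text. Fix a coordinate $i$ whose feature is not identically zero (coordinates with $x_{t,i}=0$ for all $t$ contribute nothing, since then $w_{t,i}=0$ and $G_{T,i}=0$, so they may be dropped), and let $\tau_i$ be the first trial with $x_{t,i}\neq 0$. First I would chain Lemma~\ref{lem:ScInOL_2_progress} multiplicatively over $t=1,\ldots,T$; the trials $t<\tau_i$ are trivial (both ratios equal $1$ and $\delta_{t,i}=0$), so the product telescopes, and using $\eta_{0,i}=\epsilon$ and $\psi_{0,i}(G_{0,i})=1$ gives
\[
  \eta_{T,i} \;\geq\; \epsilon\,\psi_{T,i}(G_{T,i})\,e^{-\Delta_{T,i}},
  \qquad \Delta_{T,i}=\sum_{t=\tau_i}^T \delta_{t,i}.
\]
Recalling $\eta_{T,i}=\epsilon-\sum_t g_t x_{t,i} w_{t,i}$ and lower bounding the potential via $h(y)\geq|y|-\frac{1}{2}$ from \eqref{eq:bounds_on_h_x}, this rearranges into $\sum_t g_t x_{t,i} w_{t,i}\leq\epsilon-f(G_{T,i})$ with $f(x)=\epsilon\,e^{-\Delta_{T,i}-1/4}\,e^{|x|/(2\hat{S}_{T,i})}$.

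Next I would insert this into \eqref{eq:gradient_trick}. Decoupling the regret coordinate-wise gives $\regret_T(\u)\leq\sum_i\big(\sum_t g_t x_{t,i} w_{t,i}+G_{T,i}u_i\big)$, and the bound above lets me eliminate the comparator by a supremum, $G_{T,i}u_i-f(G_{T,i})\leq\sup_x\{u_i x-f(x)\}=f^*(u_i)$. Since $f$ is exactly of the form $\alpha e^{|x|/\gamma}$ with $\alpha=\epsilon e^{-\Delta_{T,i}-1/4}>0$ and $\gamma=2\hat{S}_{T,i}>0$, Lemma~\ref{lem:conjugate_ScInOL_2} yields $f^*(u_i)\leq 2|u_i|\hat{S}_{T,i}\big(\ln(2\epsilon^{-1}|u_i|\hat{S}_{T,i}e^{1/4+\Delta_{T,i}})-1\big)$, so that $\regret_T(\u)\leq d\epsilon+\sum_i f^*(u_i)$. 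This already matches the target, except for the factor $e^{\Delta_{T,i}}$ hiding inside the logarithm.

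The main obstacle is therefore controlling $\Delta_{T,i}$, which a priori can be $\Omega(T)$. Writing $a_t=S_{t-1,i}^2+M_{t,i}^2$ and $b_t=\hat{S}_{t,i}^2=S_{t,i}^2+M_{t,i}^2$, one has $\delta_{t,i}=\frac{1}{2}(b_t/a_t-1)$. The two key observations are: (a) $b_t-a_t=(g_t x_{t,i})^2\leq x_{t,i}^2\leq M_{t,i}^2\leq a_t$, so $b_t/a_t\in[1,2]$, whence the elementary inequality $\frac{1}{2}(r-1)\leq\ln r$ on $[1,2]$ gives $\delta_{t,i}\leq\ln(b_t/a_t)$; and (b) since $M_{t,i}$ is nondecreasing, $a_{t+1}\geq b_t$, so $\sum_{t=\tau_i}^T\ln(b_t/a_t)$ telescopes down to $\ln(b_T/a_{\tau_i})$. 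As $a_{\tau_i}=x_{\tau_i,i}^2$ (at the first nonzero trial $S_{\tau_i-1,i}^2=0$ and $M_{\tau_i,i}^2=x_{\tau_i,i}^2$), this proves $\Delta_{T,i}\leq\ln(\hat{S}_{T,i}^2/x_{\tau_i,i}^2)$. Finally I would substitute $e^{\Delta_{T,i}}\leq\hat{S}_{T,i}^2/x_{\tau_i,i}^2$ into the logarithm above, absorb the constant using $2e^{1/4}\leq 3$, and sum over $i$ to reach the stated bound; the degenerate case $u_i=0$ is harmless since there $f^*(0)=-\alpha\leq 0$.
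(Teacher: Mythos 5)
Your proposal is correct and follows essentially the same route as the paper's proof: chaining Lemma~\ref{lem:ScInOL_2_progress} multiplicatively, lower-bounding $h$ by $|y|-\tfrac12$, eliminating the comparator via the Fenchel conjugate of $\alpha e^{|x|/\gamma}$ (Lemma~\ref{lem:conjugate_ScInOL_2}), and telescoping a logarithmic bound on $\Delta_{T,i}$ down to $\ln(\hat{S}_{T,i}^2/x_{\tau_i,i}^2)$. The only (immaterial) difference is in how you get $\delta_{t,i}\le\ln(b_t/a_t)$ --- you use $b_t/a_t\in[1,2]$ with $\tfrac12(r-1)\le\ln r$, while the paper shifts the denominator to $b_t$ and applies $\tfrac{a-b}{a}\le\ln\tfrac{a}{b}$ --- and both are valid.
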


        \begin{proof}
          Fixing $i \in \{1,\ldots,d\}$,
          applying Lemma \eqref{lem:ScInOL_1_progress} for $t=\tau_i,\ldots,T$, and multiplying
          over trials gives:
          \[
            \frac{\eta_{T,i}}{\eta_{\tau_i-1,i}} \geq \frac{\psi_{T,i}(G_{T,i})}
            {\psi_{\tau_i - 1,i}(G_{\tau_i-1,i})} e^{-\Delta_{T,i}},
          \]
          where we denoted $\Delta_{T,i} = \sum_{t = \tau_i}^T \delta_{t,i}$.
          From the definition of $\tau_i$, we have $\eta_{\tau_i-1,i} = \epsilon$
          and $\psi_{\tau_i-1,i} \equiv 1$. Using $\eta_{T,i} = \epsilon - \sum_{t \leq T}
          g_t x_{t,i} w_{t,i}$ we get:
          \[
            \sum_{t=1}^T g_t x_{t,i} w_{t,i}
            \leq \epsilon - \epsilon \psi_{T,i}(G_{T,i}) e^{-\Delta_{T,i}}
            \leq \epsilon - \epsilon e^{-\Delta_{T,i} + |G_{T,i}|/(2 \hat{S}_{T,i})- \frac{1}{4}
            },
          \]
          where we used \eqref{eq:h_function} to bound $h(x) \geq |x| - \frac{1}{2}$.
          By \eqref{eq:gradient_trick},
          \begin{align*}
            \regret_T(\u) &\leq \sum_{t=1}^T g_t \x_t^\top (\w_t - \u)
            =  \sum_{i=1}^d \left( \sum_{t=1}^T g_t x_{t,i}w_{t,i} +  G_{T,i} u_i \right) \\
            &\leq
            d \epsilon + \sum_{i=1}^d \left( G_{T,i} u_i -
            \epsilon e^{-\Delta_{T,i} -\frac{1}{4}} e^{|G_{T,i}|/(2 \hat{S}_{T,i})} \right)
            \\
            &\leq
            d \epsilon + \sum_{i=1}^d \sup_{x} \left\{ x u_i -
            \epsilon e^{-\Delta_{T,i} -\frac{1}{4}} e^{|x|/(2 \hat{S}_{T,i})} \right\}
            \\
            &\leq
            d \epsilon +
            \sum_{i=1}^d 2 |u_i| \hat{S}_{T,i} \left( \ln\left( 2 \epsilon^{-1}
            |u_i| \hat{S}_{T,i} e^{\frac{1}{4} + \Delta_{T,i}} \right) - 1 \right),
          \end{align*}
          where in the last inequality we used Lemma \ref{lem:conjugate_ScInOL_2} for each $i$
          with $\alpha = \epsilon e^{-\Delta_{T,i} - \frac{1}{4}}$ and $\gamma = 2 \hat{S}_{T,i}$.
          We will now show that
          \begin{equation}
            \Delta_{T,i} \leq \ln \left(\frac{\hat{S}_{T,i}^2}{x_{\tau_i,i}^2} \right),
            \label{eq:bound_on_Delta}
          \end{equation}
          which, together with $2 e^{1/4} \leq 3$ will finish the proof. To prove
          \eqref{eq:bound_on_Delta},
          we use $M_{t,i}^2 \geq x_{t,i}^2 \geq (g_t x_{t,i})^2
          = S_{t,i}^2 - S_{t-1,i}^2$ to get:
          \[
            \delta_{t,i} = \frac{(g_t x_{t,i})^2}{2(S_{t-1,i}^2 + M_{t,i}^2)}
            \leq \frac{(g_t x_{t,i})^2}{S_{t-1,i}^2 + 2 M_{t,i}^2}
            \leq \frac{(g_t x_{t,i})^2}{S_{t,i}^2 + M_{t,i}^2}
            = \frac{(M_{t,i}^2 + S_{t,i}^2) - (M_{t,i}^2 + S_{t-1,i}^2)}
            {S_{t,i}^2 + M_{t,i}^2}.
          \]
          Using $\frac{a-b}{a} \leq \ln \frac{a}{b}$ for any $a \geq b > 0$
          (which follows from the concavity of the logarithm):
          \[
            \delta_{t,i} \leq \ln \frac{M_{t,i}^2 + S_{t,i}^2}{M_{t,i}^2 + S_{t-1,i}^2}
            \leq \ln \frac{M_{t+1,i}^2 + S_{t,i}^2}{M_{t,i}^2 + S_{t-1,i}^2},
          \]
          where for $t=T$, we define $M_{T+1,i} = M_{T,i}$. Summing the above over
          trials $t = \tau_i,\ldots,T$:
          \[
            \Delta_{T,i}
            = \sum_{t=\tau_i}^T \delta_{t,i}
            \leq \ln\frac{M_{T+1,i}^2 + S_{T,i}^2}{M_{\tau_i,i}^2 + S_{\tau_i-1,i}^2}
            = \ln\frac{M_{T,i}^2 + S_{T,i}^2}{x_{\tau_i,i}^2}
            = \ln \frac{\hat{S}_{T,i}^2}{x_{\tau_i,i}^2},
          \]
          which was to be shown.
        \end{proof}

\section{Datasets}
    \label{appx:datasets}
    MNIST dataset is available at \href{http://yann.lecun.com/exdb/mnist}{Yann Lecun's page}.
    All other datasets are availableat the \href{https://archive.ics.uci.edu/ml/datasets.html}{UCI repository}.
    Scale is computed as a ratio of highest to lowest positive $L_2$ norms of features.
    \begin{table}[ht]

        \begin{tabular}{|l|l|l|l|l|}
            \hline
            \textbf{Name}&
            \textbf{features}&
            \textbf{records}&
            \textbf{classes}&
            \textbf{scale}\\
            \hline
            \hline
            Bank        &  53   &  41188    &  2    & 6.05E+05\\
            Census      & 381   &  299285   &  2    & 1.81E+06\\
            Covertype   &  54   &  581012   &  7    & 1.31E+06\\
            Madelon     & 500   &  2600     &  2    & 1.09E+00\\
            MNIST       & 728   &  70000    &  10   & 5.83E+03\\
            Shuttle     &   9   &  58000    &  7    & 7.46E+00\\
            \hline
        \end{tabular}
        \caption{Short summary of datasets}
    \end{table}

\section{Experiment: classification accuracy plots}
    \label{appx:accuracy}
    \begin{figure*}[ht!b]
            \centering
                \begin{tabular}{ccc}
                \includegraphics[width=.33\textwidth]{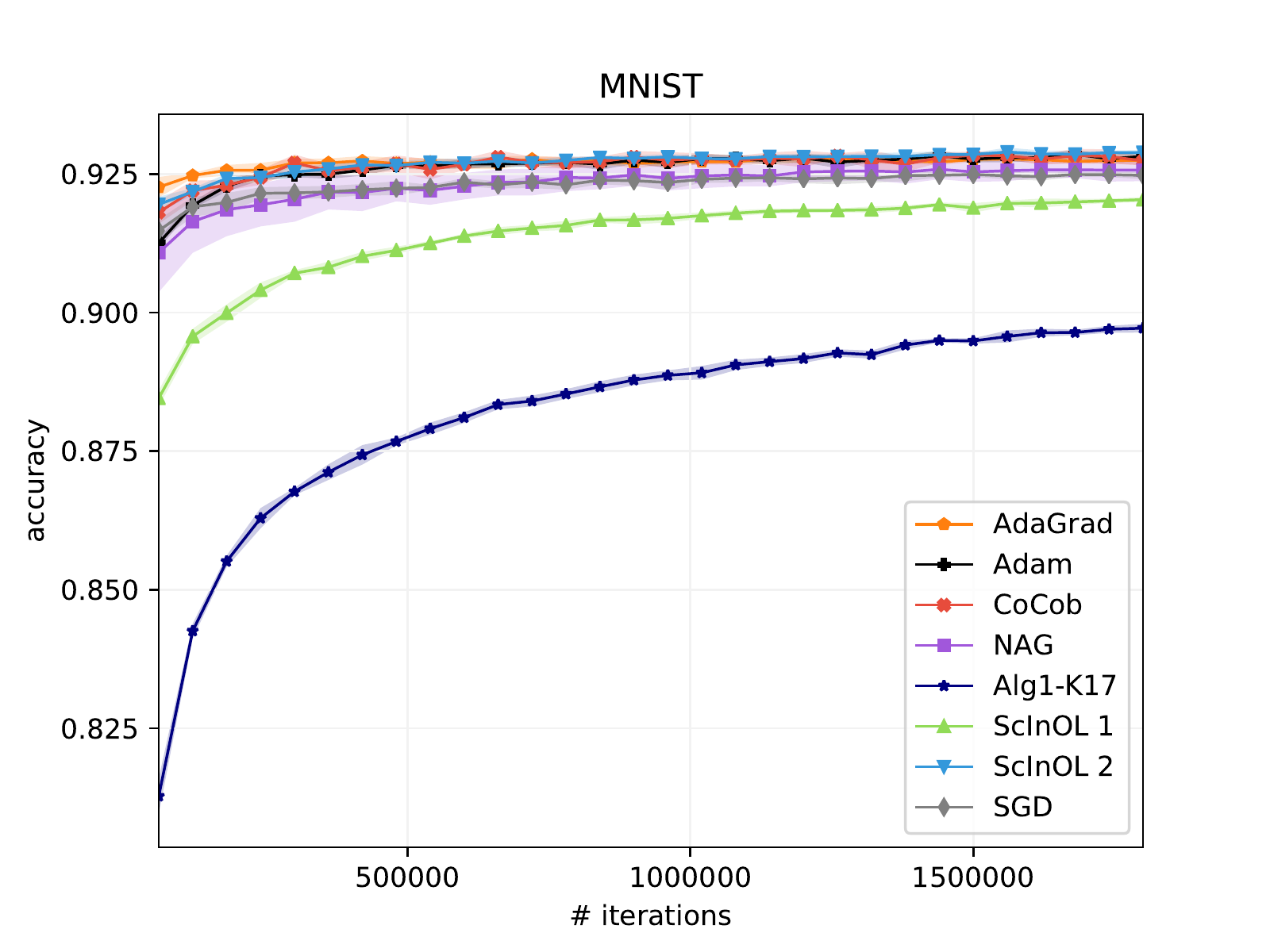} &
                \includegraphics[width=.33\textwidth]{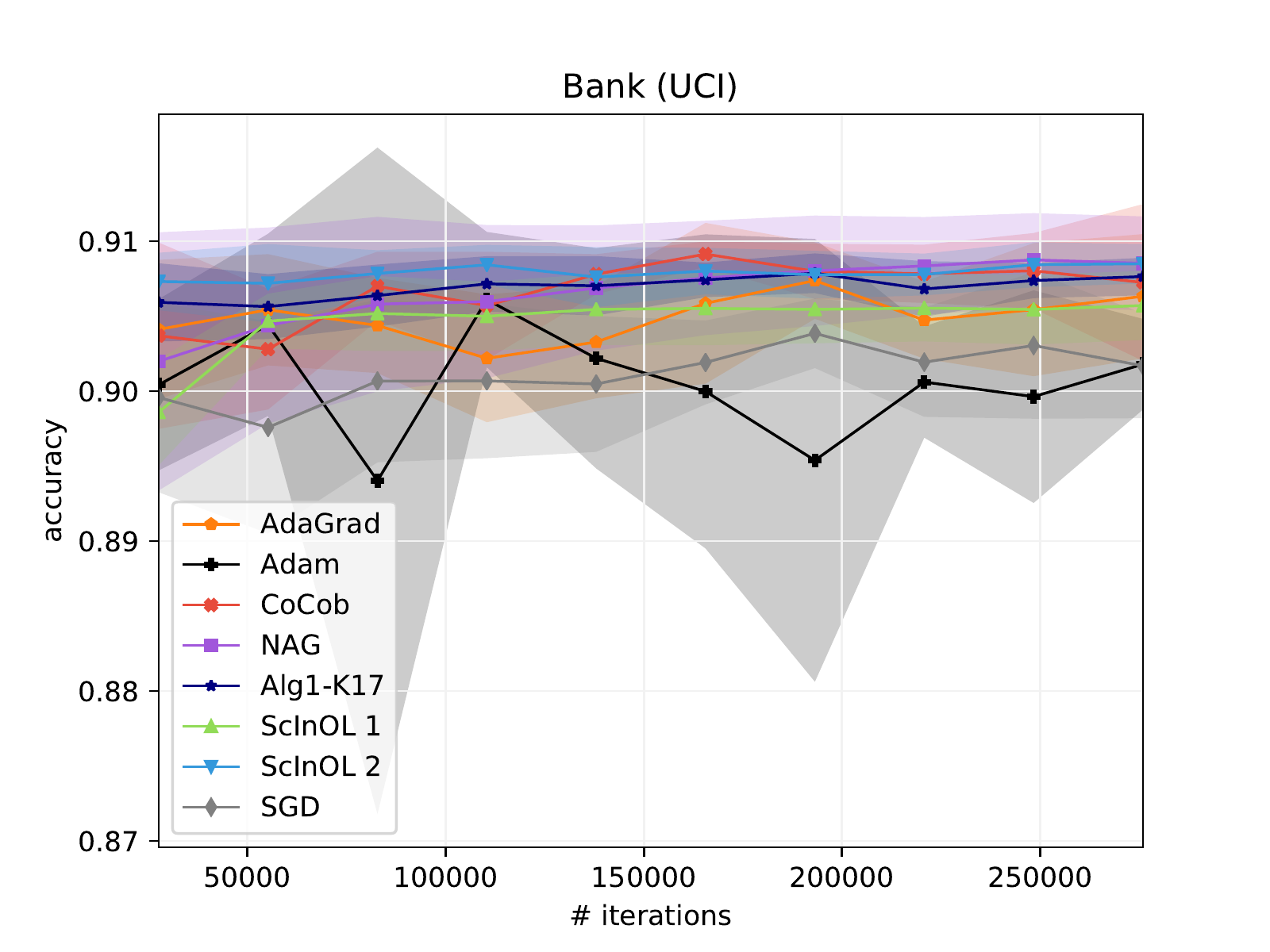} &
                \includegraphics[width=.33\textwidth]{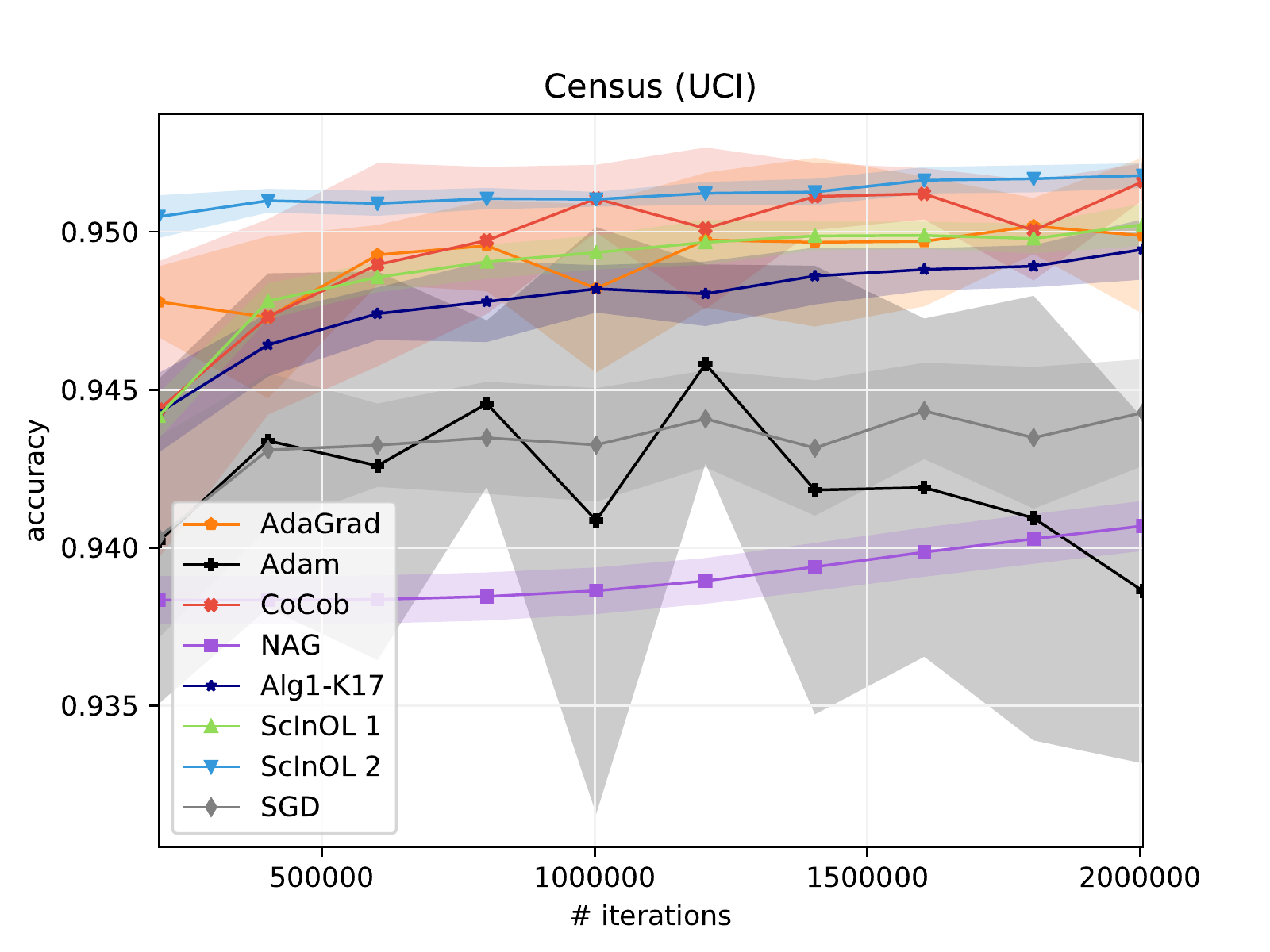}   \\
                \includegraphics[width=.33\textwidth]{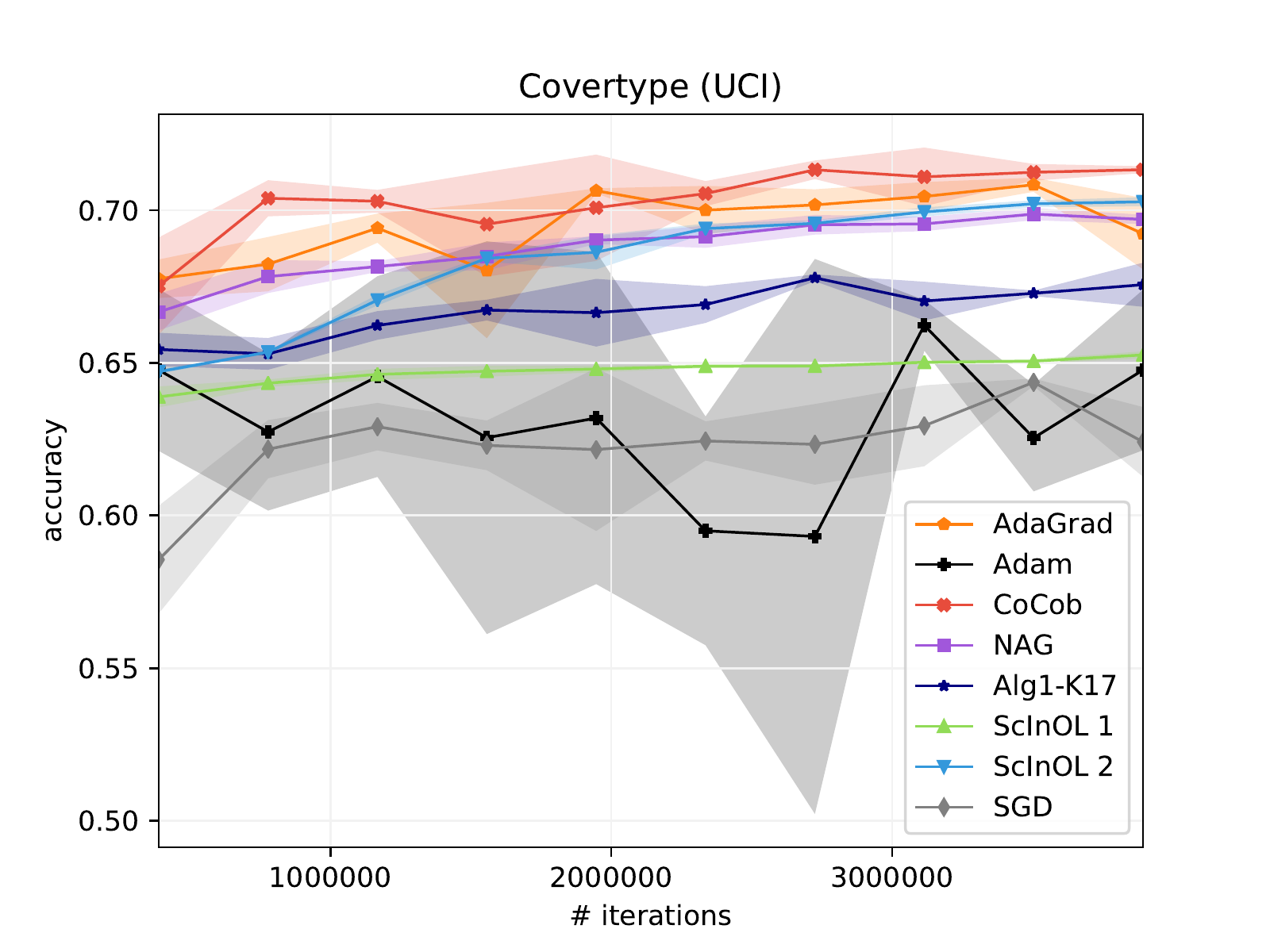} &
                \includegraphics[width=.33\textwidth]{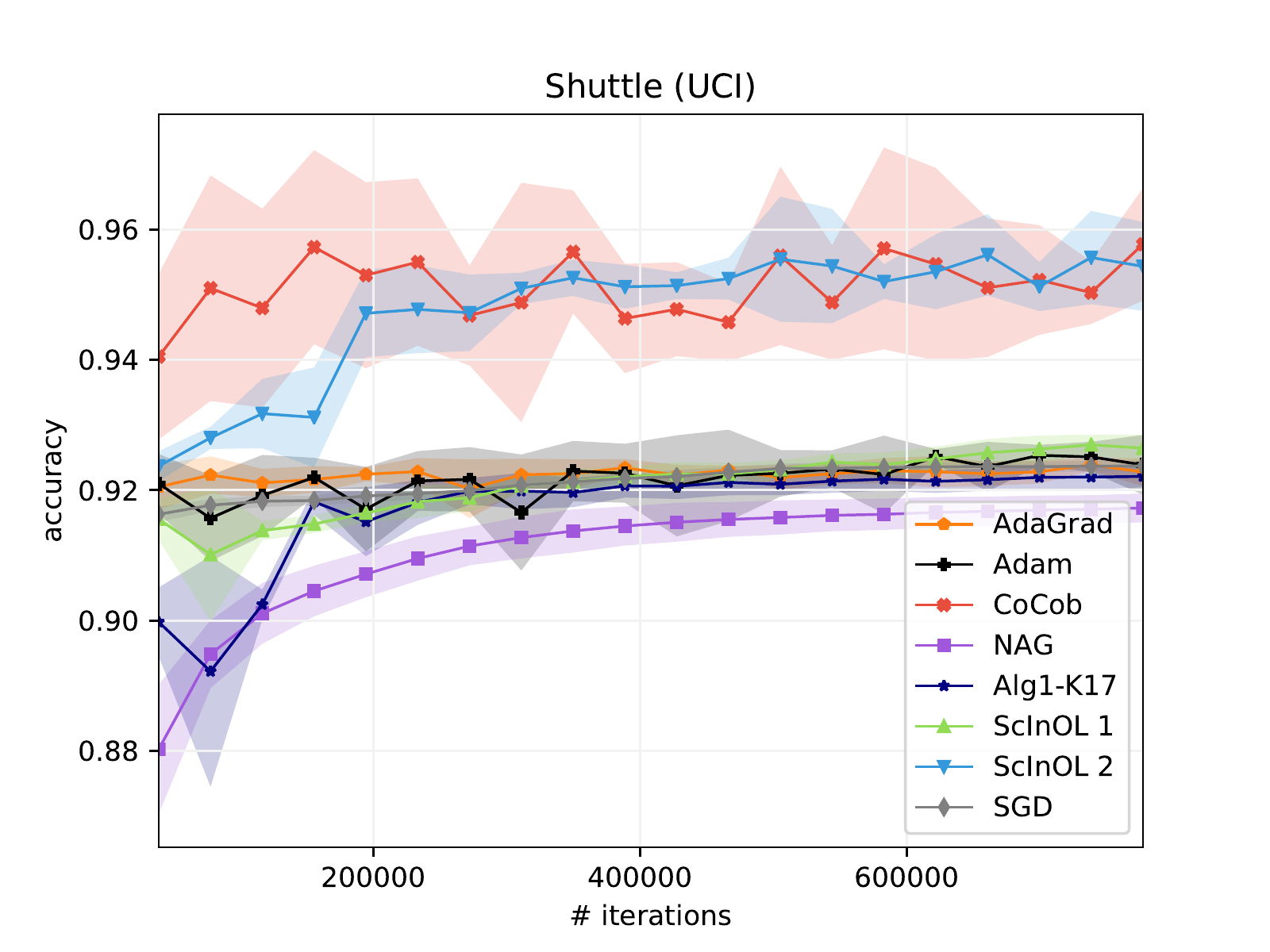} &
                \includegraphics[width=.33\textwidth]{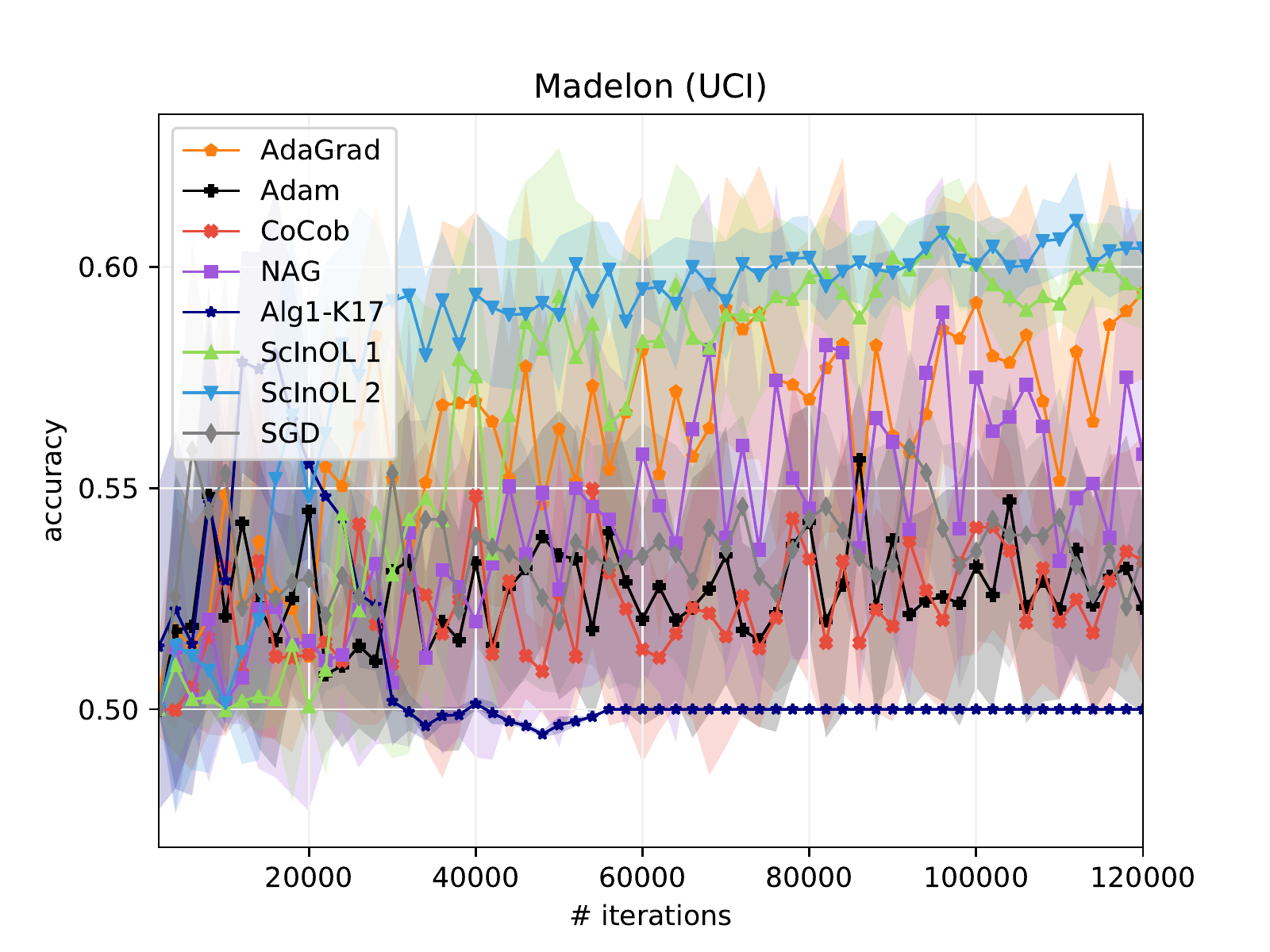}   \\
                \end{tabular}
                \caption{Accuracy results for linear classification experiments.}
            \end{figure*}

\section{Multivariate predictions}
\label{appendix:multivariate_predictions}

For simplicity, in the paper we focus on loss functions defined for real-valued predictions
$\hy \in \mathbb{R}$. Sometimes, however, it is natural to consider a setup of multivariate predictions
$\hby \in \mathbb{R}^K$. For instance, the multinomial logistic loss (cross-entropy loss)
is defined for $y \in \{1,\ldots,K\}$ as:
\[
\ell(y, \hby) = - \sum_{k=1}^K \boldsymbol{1}[y=k] \ln \sigma_k(\hby)
= - \hy_{y} + \ln \left( \sum_{k=1}^K e^{\hy_k} \right),
\]
where $\sigma_k(\hby) = \frac{e^{\hy_k}}{\sum_{j=1}^K e^{\hy_j}}$ is the soft-max transform.

We assume the multivariate losses $\ell_t(\hby) = \ell(y_t, \hby)$ are
convex and $L$-Lipschitz in the sense that the max-norm of 
subgradient $\nabla \ell_t(\hby)$ for any $\hby$ is bounded, 
$\|\nabla \ell_t(\hby)\|_{\infty} \leq L$ (which is satisfied with $L=1$ by the multinomial
logistic loss).
We consider the class of comparators which are parameterized
by $\U \in \mathbb{R}^{d \times K}$, a $d \times K$ 
parameter matrix, and the
regret of the algorithms against $\U$ for a sequence of data $\{(\x_t,y_t)\}_{t=1}^T$
is defined as:
\[
\regret_T(\U) = \sum_{t=1}^T \ell_t(\hby_t) - \sum_{t=1}^T 
\ell_t(\U^\top \x_t).
\]
Consider an algorithm which at trial $t$ predicts with a weight matrix 
$\W_t \in \mathbb{R}^{d \times K}$, $\hby_t = \W_t^\top \x_t$.
Using the convexity of the loss, for any $\hby, \hby'$ and any $t$ we have
$\ell_t (\hby') \geq \ell_t(\hby) + \nabla \ell_t(\hby)^\top (\hby' - \hby)$.
Denoting $\nabla \ell_t(\hby_t)$ by 
$\boldsymbol{g}_t = (g_{t,1},\ldots,g_{t,K})$ with $g_{t,k} \in [-L,L]$
for all $k=1,\ldots,K$,
and using the bound above with $\hby = \hby_t = \W_t^\top \x_t$
and $\hby' = \U^\top \x_t$ we have:
\[
\regret_T(\U) = \sum_{i=1}^d \sum_{k=1}^K 
\left( \sum_{t=1}^T g_{t,k} x_{t,i} (W_{t;i,k} - U_{i,k}) \right).
\]
The regret decouples into a sum over individual coordinates and dimensions of the prediction
vector, and the extension of our algorithms is now straightforward
(see Algorithm \eqref{alg:one_multi} and \eqref{alg:two_multi} below). Also, the analysis
can be carried out in full analogy to the univariate loss case resulting in the following
bounds (for $L=1$):
\begin{theorem}
For any $\U \in \mathbb{R}^{d \times K}$ 
the regret of $\mathrm{ScInOL}_1$ is upper-bounded by:
\[
  \regret_T(\U)
   \leq \sum_{i=1}^d \sum_{k=1}^K \left(2|U_{i,k}| \hat{S}_{T;i,k} \ln (1 + 2|U_{i,k}| 
   \hat{S}_{T;i,k} \epsilon^{-1} T) + \epsilon(1 + \ln T )\right) 
\]
where $\hat{S}_{T;i,k} = \sqrt{S_{T;i,k}^2 + M_{T;i}^2}$.
\end{theorem}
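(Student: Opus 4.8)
The plan is to reduce the multivariate bound to $dK$ independent applications of the univariate analysis already carried out for Theorem~\ref{thm:alg_one}. The starting point is the decoupled regret identity derived above, which expresses $\regret_T(\U)$ as a double sum over coordinates $i=1,\ldots,d$ and output components $k=1,\ldots,K$ of the terms $\sum_{t=1}^T g_{t,k} x_{t,i}(W_{t;i,k}-U_{i,k})$. Since the multivariate algorithm maintains, for every pair $(i,k)$, its own statistics $G_{t;i,k}=-\sum_{j\le t} g_{j,k}x_{j,i}$, $S^2_{t;i,k}=\sum_{j\le t}(g_{j,k}x_{j,i})^2$ and coefficient $\beta_{t;i,k}$, and updates $W_{t;i,k}$ by the same formula as the univariate weight \eqref{eq:ScInOL_1_weights} with $g_{t,k}$ in place of $g_t$, each $(i,k)$ block is literally a one-dimensional copy of $\mathrm{ScInOL}_1$ driven by the scalar gradient coordinate $g_{t,k}$ and the feature $x_{t,i}$. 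The only quantity shared across $k$ for fixed $i$ is the running maximum $M_{t;i}=\max_{j\le t}|x_{j,i}|$, which depends on the feature alone; this is exactly why $\hat S_{T;i,k}=\sqrt{S^2_{T;i,k}+M^2_{T;i}}$ couples the $k$-dependent gradient sum with the $k$-independent feature scale.

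The single fact that makes the univariate argument transfer verbatim is $|g_{t,k}|\le L=1$, which holds because the multivariate loss is assumed $1$-Lipschitz in the max-norm, $\|\nabla\ell_t(\hby)\|_\infty\le 1$. Fixing a pair $(i,k)$ and setting $v=G_{t-1;i,k}/\sqrt{S^2_{t-1;i,k}+M^2_{t;i}}$ and $q=g_{t,k}x_{t,i}/\sqrt{S^2_{t-1;i,k}+M^2_{t;i}}$, I would check $|q|\le |g_{t,k}x_{t,i}|/M_{t;i}\le |x_{t,i}|/\max_{j\le t}|x_{j,i}|\le 1$, so Lemma~\ref{lem:main_lemma_alg_one} applies unchanged. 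Re-running the proof of Lemma~\ref{lem:ScInOL_1_progress} then yields the per-block progress inequality $W_{t;i,k}g_{t,k}x_{t,i}\le \psi_{t-1;i,k}(G_{t-1;i,k})-\psi_{t;i,k}(G_{t;i,k})+\epsilon/t$, with the potential $\psi_{t;i,k}$ defined exactly as before but with the $(i,k)$-indexed statistics; the two steps in that proof that need the feature-max, namely $M_{t;i}\ge M_{t-1;i}$ and $\beta_{t;i,k}q^2\le\epsilon/t$, go through because $M_{t;i}$ is still a running maximum and the $\beta$-update still divides by $x_{t,i}^2 t$.

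Summing the progress inequality over $t$ telescopes (using $\psi_{0;i,k}(G_{0;i,k})=0$) to give $\sum_t W_{t;i,k}g_{t,k}x_{t,i}\le-\psi_{T;i,k}(G_{T;i,k})+\epsilon(1+\ln T)$. Plugging this into the decoupled regret identity and bounding $G_{T;i,k}U_{i,k}-\psi_{T;i,k}(G_{T;i,k})$ by its supremum $\sup_x\{xU_{i,k}-\psi_{T;i,k}(x)\}$, I would invoke Lemma~\ref{lem:conjugate_ScInOL_1} with $\alpha=\beta_{T;i,k}$ and $\gamma=2\hat S_{T;i,k}$ to bound this by $2|U_{i,k}|\hat S_{T;i,k}\ln(1+2|U_{i,k}|\hat S_{T;i,k}/\beta_{T;i,k})$. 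Finally the same induction as in Theorem~\ref{thm:alg_one} shows $\beta_{T;i,k}\ge\epsilon/T$, and summing over all $i$ and $k$ produces the claimed bound. There is no genuinely new obstacle here; the main thing to get right is the bookkeeping of the two index types — making sure that $M$ carries only the feature index $i$ while $G$, $S$ and $\beta$ carry both $i$ and $k$ — and confirming that the max-norm Lipschitz assumption is precisely what supplies $|g_{t,k}|\le1$ for every component simultaneously.
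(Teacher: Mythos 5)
Your proposal is correct and follows exactly the route the paper intends: the paper proves this multivariate bound only by remarking that the regret decouples over the $(i,k)$ pairs and that ``the analysis can be carried out in full analogy to the univariate loss case,'' which is precisely the reduction you carry out, including the two points that actually need checking ($|g_{t,k}|\leq 1$ from the max-norm Lipschitz assumption, and $M_{t;i}$ being shared across $k$ so that $|q|\leq 1$ and the $\beta_{t;i,k}q^2\leq \epsilon/t$ bound still hold). Nothing further is needed.
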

\begin{theorem}
For any $\U \in \mathbb{R}^{d \times K}$ 
the regret of $\mathrm{ScInOL}_2$ is upper-bounded by:
\[
  \regret_T(\U)
  \leq d K \epsilon +
  \sum_{i=1}^d \sum_{k=1}^K 2|U_{i,k}| \hat{S}_{T;i,k} \left( \ln (3|U_{i,k}| 
  \hat{S}^3_{T;i,k} \epsilon^{-1} /
  x_{\tau_i,i}^2) - 1 \right),
\]
    where $\hat{S}_{T;i,k} = \sqrt{S_{T;i,k}^2 + M_{T;i}^2}$
 and
$\tau_i = \min\{t \colon |x_{t,i}| \neq 0\}$.
\end{theorem}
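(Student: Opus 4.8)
The plan is to reduce the multivariate bound to $dK$ independent invocations of the one-dimensional analysis already carried out for $\mathrm{ScInOL}_2$ in Appendix~\ref{appx:alg_two}. First I would invoke the linearized regret decomposition stated just above the theorem, namely
\[
\regret_T(\U) \leq \sum_{i=1}^d \sum_{k=1}^K \sum_{t=1}^T g_{t,k} x_{t,i} (W_{t;i,k} - U_{i,k}),
\]
which follows from convexity of $\ell_t$ together with $|g_{t,k}| \leq 1$. The point is that $\mathrm{ScInOL}_2$ in the multivariate case runs one independent one-dimensional copy for each pair $(i,k)$: the copy indexed by $(i,k)$ sees the scalar input $x_{t,i}$, forms $G_{t;i,k} = -\sum_{j \leq t} g_{j,k} x_{j,i}$ and $S^2_{t;i,k} = \sum_{j\leq t}(g_{j,k}x_{j,i})^2$, and shares only the max statistic $M_{t;i}=\max_{j\le t}|x_{j,i}|$ and the first-nonzero index $\tau_i$ across the $K$ output dimensions. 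Each inner sum is therefore formally identical to the quantity bounded in the univariate proof, with $g_t \mapsto g_{t,k}$ and $u_i \mapsto U_{i,k}$.

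Second, for a fixed $(i,k)$ I would replay the univariate argument verbatim. The only hypothesis of the progress lemma (Lemma~\ref{lem:ScInOL_2_progress}) that needs re-checking is $|q| \leq 1$ for $q = g_{t,k} x_{t,i}/\sqrt{S^2_{t-1;i,k}+M^2_{t;i}}$; this still holds because $|q| \leq |g_{t,k} x_{t,i}|/M_{t;i} \leq |x_{t,i}|/M_{t;i} \leq 1$, using the max-norm Lipschitz bound $|g_{t,k}|\leq 1$ and $M_{t;i}\geq |x_{t,i}|$. Multiplying the per-trial reward-potential inequality over $t=\tau_i,\ldots,T$ with $\eta_{\tau_i-1;i,k}=\epsilon$ and $\psi_{\tau_i-1;i,k}\equiv 1$ gives
\[
\sum_{t=1}^T g_{t,k}x_{t,i}W_{t;i,k} \leq \epsilon - \epsilon\, e^{-\Delta_{T;i,k}-\frac14}\, e^{|G_{T;i,k}|/(2\hat{S}_{T;i,k})},
\]
where $\Delta_{T;i,k}=\sum_{t=\tau_i}^T \delta_{t;i,k}$ and $\delta_{t;i,k}=(g_{t,k}x_{t,i})^2/(2(S^2_{t-1;i,k}+M^2_{t;i}))$, exactly as before. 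Adding $G_{T;i,k}U_{i,k}$, passing to a supremum over a dummy variable, and applying the Fenchel-conjugate estimate of Lemma~\ref{lem:conjugate_ScInOL_2} with $\alpha=\epsilon e^{-\Delta_{T;i,k}-\frac14}$ and $\gamma=2\hat{S}_{T;i,k}$ yields the per-copy bound $\epsilon + 2|U_{i,k}|\hat{S}_{T;i,k}\big(\ln(2\epsilon^{-1}|U_{i,k}|\hat{S}_{T;i,k}e^{\frac14+\Delta_{T;i,k}})-1\big)$.

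Third, I would bound $\Delta_{T;i,k}$. The telescoping argument from the univariate proof carries over unchanged: from $M^2_{t;i}\geq x^2_{t,i}\geq (g_{t,k}x_{t,i})^2 = S^2_{t;i,k}-S^2_{t-1;i,k}$ one obtains $\delta_{t;i,k}\leq \ln\frac{M^2_{t;i}+S^2_{t;i,k}}{M^2_{t;i}+S^2_{t-1;i,k}}$ via $\frac{a-b}{a}\leq\ln\frac{a}{b}$, and summing telescopes to $\Delta_{T;i,k}\leq \ln(\hat{S}^2_{T;i,k}/x^2_{\tau_i,i})$; here the shared $\tau_i$ and the fact that $x_{\tau_i,i}\neq 0$ guarantee a nonzero denominator. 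Substituting $e^{\Delta_{T;i,k}}\leq \hat{S}^2_{T;i,k}/x^2_{\tau_i,i}$ and using $2e^{1/4}\leq 3$ turns the per-copy bound into $\epsilon + 2|U_{i,k}|\hat{S}_{T;i,k}\big(\ln(3|U_{i,k}|\hat{S}^3_{T;i,k}\epsilon^{-1}/x^2_{\tau_i,i})-1\big)$, and summing over all $i,k$ produces the claimed bound with leading term $dK\epsilon$.

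The step I expect to require the most care is the verification that the statistics shared across output dimensions do not invalidate the per-copy analysis: specifically that the single max statistic $M_{t;i}$ and the single first-nonzero index $\tau_i$, each indexed by $i$ only, still satisfy every inequality that in the univariate proof was stated for quantities indexed by a single coordinate. Because $M_{t;i}$ is defined purely from the inputs $x_{j,i}$ (independent of any gradient) and bounds $|x_{t,i}|$, and because Lipschitzness is taken in the max-norm so that each component $|g_{t,k}|\leq 1$, both the $|q|\leq 1$ condition and the telescoping bound on $\Delta_{T;i,k}$ hold for every $k$ simultaneously; this is the one place where one must confirm that the decoupling into $dK$ copies is genuine rather than merely formal.
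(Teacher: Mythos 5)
Your proposal is correct and follows exactly the route the paper intends: the paper itself only asserts that ``the analysis can be carried out in full analogy to the univariate loss case'' after establishing the decomposition $\regret_T(\U) \leq \sum_{i,k}\sum_t g_{t,k}x_{t,i}(W_{t;i,k}-U_{i,k})$, and your replay of Lemmas~\ref{lem:ScInOL_2_progress} and~\ref{lem:conjugate_ScInOL_2} per $(i,k)$ pair, including the checks that the shared statistics $M_{t;i}$ and $\tau_i$ still support $|q|\leq 1$ and the telescoping bound on $\Delta_{T;i,k}$, is precisely the omitted detail.
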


\begin{algorithm2e}%
\DontPrintSemicolon
\SetAlgoNoEnd
\SetKwInOut{Initialization}{Initialization}
\Initialization{$S^2_{0;i,k}, G_{0;i,k}, M_{0;i} \leftarrow 0, \beta_{0;i,k} 
\leftarrow \epsilon \; (i=1,\ldots,d; \; k=1,\ldots,K)$}
 \For{$t=1,\ldots,T$}{
   Receive $\x_t \in \mathbb{R}^d$\;
   \For{$i=1,\ldots,d$}{
     $M_{t;i} \leftarrow \max \{M_{t-1;i}, |x_{t,i}|\}$\;
     \For{$k=1,\ldots,K$}{
     $\beta_{t;i,k} \leftarrow \min\{\beta_{t-1;i,k}, \epsilon (S^2_{t-1;i,k} + M_{t;i}^2)/(x^2_{t,i} t)\}$ \;
     $W_{t;i,k} = \frac{\beta_{t;i,k} \mathrm{sgn}(\theta_{t;i,k})}{2\sqrt{S_{t-1;i,k}^2 + M_{t;i}^2}}
        \Big(e^{|\theta_{t;i,k}|/2} - 1 \Big)$, \qquad where
   $\theta_{t;i,k} = \frac{G_{t-1;i,k}}{\sqrt{S^2_{t-1;i,k} + M^2_{t;i}}}$}
   }
   Predict with $\hby_t = \W_t^\top \x_t$, receive loss $\ell_t(\hby_t)$ and compute 
   $\boldsymbol{g}_t = \nabla_{\hby_t} \ell_t(\hby_t)$\;
   \For{$i=1,\ldots,d$}{
      \For{$k=1,\ldots,K$} {
     $G_{t;i,k} \leftarrow G_{t-1;i,k} - g_{t,k} x_{t,i}$ \;
     $S^2_{t;i,k} \leftarrow S^2_{t-1;i,k} + (g_{t,k} x_{t,i})^2$\;
   }}
 }
\caption{$\text{ScInOL}_1 (\epsilon)$ for multivariate losses}%
\label{alg:one_multi}
\end{algorithm2e}%

\begin{algorithm2e}%
\DontPrintSemicolon
\SetAlgoNoEnd
\SetKwInOut{Initialization}{Initialization}
\Initialization{$S^2_{0;i,k}, G_{0;i,k}, M_{0;i} \leftarrow 0, \eta_{0;i,k} 
\leftarrow \epsilon \; (i=1,\ldots,d; \; k=1,\ldots,K)$}
 \For{$t=1,\ldots,T$}{
   Receive $\x_t \in \mathbb{R}^d$\;
   \For{$i=1,\ldots,d$}{
     $M_{t;i} \leftarrow \max \{M_{t-1;i}, |x_{t,i}|\}$\;
     \For{$k=1,\ldots,K$}{
 $W_{t;i,k} = \frac{\mathrm{sgn}(\theta_{t;i,k}) \min\{|\theta_{t;i,k}|,1\}}{2 \sqrt{S_{t-1;i,k}^2 + M_{t;i}^2}} \eta_{t-1;i,k}$,
     \qquad where $\theta_{t;i,k} = \frac{G_{t-1;i,k}}{\sqrt{S^2_{t-1;i,k} + M^2_{t;i}}}$}
     }
   Predict with $\hby_t = \W_t^\top \x_t$, receive loss $\ell_t(\hby_t)$ and compute 
   $\boldsymbol{g}_t = \nabla_{\hby_t} \ell_t(\hby_t)$\;
   \For{$i=1,\ldots,d$}{
     \For{$k=1,\ldots,K$} {
     $G_{t;i,k} \leftarrow G_{t-1;i,k} - g_{t,k} x_{t,i}$ \;
     $S^2_{t;i,k} \leftarrow S^2_{t-1;i,k} + (g_{t,k} x_{t,i})^2$\;
     $\eta_{t;i,k} \leftarrow \eta_{t-1;i,k} - g_{t,k} x_{t,i} w_{t,i,k}$
   }}
 }
\caption{$\text{ScInOL}_2 (\epsilon)$ for multivariate losses}%
\label{alg:two_multi}
\end{algorithm2e}%

\end{document}